\pdfoutput=1

\documentclass[
 twocolumn,
 5p,
 12pt,%
]{elsarticle}

\usepackage{amssymb}
\usepackage{gensymb}

\usepackage{amsmath}

\usepackage{amsthm}

\usepackage{xspace}

\usepackage{listings}
\lstset{%
  language=C++, basicstyle=\ttfamily, breaklines}

\lstdefinestyle{block}{language=C++,%
  basicstyle=\ttfamily\scriptsize,%
  tabsize=2,%
  breaklines,%
  }

\lstnewenvironment{cpp}[1][]{%
\lstset{style=block,#1}}{}


\allowdisplaybreaks


\usepackage{tikz}
\usetikzlibrary{arrows}

\iffalse
\usetikzlibrary{calc}
\usetikzlibrary{chains}
\usetikzlibrary{decorations}
\usetikzlibrary{matrix}
\usetikzlibrary{positioning}
\usetikzlibrary{scopes}

\usetikzlibrary{calc,fadings,decorations.pathreplacing}
\usepackage{tikz-3dplot}

\pgfrealjobname{paper}

\newcommand\inputTikZ[1]{
\beginpgfgraphicnamed{#1}
\input{#1.tikz}
\endpgfgraphicnamed
}

\else

  \newcommand\inputTikZ[1]{
    \includegraphics{#1}
  }

\fi

\usepackage{ifpdf}
\ifpdf
  \usepackage{graphicx}

  \usepackage[pdftex]{hyperref}

\else
  \usepackage{graphicx}
  \usepackage{bmpsize}
  \DeclareGraphicsRule{.jpg}{bmp}{.bb}{}
  \DeclareGraphicsRule{.pdf}{bmp}{.bb}{}
  \DeclareGraphicsExtensions{.jpg,.jpeg,.png,.gif,.eps,.ps,.eps.gz,.ps.gz,.eps.Z,.pdf}
  
  \usepackage[hypertex]{hyperref}

\fi

\graphicspath{{rss_experiments/}{ukf-plots/}{tikz-figures/}}

\iffalse
\usepackage{pstool}
\else
\newcommand\psfragfig[2][]{
  \includegraphics[#1]{#2}
}
\fi

\theoremstyle{plain}

\newtheorem{lem}{Lemma} 
\newtheorem{corl}{Corollary}

\theoremstyle{definition}

\newtheorem{dfn}{Definition}

\newcommand*\R{\ensuremath{\mathbb{R}}} 
\newcommand*\Z{\ensuremath{\mathbb{Z}}} 
\newcommand*\PP{\ensuremath{\mathbb{P}}} 

\renewcommand*\d{\mathrm{d}}
\newcommand*\dx{\mathrm{d}x}

\DeclareMathOperator\id{id} 
\DeclareMathOperator\I{I}   
\newcommand*\trans[1][]{^{#1\top}} 
\newcommand*\inv{^{-1}}

\newcommand*\norm[1]{\left\| #1 \right\|} 
\newcommand*\abs[1]{\left| #1 \right|} 
\newcommand*\floor[1]{\left\lfloor #1 \right\rfloor}

\newcommand*\outerProd[1]{#1#1\trans}

\DeclareMathOperator\E{\operatorname{E}} 
\DeclareMathOperator\Cov{\operatorname{Cov}} 
\DeclareMathOperator\cor{\operatorname{cor}} 
\let\P\relax
\DeclareMathOperator\P{\operatorname{P}} 
\DeclareMathOperator\p{\operatorname{p}} 
\DeclareMathOperator\Nd{\mathcal{N}} 
\newcommand*\Ndp[2]{
\Nd\!\left(#1,\,#2\right)
}

\newcommand\Nde[2]{\expp{-\tfrac12\norm{#1}^2_{#2}}}

\newcommand\F{\mathcal{F}}

\newcommand*\expp[1]{\exp\!\left(#1\right)}
\newcommand*\imp{\Rightarrow} 
\newcommand*\eqv{\Leftrightarrow} 
\newcommand*\eps{\varepsilon} 
\let\phi\varphi 
\newcommand*\tX{\tilde X}
\newcommand*\teps{\tilde\eps}

\DeclareMathOperator\atan{atan}
\DeclareMathOperator\atantwo{atan2}
\DeclareMathOperator\sinc{sinc}

\newcommand\olog{\operatorname{\overline{\log}}}

\let\arccos\acos

\newcommand{\argmin}{\operatornamewithlimits{argmin}}
 
\newcommand*\setwith[2]{\left\{#1\mid #2\right\}}
\newcommand*\setminusset[1]{\setminus\left\{#1\right\}}

\newcommand*\matrxs[1]{\left[\begin{smallmatrix} #1 \end{smallmatrix}\right]}

\DeclareRobustCommand*\mplus{\texorpdfstring{\ensuremath{\mathbin{\text{\raisebox{-.2ex}{$\boxplus$}}}}}{[+]}\xspace}
\DeclareRobustCommand*\mmnus{\ensuremath{\mathbin{\text{\raisebox{-.2ex}{$\boxminus$}}}}\xspace}

\newcommand*\SO[1]{\texorpdfstring{\ensuremath{SO(#1)}}{SO(#1)}}
\newcommand*\SE[1]{\texorpdfstring{\ensuremath{SE(#1)}}{SE(#1)}}
\let\S\relax 
\newcommand*\S{\mathcal{S}}
\newcommand*\M{\mathcal{M}}

\newcommand*\Rn{\texorpdfstring{$\R^n$}{R\textasciicircum n}\xspace}

\newcommand\tquat[2]{\bigl[\!
\begin{smallmatrix} #1 \\ #2 
\end{smallmatrix}\!\bigr]}

\newcommand\dquat[2]{\left[
\begin{matrix} #1 \\ #2 
\end{matrix}\right]}

\newcommand\quat[2]{
\mathchoice{\dquat{#1}{#2}}
           {\tquat{#1}{#2}}
           {\tquat{#1}{#2}}
           {\tquat{#1}{#2}}
}

\newcommand\xfrac[2]{%
\hbox{\raisebox{.9ex}{\ensuremath{#1}}%
{\raisebox{-.1ex}{\Large /}}%
\raisebox{-.5ex}{\ensuremath{#2}}%
}%
}

\newcommand*\cf{cf.\@\xspace}
\newcommand*\ie{i.e.\@\xspace}
\newcommand*\eg{e.g.\@\xspace}

\newcommand*\bpm{\mplus-method\xspace}
\newcommand*\mpm{\mplus/\mmnus}


\newcommand{\meanofsigmap}{\ensuremath{\mbox{\sc MeanOfSigmaPoints}}\xspace}

\newcommand{\sigp}[2]{\ensuremath{\mathcal{#1}^{[#2]}}}

\newcommand*{\sip}{\emph{sigma point}\xspace}
\newcommand*{\sips}{\emph{sigma points}\xspace}
\newcommand*{\ut}{\emph{unscented transform}\xspace}

\newcommand{\into}{\hookrightarrow}

\newcommand\normF[1]{\norm{#1}_{\mathrm{F}}}

\newcommand\defi[1]{\emph{#1}}

\newcommand\trace{\operatorname{tr}}

\makeatletter
\newcommand*\stckrel[3]{%
  \setboxz@h{$\displaystyle\mathop{#3}$}
  \dimen@-\wd\z@
  \global\setboxz@h{%
  $\displaystyle%
  \mathop{#3}\limits^{#1}_{#2}$}
  \advance\dimen@\wd\z@
  \global\divide\dimen@ 2%
  \hskip\dimen@&\hskip-\dimen@%
  \mathrel{\box\z@}%
}%
\makeatother

\usepackage[outerbars, color]{changebar}
\cbcolor{red}
\iffalse
  \newenvironment{modified}[1][]{\cbstart}{\cbend}
\else
  \newenvironment{modified}[1][]{}{}
\fi

\makeatletter
\def\ps@pprintTitle{%
     \let\@oddhead\@empty
     \let\@evenhead\@empty
     \let\@oddfoot\@empty
     \let\@evenfoot\@empty}
\makeatother

\begin{document}

\begin{frontmatter}

\title{Integrating Generic Sensor Fusion Algorithms with Sound~State~Representations through Encapsulation of Manifolds
}


\author[SFB,UNI]{Christoph Hertzberg%
}
\ead{chtz@informatik.uni-bremen.de 
}
\author[DFKI,UNI]{Ren\'e Wagner}
\author[SFB,UNI,DFKI]{Udo Frese}
\author[DFKI,UNI]{Lutz Schr\"oder}

\address[SFB]{%
SFB/TR~8 -- Spatial Cognition. Reasoning, Action, Interaction 
}

\address[UNI]{%
Fachbereich~3 -- Mathematik und Informatik, 
Universit\"at  Bremen, Postfach 330~440, 28334~Bremen, Germany}

\address[DFKI]{%
Deutsches Forschungszentrum f\"ur K\"unstliche Intelligenz (DFKI), 
Sichere Kognitive Systeme,\break 
Enrique-Schmidt-Str.~5, 28359~Bremen, Germany}

\begin{abstract}
  Common estimation algorithms, such as least squares estimation or
  the Kalman filter, operate on a state in a state space $\S$ that is
  represented as a real-valued vector. However, for many quantities,
  most notably orientations in 3D, $\S$ is not a vector space, but a
  so-called manifold, \ie it behaves like a vector space locally but
  has a more complex global topological structure. For integrating
  these quantities, several ad-hoc approaches have been proposed.

  Here, we present a principled solution to this problem where the
  structure of the manifold $\S$ is encapsulated by two operators,
  state displacement $\mplus:\S\times\R^n\to\S$ and its inverse
  $\mmnus:\S\times{}\S\to\R^n$. These operators provide a local
  vector-space view $\delta\mapsto{}x\mplus\delta$ around a given
  state $x$. Generic estimation algorithms can then work on the
  manifold $\S$ mainly by replacing $+/-$ with \mpm where appropriate.
  We analyze these operators axiomatically, and demonstrate their use
  in least-squares estimation and the Unscented Kalman
  Filter. Moreover, we exploit the idea of encapsulation from a
  software engineering perspective in the \emph{Manifold Toolkit},
  where the \mpm operators mediate between a ``flat-vector'' view for
  the generic algorithm and a ``named-members'' view for the problem
  specific functions.
\end{abstract}

\begin{keyword}
estimation \sep least squares \sep Unscented Kalman Filter \sep manifold \sep 3D orientation \sep boxplus-method \sep Manifold Toolkit


\MSC[2010] 93E10 \sep 93E24 \sep 5704
\end{keyword}

\end{frontmatter}


\section{Introduction}

Sensor fusion is the process of combining information
obtained from a variety of different sensors into a joint belief over
the system state.  In the design of a sensor fusion system, a key
engineering task lies in finding a state representation that (a)
adequately describes the relevant aspects of reality and is (b)
compatible with the sensor fusion algorithm in the sense that the
latter yields meaningful or even optimal results when operating on the
state representation.
\begin{figure}[tb!]
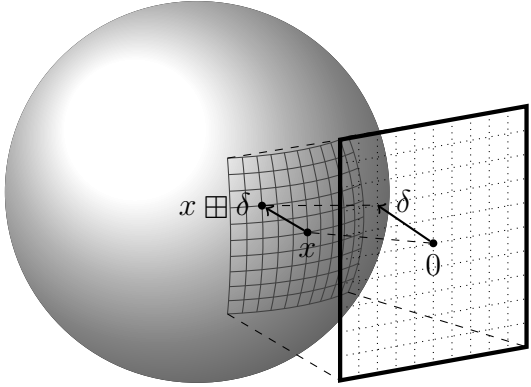

\centering
\begin{modified} 
  \inputTikZ{figure-sphere}
\end{modified}
  \caption{Mapping a local neighborhood in the state space (here: on
    the unit sphere $S^2$) into $\R^n$ (here: the plane) allows for
    the use of standard sensor fusion algorithms without explicitly
    encoding the global topological structure.}
  \label{fig:sphere}
\end{figure}

Satisfying both these goals at the same time has been a long-standing
challenge.  Standard sensor fusion algorithms typically operate on
real valued vector state representations ($\R^n$) while mathematically
sound representations often form more complex, non-Euclidean
topological spaces.  A very common example of this comes up, \eg
within the context of inertial navigation systems (INS) where a part
of the state space is \SO3, the group of orientations in $\R^3$.  To
estimate variables in \SO3, there are generally two different
approaches.  The first uses a parameterization of minimal dimension,
\ie with three parameters (\eg Euler angles), and operates on the
parameters like on $\R^3$.  This parameterization has singularities,
\ie a situation analogous to the well-known \emph{gimbal lock} problem
in gimbaled INS \cite{Grewal01} can occur where very large changes in
the parameterization are required to represent small changes in the
state space.  Workarounds for this exist that try to avoid these parts
of the state space, as was most prominently done in the guidance system
of the Apollo Lunar Module \cite{Klumpp1974}, or switch between
alternative orderings of the parameterization each of which exhibit
singularities in different areas of the state space.  The second
alternative is to overparameterize states with a non-minimal
representation such as unit quaternions or rotation matrices which are
treated as $\R^4$ or $\R^{3\times 3}$ respectively and re-normalized
as needed \cite{Wheeler95iterativeestimation, EKFQuat}.  This has
other disadvantages such as redundant parameters or degenerated, non-normalized
variables.

Both approaches require representation-specific modifications of the
sensor fusion algorithm and tightly couple the state representation
and the sensor fusion algorithm, which is then no longer a generic
black box but needs to be adjusted for every new state representation.

Our approach is based on the observation that sensor fusion algorithms
employ operations which are inherently local, \ie they compare and
modify state variables in a local neighborhood around some reference.
We thus arrive at a generic solution that bridges the gap between the
two goals above by viewing the state space as a manifold.  Informally
speaking, every point of a manifold has a neighborhood that can be
mapped bi-directionally to $\R^n$.  This enables us to use an
arbitrary manifold $\S$ as the state representation while the sensor
fusion algorithm only sees a locally mapped part of $\S$ in $\R^n$ at
any point in time.  For the unit sphere $S^2$ this is illustrated in
figure~\ref{fig:sphere}.

We propose to implement the mapping by means of two encapsulation
operators $\mplus$ (``boxplus'') and $\mmnus$ (``boxminus'') where
\begin{alignat}2
\mplus &: \S\times\R^n &&\to \S, \\
\mmnus &: \S\times \S  &&\to\R^n. 
\end{alignat}
Here, $\mplus$ takes a manifold state and a small change expressed in
the mapped local neighborhood in $\R^n$ and applies this change to
the state to yield a new, modified state. Conversely, $\mmnus$
determines the mapped difference between two states.

The encapsulation operators capture an important duality: The generic
sensor fusion algorithm uses $\mmnus$ and $\mplus$ in place of the
corresponding vector operations $-$ and $+$ to compare and to modify
states, respectively -- based on flattened perturbation vectors -- and
otherwise treats the state space as a black box.
Problem-specific code such as measurement models, on the other
hand, can work inside the black box, and use the most natural
representation for the state space at hand. The operators $\mplus$ and
$\mmnus$ translate between these alternative views.  We will later
show how this can be modeled in an implementation framework such that
manifold representations (with matching operators) even of very
sophisticated compound states can be generated automatically from a
set of manifold primitives (in our C++ implementation currently
$\R^n$, \SO2, \SO3, and $S^2$).

This paper extends course material \cite{tdsf06} and several master's
theses~\cite{Kurlbaum2007, Birbach2008, Hertzberg2008, Wagner2010}.
\begin{modified}{}
It starts with a discussion of related work in
Section~\ref{sec:related}.  Section~\ref{sec:BoxplusMethod} then
introduces the \mplus-method and 3D orientations as the most important application, and Section~\ref{sec:application} lays
out how least squares optimization and Kalman filtering can be
modified to operate on these so-called \mplus-manifolds.  
Section~\ref{sec:software} introduces the
aforementioned software toolkit, and Section~\ref{sec:experiments}
shows practical experiments. Finally, the appendices prove the
properties of \mplus-manifolds claimed in Section~\ref{sec:BoxplusMethod}
and give \mplus-manifold representations of the most 
relevant manifolds $\R/2\pi\Z$, $SO(n)$, $S^n$ and $\PP^n$ along with proofs.
\end{modified}


\section{Related Work}\label{sec:related}

\subsection{Ad-hoc Solutions}

 Several ad-hoc methods are available to integrate manifolds
into estimation algorithms working on $\R^n$~\cite{Schmidt01,Triggs00}. All of them have some
drawbacks, as we now discuss using 3D orientations as a running
example.

The most common workaround is to use a minimal parameterization (\eg
Euler angles)~\cite{Schmidt01} \cite[p.\,6]{Triggs00} and place the singularities in some part of the
workspace that is not used (\eg facing $90^\circ$ upwards). This,
however, creates an unnecessary constraint between the application and
the representation leading to a failure mode that is easily forgotten.
If it is detected, it requires a recovery strategy, in the worst case
manual intervention as in the aforementioned Apollo mission
\cite{Klumpp1974}.

Alternatively, one can switch between several minimal parameterizations
with different singularities. This works but is more complicated.

For overparameterizations (\eg quaternions), a normalization
constraint must be maintained (unit length), \eg by normalizing after
each step~\cite{Wheeler95iterativeestimation, EKFQuat}. The step
itself does not know about the constraint, and tries to improve the
fit by violating the constraint, which is then undone by normalization.
This kind of counteracting updates is inelegant and at least slows down
convergence. Lagrange multipliers could be used to enforce the constraint
exactly~\cite[p.40]{Triggs00} but lead to a more difficult equation system.
Alternatively, one could add the normalization constraint as a
``measurement'' but this is only an approximation, since it would need
to have uncertainty zero~\cite[p.40]{Triggs00}.

Instead, one could allow non-normalized states and apply a
normalization function every time before using them
\cite{LaViola03acomparison,Triggs00}. Then the normalization degree of freedom is redundant having no effect
on the cost function and the algorithm does not try to change it.
Some
algorithms can handle this (\eg
Levenberg-Marquardt~\cite[Chap.\@ 15]{Press92}) but many (\eg
Gauss-Newton~\cite[Chap.\@ 15]{Press92}) fail due to singular equations. This problem
can be solved by adding
the normalization constraint as a ``measurement'' with some arbitrary uncertainty.~\cite[p.40]{Triggs00}. 
Still, the estimated state has more
dimensions than necessary and computation time is
increased.

\subsection{Reference State with Perturbations}

Yet another way is to set a reference state and work relative to it
with a minimal parameterization~\cite[p.6]{Schmidt01,Triggs00}.  Whenever the parameterization
becomes too large, the reference system is changed accordingly
\cite{AbbeelThesis,Castellanos99}. This is practically similar to the
\bpm, where in $s\mplus\delta$ the reference state is $s$ and $\delta$
is the minimal parameterization. 
\begin{modified}
In particular \cite{Castellanos99} has
triggered our investigation, so we compare to their SPmap in detail in App.\@~\ref{sec:SPmap}.
The vague idea of applying perturbations in a way more specific than by simply adding
has been around in the literature for some time:
\end{modified}
\begin{quote}
   \emph{``We write this operation [the state update] as $x\rightarrow{}x+\delta{}x$, even though
   it may involve considerable more than vector addition.''}
   \hspace{0.5cm} W. Triggs~\cite[p.\,7]{Triggs00}
\end{quote}
\begin{modified}
Very recently, Strasdat et al.\@~\cite[Sec.\@~II.C]{strasdatrss2010} have summarized
this technique for $\SO3$ under the label Lie group/algebra representation.
This means the state is viewed as an element of the Lie group $\SO3$, i.e.\@ an orthonormal matrix or quaternion
but every step of the estimation algorithm operates in the Lie group's
tangential space, i.e.\@ the Lie algebra. The Lie's group's exponential 
maps a perturbation $\delta$ from the Lie algebra to the Lie group.
For Lie groups this is equivalent to our approach, with $s\mplus\delta=s\cdot\exp\delta$  (Sec.\@~\ref{sec:liegroups}).

Our contribution here compared to \cite{Castellanos99,Triggs00,strasdatrss2010} is to embed
this idea into a mathematical and algorithmic framework by means of an explicit axiomatization,
where the operators $\mplus$ and $\mmnus$ make it easy to adapt
algorithms operating on \Rn. Moreover, our framework is 
more generic, being applicable also 
to manifolds which fail to be
Lie groups, such as $S^2$.
\end{modified}

In summary, the discussion shows the need for a principled solution
that avoids singularities and needs neither normalization constraints
nor redundant degrees of freedom in the estimation algorithm.

\subsection{Quaternion-Based Unscented Kalman Filter}
\label{sec:ukfrelated}

\begin{modified}
A reasonable ad-hoc approach for handling quaternions in an EKF/UKF is to
normalize, updating the covariance with the Jacobian of the normalization
as in the EKF's dynamic step~\cite{davisoniccv2003}. 
This is problematic, because it makes 
the covariance singular. Still the EKF operates as the innovation
covariance remains positive definite. Nevertheless, it is unknown if
this phenomen causes problems, e.g.\@ the zero-uncertainty degree of freedom
could create overconfidence in another degree of freedom after a non-linear update.

With a similar motivation, van der Merwe~\cite{Merwe2004SPKF}
included a dynamic, $\dot q =\eta (1-|q|^2)q$ that drives
the quaternion $q$ towards normalization. In this case the measurement
update can still violate normalization, but the violation decays
over time.
\end{modified}

Kraft~\cite{Kraft2003} and Sipos~\cite{Sipos2008} propose a
method of handling quaternions in the state space of an Unscented
Kalman Filter~(UKF). Basically, they modify the \ut to work with
their specific quaternion state representation using a special
operation for adding a perturbation to a quaternion and for taking the
difference of two quaternions.

Our approach is related to Kraft's work in so far as we perform the
same computations for states containing quaternions.  It is more
general in that we work in a mathematical framework where the special
operations are encapsulated in the \mpm-operators.  This allows us to
handle not just quaternions, but general manifolds as representations
of both states and measurements, and to routinely adapt estimation
algorithms to manifolds without sacrificing their genericity.

\subsection{Distributions on Unit Spheres}

 The above-mentioned methods as well as the present work
treat manifolds locally as $\R^n$ and take care to accumulate small
steps in a way that respects the global manifold structure. This is an
approximation as most distributions, \eg Gaussians, are strictly
speaking not local but extend to infinity. In view of this problem,
several generalizations of normal distributions have been proposed
that are genuinely defined on a unit sphere.

The von Mises-Fisher distribution on $S^{n-1}$ was proposed by Fisher
\cite{Fisher1953} and is given by
\begin{equation}
\p(x) = C_n(\kappa)\expp{\kappa\mu\trans x}, 
\end{equation}
with $\mu, x\in S^{n-1}\subset\R^n$, $\kappa>0$ and a normalization constant
$C_n(\kappa)$. $\mu$ is called the \emph{mean direction} and $\kappa$ the
\emph{concentration parameter}.

For the unit-circle $S^1$ this reduces to the von Mises distribution
\begin{equation}
\p(x) = C_2(\kappa)\expp{\kappa\cos(x-\mu)},
\end{equation}
with $x,\mu\in[-\pi,\pi)$.

The von Mises-Fisher distribution locally looks like a normal distribution
$\Ndp{0}{\frac1\kappa\I_{n-1}}$ (viewed in the tangential space), 
where $\I_{n-1}$ is the $(n-1)$-dimensional unit matrix.
As only a single parameter exists to model the covariance, only isotropic
distributions can be modeled, \ie contours of constant probability are always
circular. Especially for posterior distributions arising in sensor fusion
this is usually not the case.

To solve this, \ie to represent general multivariate normal
distributions, Kent proposed the Fisher-Bingham distribution on the
sphere \cite{kent1982}. It is defined as:
\begin{equation}
\nonumber \p(x)=\tfrac{1}{c(\kappa,\beta)}\expp{\kappa\gamma_1\trans x
+ \beta[(\gamma_2\trans x)^2-(\gamma_3\trans x)^2]}.
\end{equation}
It requires $\gamma_i$ to be a system of orthogonal unit vectors, with
$\gamma_1$ denoting the mean direction (as $\mu$ in the von Mises-Fisher case),
$\gamma_2$ and $\gamma_3$ describing the semi-major and semi-minor axis of
the covariance. $\kappa$ and $\beta$ describe the concentration and
eccentricity of the covariance.

Though being mathematically more profound, these distributions have
the big disadvantage that they can not be easily combined with other
normal distributions. Presumably, a completely new estimation
algorithm would be needed to compensate for this, whereas our approach
allows us to adapt established algorithms.


 \section{The \mplus-Method}
\label{sec:BoxplusMethod}

In this paper, we propose a method, the \bpm, which integrates generic
sensor fusion algorithms with sophisticated state representations by
encapsulating the state representation structure in an operator \mplus.

In the context of sensor fusion, a \emph{state} refers to those
aspects of reality that are to be estimated. The state comprises the
quantities needed as output for a specific application, such as the
guidance of an aircraft.  Often, additional quantities are needed in
the state in order to model the behavior of the system by mathematical
equations, \eg velocity as the change in position.
\begin{figure}
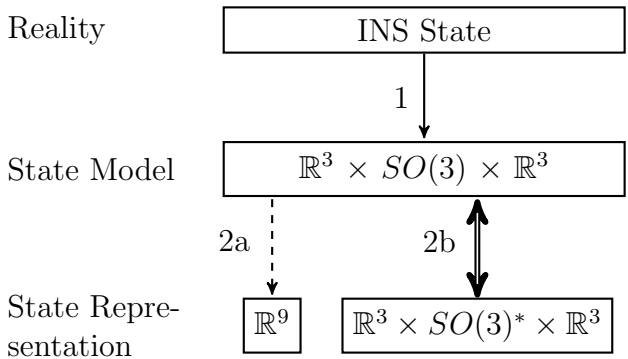

  \centering
  \inputTikZ{figure-reality-to-state-rep}
  \caption{From \emph{reality} to \emph{state representation} in the
    INS example (see text for details).  Step 1 models relevant
    aspects of reality in the \emph{state model}.  Step 2a uses the
    standard approach to obtain a \emph{lossy state representation}.
    Step 2b uses the \bpm to obtain a \emph{state representation
      preserving the topological structure} of the state
    space (using a faithful representation $\SO3^*$ of $\SO3$).} 
  \label{fig:reality-to-state-rep}
\end{figure}

The first step in the design of a sensor fusion system (\cf
Figure~\ref{fig:reality-to-state-rep}) is to come up with a state
model based on abstract mathematical concepts such as position or
orientation in three-dimensional space.  In the aircraft example, the
INS state would consist of three components: the position ($\R^3$),
orientation (\SO3), and velocity ($\R^3$), each in some Earth-centric
coordinate system, \ie
\begin{equation}
\S = \R^3\times\SO3\times\R^3.
\end{equation}
%

\subsection{The Standard Approach to State Representation in Generic Sensor Fusion Algorithms}

The second step (\cf Figure~\ref{fig:reality-to-state-rep}) is to turn
the abstract state model into a concrete representation which is
suitable for the sensor fusion algorithm to work with.  Standard
generic sensor fusion algorithms require the state representation to
be $\R^n$.  Thus, one needs to translate the state model into an
$\R^n$ representation.  In the INS example, choosing Euler angles as
the representation of \SO3 we obtain the following translation:
\begin{equation}
\R^3\times\SO3\times\R^3 \to \R^9
\end{equation}
This translation step \emph{loses information} in two ways: Firstly
and most importantly, it forgets the mathematical structure of the
state space -- as discussed in the introduction, the Euler-angle
representation of three-dimensional orientations has singularities and
there are multiple parameterizations which correspond to the same
orientation.  This is particularly problematic, because the generic
sensor fusion algorithm will exhibit erratic behavior near the
singularities of this representation. If, instead a non-minimal
state representation such as quaternions or rotation matrices 
was used, the translation loses no information. However, later
a generic algorithm would give inconsistent results, 
not knowing about the unit and orthonormality constraints,
unless it was modified in a representation-specific way accordingly.

The second issue is that states are now treated as flat vectors with
the only differentiation between different components being the
respective index in a vector.  All entries in the vector look the
same, and for each original component of a state (\eg position,
orientation, velocity) one needs to keep track of the chosen
representation and the correct index. This may seem like a mere
book-keeping issue but in practice tends to be a particularly
cumbersome and error-prone task when implementing process or
measurement models that need to know about these
representation-specifics.

\subsection{Manifolds as State Representations}

We propose to use manifolds as a central tool to solve both issues.
In this section, we regard a manifold as a black box encapsulating a
certain (topological) structure.  In practice it will be a
\emph{subset of $\R^s$ subject to constraints} such as the
orthonormality of a $3 \times 3$ rotation matrix that can be used as a
representation of \SO3.  

As for the representation of states, in the simple case where a state
consists of a single component only, \eg a three-dimensional
orientation (\SO3), we can represent a state as a single manifold.  If
a state consists of multiple different components we can also
represent it as a manifold since the Cartesian product (Section~\ref{sec:form:cartesian-product}) of the
manifolds representing each individual component yields another,
\emph{compound manifold}.  Essentially, we can build sophisticated
compound manifolds starting with a set of \emph{manifold primitives}.
As we will discuss in Section~\ref{sec:software}, this mechanism can
be used as the basis of an object-oriented software toolkit which
automatically generates compound manifold classes from a concise
specification.

\subsection{\mplus-Manifolds}\label{sec:boxplusmethod:generic}
\begin{modified}

The second important property of manifolds is that they are locally
homeomorphic to $\R^n$, \ie informally speaking, we can establish a
bi-directional mapping from a local neighborhood in an $n$-manifold to
$\R^n$.  The \bpm uses two encapsulation operators $\mplus$
(``boxplus'') and $\mmnus$ (``boxminus'') to implement this mapping:
\begin{alignat}2
\mplus_\S &: \S\times\R^n &&\to \S, \\
\mmnus_\S &: \S\times \S  &&\to\R^n. 
\end{alignat}
When clear from the context, the subscript ${}_\S$ is omitted.  The
operation $y = x \mplus \delta$ adds a small perturbation expressed as
a vector $\delta \in \R^n$ to the state $x \in \S$.  Conversely,
$\delta = y \mmnus x$ determines the perturbation vector $\delta$
which yields $y$ when $\mplus$-added to $x$.  Axiomatically, this is
captured by the definition below. We will discuss its properties here intuitively,
formal proofs are given in Appendix \ref{sec:formalization}.
\begin{dfn}[\mplus-Manifold]\label{dfn:boxplusmanifold}
  A \defi{\mplus-manifold} is a quadruple $(\S,\mplus,\mmnus,V)$
  (usually referred to as just $\S$), consisting of a subset
  $\S\subset\R^s$, operators
\begin{alignat}2
\mplus &: \S\times\R^n &&\to \S, \\
\mmnus &: \S\times \S  &&\to\R^n,
\end{alignat}
and an open neighborhood $V\subset\R^n$ of $0$. These data are subject
to the following requirements. To begin, $\delta\mapsto x\mplus\delta$
must be smooth on $\R^n$, and for all $x\in\S$, $y\mapsto y\mmnus x$
must be smooth on $U_x$, where $U_x=x\mplus V$. Moreover, we impose
the following axioms to hold for every $x\in\S$:
\begin{subequations} \label{eq:frm:axioms}
\begin{alignat}{4}
&\quad&
  x &\mplus 0     &&          &&= x   \label{ax:zero}\\*
\forall y\in\S: &&
  x &\mplus    (y &&\mmnus x) &&= y   \label{ax:surj} \\*
\forall\delta\in V: &&
 (x &\mplus\delta)&&\mmnus x  &&= \delta \label{ax:inj}
\end{alignat}
\begin{equation}
\label{ax:triangle}
\forall\delta_1,\delta_2\in\R^n:
\norm{(x\mplus\delta_1)\mmnus(x\mplus\delta_2)}
 \le\norm{\delta_1-\delta_2}.
\end{equation}
\end{subequations}
\end{dfn}

One can show that a \mplus-manifold is indeed a manifold, with
additional structure useful for sensor fusion algorithms.  The
operators $\mplus$ and $\mmnus$ allow a generic algorithm to modify
and compare manifold states as if they were flat vectors without
knowing the internal structure of the manifold, which thus appears as
a black box to the algorithm.

Axiom~\eqref{ax:zero} makes $0$ the neutral element of \mplus.  Axiom
\eqref{ax:surj} ensures that from an element $x$, every other element
$y\in\S$ can be reached via $\mplus$, thus making $\delta\mapsto
x\mplus\delta$ surjective. Axiom \eqref{ax:inj} makes $\delta\mapsto
x\mplus\delta$ injective on $V$, which defines the range of
perturbations for which the parametrization by $\mplus$ is unique.
Obviously, this axiom cannot hold globally in general, since otherwise
we could have used $\R^n$ as a universal state representation in the
first place.  Instead, $\mplus$ and $\mmnus$ create a local vectorized
view of the state space.  Intuitively $x$ is a reference point which
defines the ``center'' of a local neighborhood in the manifold and
thus also the coordinate system of $\delta$ in the part of $\R^n$ onto
which the local neighborhood in the manifold is mapped (\cf
Figure~\ref{fig:sphere}). The role of Axiom~\eqref{ax:triangle} will
be commented on later.

Additionally, we demand that the operators are \emph{smooth} (\ie
sufficiently often differentiable, cf. Appendix
\ref{sec:formalization}) in $\delta$ and $y$ (for $y\in U_x$).  This
makes limits and derivatives of $\delta$ correspond to limits and
derivatives of $x\mplus\delta$, essential for any estimation algorithm
(formally, $\delta\mapsto x\mplus\delta$ is a diffeomorphism from $V$
to $U_x$).  It is important to note here that we require neither
$x\mplus\delta$ nor $y\mmnus x$ to be smooth in $x$. Indeed it is
sometimes impossible for these expressions to be even continuous in
$x$ for all $x$ (see Appendix~\ref{sec:ex:S2}).
Axiom \eqref{ax:triangle} allows to define a metric and is discussed 
later.

Returning to the INS example, we can now essentially keep the state
model as the state representation
(Figure~\ref{fig:reality-to-state-rep}, Step 2b):
\begin{equation*}
\R^3\times\SO3\times\R^3 \to \R^3\times\SO3^*\times\R^3,
\end{equation*}
where $\SO3^*$ refers to any mathematically sound (``lossless'')
representation of $\SO3$ expressed as a set of numbers to enable a
computer to process it.  Commonly used examples would be quaternions
($\R^4$ with unit constraints) or rotation matrices ($\R^{3 \times 3}$
with orthonormality constraints). Additionally, we need to define
matching representation-specific $\mplus$ and $\mmnus$ operators which
replace the static, lossy translation of the state model into an
$\R^n$ state representation that we saw in the standard approach above
with an on-demand, lossless mapping of a manifold
state representation into $\R^n$ in our approach.

In the INS example, $\mplus$ would simply perform vector-arithmetic on
the $\R^n$ components and multiply a small, minimally parameterized
rotation into the $\SO3^*$ component (details follow soon).

\subsection{Probability Distributions on \mplus-Manifolds}
\label{sec:bpm:PDonM}
So far we have developed a new way to represent states -- as compound
manifolds -- and a method that allows generic sensor fusion algorithms
to work with them -- the encapsulation operators \mplus/\mmnus. Both
together form a \mplus-manifold.
However, sensor fusion algorithms rely on the use of probability
distributions to represent uncertain and noisy sensor data.  Thus, we
will now define probability distributions on \mplus-manifolds.

The general idea is to use a manifold element as the mean $\mu$ which
defines a reference point.  A multivariate
probability distribution which is well-defined on $\R^n$ is then
lifted into the manifold by mapping it into the neighborhood around
$\mu \in \S$ via $\mplus$.  That is, for $X:\Omega\to\R^n$ and $\mu\in
\S$ (with $\dim \S=n$), we can define $Y:\Omega\to\S$ as $Y:=\mu\mplus X$, with probability
distribution given by
\begin{equation}\label{eq:bpm:DistOnManiDef}
\p(Y=y) = \p(\mu\mplus X=y) 
\end{equation}
In particular, we extend the notion of a Gaussian distribution to \mplus-manifolds by
\begin{equation}\label{eq:bpm:gaussian}
    \Nd(\mu, \Sigma) := \mu \mplus \Nd(0, \Sigma),
\end{equation}
where $\mu\in\S$ is an element of the \mplus-manifold but $\Sigma\in\R^{n\times n}$
just a matrix as for regular Gaussians (App.\@~\ref{sec:form:PDonM}).

\subsection{Mean and Covariance on \mplus-Manifolds}
\label{sec:bpm:meanandcov}

Defining the expected value on a manifold is slightly more involved
than one might assume: we would, of course, expect that $\E X\in\S$
for $X:\Omega\to\S$, which however would fail for a naive definition such as
\begin{equation}
\E X \stackrel?= \int_\S x\cdot\p(X=x)\dx.
\end{equation}
Instead, we need a definition that is equivalent to the definition on $\R^n$
and well defined for \mplus-manifolds. Therefore, we define the expected value as the value minimizing the
expected mean squared error:
\begin{equation}\label{eq:bpm:E-def}
\E X = \argmin_{x\in\S} \E(\norm{X \mmnus x}^2)
\end{equation}
This also implies the implicit definition
\begin{equation}\label{eq:bpm:E-implicit}
\E(X\mmnus\E X) = 0,
\end{equation}
as we will prove in Appendix~\ref{sec:form:PDonM}. 

One method to compute this value is to start with an initial guess
$\mu_0$ and iterate \cite{Kraft2003}:
\begin{align}
 \mu_{k+1} &= \mu_k\mplus\E(X\mmnus\mu_k)\label{eq:boxplusmethod:expectedval1}\\
 \E X      &= \lim_{k\to\infty}\mu_k.\label{eq:boxplusmethod:expectedval2}
\end{align}
Care must be taken that $\mu_0$ is
sufficiently close to the true expected value. In practice, however,
this is usually not a problem as sensor fusion algorithms typically
modify probability distributions only slightly at each time step such
that the previous mean can be chosen as $\mu_0$.  

Also closed form solutions exist for some manifolds -- most trivially
for $\R^n$. For rotation matrices, Markley et al.\@~\cite{Markley2007}
give a definition similar to \eqref{eq:bpm:E-def} but use Frobenius
distance. They derive an analytical solution. Lemma
\ref{lem:mat:frobenius} shows that both definitions are roughly
equivalent, so this can be used to compute an initial guess for
rotations.

The same method can be applied to calculate a weighted mean value of a number of
values, because in vector spaces the weighted mean
\begin{equation}
\bar x = \sum_i w_ix_i\quad\text{with}\quad \sum_i w_i=1
\end{equation}
can be seen as the expected value of a discrete distribution with
$\P(X=x_i)=w_i$.

\begin{figure}
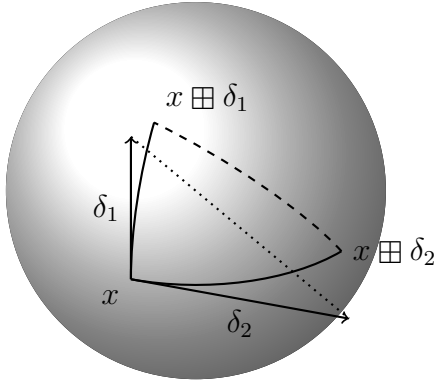

\centering
  \inputTikZ{figure-sphere-triangle}
  \caption{Axiom \eqref{ax:triangle}: The $\d$-distance between $x\mplus\delta_1$ and $x\mplus\delta_2$
  (dashed line) is less or equal to the distance in the parameterized around
  $x$ (dotted line).}
  \label{fig:form:triangle}
\end{figure}

The definition of the covariance of a \mplus-manifold distribution, on the
other hand, is straightforward.  As in the $\R^n$ case, it is an $n
\times n$ matrix.  Specifically, given a mean value $\E X$ of a
distribution $X:\Omega\to \S$, we define its covariance as
\begin{equation}\label{eq:bpm:covariance}
\Cov X = \E\left(\outerProd{(X\mmnus\E X)}\right),
\end{equation}
because $X\mmnus\E X\in\R^n$ and the standard definition can be applied.

The $\mmnus$-operator induces a metric (proof in App.\@~\ref{sec:formalization})
\begin{equation}\label{eq:form:metric-dfnA}
\d(x,y):=\norm{y\mmnus x}
\end{equation}
that is important in interpreting $\Cov X$.
First, 
\begin{align}
    \trace\Cov X=\E\left( \d(X,\E X)^2\right).
\end{align}
\ie $\sqrt{\trace\Cov X}$ is the rms $\d$-distance of $X$ to the mean.
Second, the states $y\in\S$
with $\d(\mu,y)=\sigma$ are the $1\sigma$ contour of $\Nd(\mu,\sigma^2\I)$. Hence,
to interpret the state covariance or to define measurement or dynamic noise for a sensor fusion algorithm intuitively it
is important that the metric induced by \mmnus has an intuitive meaning. For the
\mplus-manifold representing orientation in the INS example, $\d(x,y)$
will be the angle between two orientations $x$ and $y$.

In the light of \eqref{eq:form:metric-dfnA}, axiom \eqref{ax:triangle} means 
that the actual distance $\d(x\mplus\delta_1, x\mplus\delta_2)$ is less or equal
to the distance $\|\delta_1-\delta_2\|$ in the parametrization (Fig.\@~\ref{fig:form:triangle}), i.e.\ the map $x\mapsto x\mplus\delta$ is $1$-Lipschitz. This axiom is needed
for all concepts explained in this subsection, as can be seen in
the proofs in Appendix \ref{sec:formalization}. In so far, \eqref{ax:triangle}
is the deepest insight in the axiomatization \eqref{eq:frm:axioms}.

Appendix \ref{sec:formalization} also
discusses a slight inconsistency in the definitions above, where
the covariance of $\Nd(\mu, \Sigma)$ defined by
\eqref{eq:bpm:gaussian} and \eqref{eq:bpm:covariance} is slightly
smaller than $\Sigma$. For the usual case that $\Sigma$ is significantly smaller than the
range of unique parameters $V$ (\ie for angles $\ll\pi$), these
inconsistencies are very small and can be practically ignored.

\subsection{Practically Important \mplus-Manifolds}
\label{sec:bpm:orientation}

We will now define \mplus-operators for the practically most relevant
manifolds $\R^n$ as well as 2D and 3D orientations. 
Appendix \ref{sec:examplemanifolds} proves the ones listed here to fulfill the
axioms \eqref{eq:frm:axioms}, Appendices \ref{sec:form:symmetric-encaps} and \ref{sec:liegroups}
derive general techniques for constructing a \mplus-operator.

\subsubsection{Vectorspace (\/\Rn\/)} 

For $\S=\R^n$, the \mplus- and \mmnus-operators are, of course, simple vector
addition and subtraction
\begin{align}
    x\mplus\delta = x+\delta, \quad y\mmnus{}x=y-x.
    \label{eq:boxplus:R}
\end{align}

\subsubsection{2D Orientation as Angles Modulo \texorpdfstring{$2\pi$}{2pi}} 
\label{sec:bpm:twodrotation}

A planar rotation is the simplest case that requires taking its manifold
structure into account. It can be represented by the rotation angle 
interpreted modulo $2\pi$. Mathematically, this is $\S=\R/2\pi\Z$,
a set of equivalence classes. Practically, simply a real number $\alpha\in\R$ is stored, and
periodic equivalents $\alpha+2\pi{}k, k\in\Z$ are treated as the same.
\mplus is, then, simply plus. It
could be normalized but that is not necessary.
The difference
in \mmnus however, being a plain real value, must be normalized to $[-\pi,\pi)$ using a 
function $\nu_\pi$:
\begin{align}
    \begin{split}
    \alpha\mplus\delta = \alpha+\delta, \quad \beta\mmnus\alpha = \nu_\pi(\beta-\alpha),\\
    \text{where } \nu_\pi(\delta):=\delta-2\pi\floor{\tfrac{\delta+\pi}{2\pi}}
    \end{split}
\label{eq:boxplus:angle}
\end{align}
With this definition, $\beta\mmnus\alpha$ is the smallest angle needed
to rotate $\alpha$ into $\beta$, respecting the periodic interpretation of angles
and giving the induced metric $\d(x,y)$ an intuitive meaning. 
The parametrization is unique for angles of modulus $<\pi$, \ie $V=(-\pi,\pi)$.

\subsubsection{3D Orientation as an Orthonormal Matrix}\label{sec:bpm:3dMatrix}
Rotations in 3D can be readily represented using orthonormal matrices
$\S\subset\R^{3\times3}$ with determinant $1$. $x\mplus\delta$ performs a
rotation around axis $\delta$ in $x$ coordinates with angle $\norm{\delta}$. This is
also called matrix exponential representation and implemented
by the Rodriguez formula~\cite[pp.\,147]{Koks06}
\begin{gather}
   x\mplus{}\delta = x \exp \delta, \quad y\mmnus{}x = \log\left(x^{-1}y\right),\\
   \begin{split}
   \exp\matrxs{ x\\y\\z } = 
  \matrxs{ 
         \cos\theta+cx^2 & -sz+cxy & sy+cxz \\
         sz+cxy & \cos\theta+cy^2 & -sx+cyz \\
         -sy+cxz & sx+cyz & \cos\theta + cz^2
  }, \\
  \theta = \sqrt{x^2+y^2+z^2}, s=\sinc\theta, c = \tfrac{1-\cos\theta}{\theta^2}
   \end{split}
\label{eq:exp:mat} \\%
   \log x =  \frac{\theta}{2\sin\theta} \; \matrxs{ x_{32}-x_{23}\\x_{13}-x_{31}\\x_{21}-x_{12} },
   \theta = \arccos\tfrac{\trace x-1}{2}.
   \label{eq:log:mat}
\end{gather}
The induced metric $\d(x,y)$ is the angle of a rotation necessary to
rotate $x$ onto $y$.  There is also a monotonic relation to the widely
used Frobenius distance $\normF{x-y}$ (Lemma \ref{lem:mat:frobenius},
App.\@~\ref{sec:technicalProofs}).  The parametrization is again
unique for angles $<\pi$, \ie $V=B_\pi(0)$, where, as usual, we denote
by
\begin{equation}
  \label{eq:openball}
  B_\eps(v)=\setwith{w\in\R^n}{\norm{w-v}<\eps}
\end{equation}
the open $\eps$-ball around $v\in\R^n$ for $\eps>0$.

\subsubsection{3D Orientation as a Unit Quaternion}\label{sec:bpm:quaternion} The same geometrical construction of rotating 
around $\delta$ by $\norm\delta$ also works with unit quaternions
$\S\subset\mathbb{H}$, where $q$ and $-q$ are considered equivalent as they represent the same orientation.
\begin{align}
q\mplus\delta &=q\cdot\exp\tfrac\delta2, \;\;
q \mmnus p =2\olog(p\inv\cdot q),
\label{eq:quaternionboxplus:def} \\
    \exp\delta&=\quat{\cos\norm\delta}{\sinc\norm\delta \delta},
    \label{eq:exp:quat} \\
   \olog\quat wv &= 
   \begin{cases} 
      0 & v=0\\
      \tfrac{\atan(\norm v / w)}{\norm v}v & v\ne0, w\ne0\\
      \pm\tfrac{\pi/2}{\norm v} v & w=0.
    \end{cases}
    \label{eq:log:quat}    
\end{align}
The factor $2$ is introduced so that the induced metric $\d(p,q)$ is the
angle between two orientations and the quaternion and matrix
\mplus-manifolds are isomorphic. It originates from the fact that the
quaternion is multiplied to the left and right of a vector when
applying a rotation. The equivalence of $\pm q$ causes
the $\atan(\norm v / w)$ term in \eqref{eq:log:quat} instead of $\atantwo(\norm v, w)$,
making $\olog q=\olog(-q)$.

\subsubsection{Compound \mplus-Manifolds}
\label{sec:compoundmanifolds}

Two (or several) \mplus-manifolds $\S_1$, $\S_2$ can be combined into a single \mplus-manifold
by simply taking the Cartesian product $\S=\S_1\times\S_2$ and defining the \mplus- and
\mmnus-operator component-wise as
\begin{align}
(x_1, x_2)\mplus\matrxs{\delta_1\\ \delta_2} 
  &:= (x_1\mplus_{\S_1}\delta_1, x_2\mplus_{\S_2}\delta_2) \\
(y_1, y_2)\mmnus(x_1, x_2)
  &:=\matrxs{y_1\mmnus_{\S_1} x_1\\ y_2\mmnus_{\S_2} x_2}.
  \label{eqn:compoundmanifolds}
\end{align}

\end{modified}


\section{Least Squares Optimization and Kalman Filtering on
  \mplus-Manifolds}\label{sec:application}

\newcommand*\gxiZ{f_{x_i}^z}
\newcommand*\gxiZf[1]{f(x_i\mplus#1)\mmnus z}
\begin{table*}
\begin{align}
\intertext{\textbf{\hfil Classical Gauss-Newton\hfil
\hfil Gauss-Newton on a \mplus-Manifold\hfil}}
&f:\R^n\to\R^m &
&f:\S\to\M \\
f(X)&\sim\Ndp z\Sigma\eqv f(X) - z \sim\Ndp0\Sigma &
f(X)&\sim z\mplus\Ndp0\Sigma \stackrel*\eqv f(X)\mmnus z\sim\Ndp0\Sigma \\
\intertext{\centering Iterate with initial guess $x_0$ until $x_i$ converges:}
J_{\bullet k} &:= \frac{f(x_i+\eps e_k) - f(x_i-\eps e_k)}{2\eps} &
J_{\bullet k} &:= \frac{(\gxiZf{\eps e_k}) - (\gxiZf{-\eps e_k})}{2\eps} 
\label{eq:ls:jacobian}\\
x_{i+1} &:= x_i - (J\trans\Sigma\inv J)\inv J\trans\Sigma\inv(f(x_i)-z) &
x_{i+1} &:= x_i \mplus - (J\trans\Sigma\inv J)\inv J\trans\Sigma\inv(f(x_i)\mmnus z)
\label{eq:ls:update}
\end{align}
  \caption{Only small changes are necessary to adapt a classical least squares
    algorithm (left column) to work on \mplus-manifolds (right column).
    Adding perturbations to the state is done using \mplus, comparing values in
    the measurement space is done using \mmnus.
    Note that the $\mmnus z$ term does not cancel out when calculating the
    Jacobian. Also note that the equivalence marked by ${}^*$ holds only
    approximately, as we will derive in Appendix~\ref{sec:form:PDonM}.}
\label{tab:algo-LS}
\end{table*}

One of the goals of the \bpm was to easily adapt estimation algorithms
to work on arbitrary manifolds.  Essentially, this can be done by
replacing $+$ with $\mplus$ when adding perturbations to a state, and
replacing $-$ with $\mmnus$ when comparing two states or measurements.
However, some pitfalls may arise, which we will deal with in this
section.

We show how the \bpm can be applied to convert least
squares optimization algorithms and the Unscented Kalman Filter such
that they can operate on \mplus-manifolds rather than just $\R^n$.

\subsection{Least Squares Optimization}

Least squares optimization dates back to the late 18th century where
it was used to combine measurements in astronomy and in geodesy.
Initial publications were made by Legendre \cite{Legendre1805} and
Gauss \cite{Gauss1821} in the early 19th century.
The method is commonly used to solve overdetermined problems, \ie
problems having more ``equations'' or measurements than unknown
variables.

When combining all unknowns into a single state $X\in\R^n$, and all
measurement functions into a single function $f:\R^n\to\R^m$, the
basic idea is to find $X$ such that given a combined measurement~$z$,
\begin{equation}
\tfrac12\norm{f(X)-z}^2=\min!
\end{equation}
(where we write ``$=\min!$'' to denote that the left-hand side becomes
minimal).  For a positive definite covariance $\Sigma$
between the measurements, this becomes
\begin{equation}\label{eq:lsproblem-classic}
  \tfrac12\norm{f(X)-z}_\Sigma^2=\min!
\end{equation}
using the notation $\norm x^2_\Sigma := x\trans\Sigma\inv
x$.  Under the assumption that
$f(X)=z+\eps$, with $\eps\sim\Ndp0\Sigma$, this leads to a maximum
likelihood solution.

If now our measurement function $f:\S\to\M$ maps from a state manifold $\S$ to
a measurement manifold $\M$, we can write analogously:
\begin{equation}\label{eq:lsproblem-manifold}
\tfrac12\norm{f(X)\mmnus z}_\Sigma^2 = \min!
\end{equation}
which again leads to a maximum likelihood solution,
for $f(X)\mmnus z\sim\Ndp0\Sigma$, as we will prove in Appendix~\ref{sec:form:PDonM}.

Even for classical least squares problems, nonlinear functions $f$
usually allow only local, iterative solutions, \ie starting from an
initial guess $x_0$ we construct a sequence of approximations $x_i$ by
calculating a refinement $\delta_i$ such that $x_{i+1}=x_i+\delta_i$
is a better solution than $x_i$.

This approach can be
adapted to the manifold case, where every iteration takes place on a
new local function
\begin{align}
\gxiZ:\R^n&\to\R^m \\
    \delta&\mapsto \gxiZf\delta,
\end{align}
which for smooth $f$ is a smooth function and, as it is an ordinary
vector function, a local refinement $\delta_i$ can be found
analogously to the classical case.  This refinement can then be added
to the previous state using $x_{i+1}:=x_i\mplus\delta_i$.  The key
difference is that now the refinements are accumulated in $\S$, not in $\R^n$.
In Table~\ref{tab:algo-LS} we show how this can be done using the popular
Gauss-Newton method with finite-difference Jacobian calculation.
Other least squares methods like Levenberg-Marquardt can be applied analogously.

\subsection{Kalman Filtering}
Since its inception in the late 1950s, the Kalman filter
\cite{Kalman1960} and its many variants have successfully been applied
to a wide variety of state estimation and control problems.  In its
original form, the Kalman filter provides a framework for continuous
state and discrete time state estimation of linear Gaussian systems.
Many real-world problems, however, are intrinsically non-linear, which
gives rise to the idea of modifying the Kalman filter algorithm to
work with non-linear process models (mapping old to new state) and
measurement models (mapping state to expected measurements) as well.

The two most popular extensions of this kind are the Extended Kalman
Filter (EKF) \cite[Chap.\,5.2]{BarShalom2001} and more recently the
Unscented Kalman Filter (UKF) \cite{Julier97anew}.  The EKF linearizes
the process and measurement models through first order Taylor series
expansion.  The UKF, on the other hand, is based on the \ut which
approximates the respective probability distributions through
deterministically chosen samples, so-called \sips, propagates these directly
through the non-linear process and measurement models and recovers the
statistics of the transformed distribution from the transformed
samples.  Thus, intuitively, the EKF relates to the UKF as a tangent to a
secant.

We will focus on the UKF here since it is generally better at handling
non-linearities and does not require (explicit or numerically
approximated) Jacobians of the process and measurement models, \ie it
is a \emph{derivative-free} filter.  Although the UKF is fairly new,
it has been used successfully in a variety of robotics applications
ranging from ground robots \cite{Thrun06} to unmanned aerial vehicles
(UAVs) \cite{Merwe2004SPKF}.

The UKF algorithm has undergone an evolution from early publications
on the \ut \cite{Quine95implicit} to the work by Julier and Uhlmann
\cite{Julier96ageneral,Julier97anew} and by van der Merwe et al.\@
\cite{rvdmerwe_waml2003,Merwe2004SPKF}. The following is based on the
consolidated UKF formulation by Thrun, Burgard \& Fox \cite{tbf05} with parameters chosen as
discussed in \cite{Wagner2010}.  The modification of the UKF algorithm
for use with manifolds is based on \cite{tdsf06}, \cite{Kurlbaum2007},
\cite{Birbach2008} and \cite{Wagner2010}.

\subsubsection{Non-Linear Process and Measurement Models}\label{sec:kalman:processandmeasurementmodels}
UKF process and measurement models need not be linear but are assumed
to be subject to additive white Gaussian noise, \ie
\begin{align}
x_t & = g(u_t,x_{t-1}) + \eps_t\\
z_t & = h(x_t) + \delta_t
\end{align}
where $g:T \times \R^n\to \R^n$ and $h:\R^n\to \R^m$ are arbitrary
(but sufficiently nice) functions, $T$ is the space of controls,
$\eps_t \sim \Ndp{0}{R_t}$, $\delta_t \sim \Ndp{0}{Q_t}$, and all
$\eps_t$ and $\delta_t$ are independent.

\subsubsection{Sigma Points}

The set of $2n+1$ \sips that are used to approximate an
$n$-dimensional Gaussian distribution with mean $\mu$ and covariance
$\Sigma$ is computed as follows:
\begin{align}
\sigp{X}{0}   & =  \mu \\
\sigp{X}{i}   & =  \mu + \bigl(\sqrt{\Sigma}\bigr)_{\bullet i} \text{ for } i = 1,\ldots,n\\
\sigp{X}{i+n} & =  \mu - \bigl(\sqrt{\Sigma}\bigr)_{\bullet i} \text{ for } i = 1,\ldots,n
\end{align}
where $\bigl(\sqrt{\Sigma}\bigr)_{\bullet i}$ denotes the $i$-th
column of a matrix square root $\sqrt{\Sigma}\sqrt{\Sigma}^T=\Sigma$  
implemented by Cholesky decomposition.  The name \sips
reflects the fact that all \sigp{X}{k} lie on the $1\sigma$-contour
for $k>0$.

In the following we will use the abbreviated notation
\begin{equation}
\mathcal{X} = (\mu \hspace{1cm} \mu+\sqrt{\Sigma} \hspace{1cm} \mu-\sqrt{\Sigma})
\end{equation}
to describe the generation of the \sips.

\subsubsection{Modifying the UKF Algorithm for Use with \mplus-Manifolds}

The complete UKF algorithm is given in
Table~\ref{tab:algo-UKF}.
\newcommand\sigmaspace{\hspace{0.25cm}}
\begin{table*}[p]
\begin{align}
\intertext{\textbf{\hfil Classical UKF\hfil
\hfil UKF on \mplus-Manifolds\hfil}}\nonumber\\[-1.8\baselineskip]
\intertext{\centering Input, Process and Measurement Models:}\nonumber\\[-1.5\baselineskip]
%
\mu_{t-1} &\in \R^n, \Sigma_{t-1} \in \R^{n \times n}, u_t \in T, z_t \in \R^m&
\mu_{t-1} &\in \S, \Sigma_{t-1} \in \R^{n \times n}, u_t \in T, z_t \in \M\\
%
g &: T \!\times\! \R^n \to\! \R^n, X_t \!=\! g(u_t, X_{t-1})+\Nd(0,R_t)& 
g &: T \!\times\! \S \to\! \S, X_t \!=\! g(u_t, X_{t-1})\mplus\Nd(0,R_t)\\
h &: \R^n\to\R^m, z_t = h(X_t) + \Nd(0,Q_t)&
h &: \S\to\M, z_t = h(X_t) \mplus_\M \Nd(0,Q_t)\\
\intertext{\centering Prediction Step:}\nonumber\\[-1.5\baselineskip]
\mathcal{X}_{t-1} &= (\mu_{t-1} \sigmaspace \mu_{t-1}+\sqrt{\Sigma_{t-1}} \sigmaspace \mu_{t-1}-\sqrt{\Sigma_{t-1}})&
\mathcal{X}_{t-1} &= (\mu_{t-1} \sigmaspace \mu_{t-1}\mplus\sqrt{\Sigma_{t-1}} \sigmaspace \mu_{t-1}\mplus-\sqrt{\Sigma_{t-1}})\label{eq:ukf:predict:gensigma}\\
\bar{\mathcal{X}}_t^* &= g(u_t, \mathcal{X}_{t-1})&
\bar{\mathcal{X}}_t^* &= g(u_t, \mathcal{X}_{t-1})\label{eq:ukf:predict:transsigma}\\
\bar\mu_t &= \frac{1}{2n+1} \sum_{i=0}^{2n}\bar{\mathcal{X}}_t^{*[i]}&
\bar\mu_t &= \meanofsigmap(\bar{\mathcal{X}}_t^*)\label{eq:ukf:predict:recmean}\\
\bar\Sigma_t &= \frac{1}{2} \sum_{i=0}^{2n}(\bar{\mathcal{X}}_t^{*[i]} - \bar\mu_t)(\bar{\mathcal{X}}_t^{*[i]} - \bar\mu_t)\trans+R_t&
\bar\Sigma_t &= \frac{1}{2} \sum_{i=0}^{2n}(\bar{\mathcal{X}}_t^{*[i]} \mmnus \bar\mu_t)(\bar{\mathcal{X}}_t^{*[i]} \mmnus \bar\mu_t)\trans+R_t\label{eq:ukf:predict:recsigma}\\
\intertext{\centering Correction Step:}\nonumber\\[-1.5\baselineskip]
\bar{\mathcal{X}}_{t} &= (\bar\mu_{t} \hspace{.5cm} \bar\mu_{t}+\sqrt{\bar\Sigma_{t}} \hspace{.5cm} \bar\mu_{t}-\sqrt{\bar\Sigma_{t}})&
\bar{\mathcal{X}}_{t} &= (\bar\mu_{t} \hspace{.5cm} \bar\mu_{t}\mplus\sqrt{\bar\Sigma_{t}} \hspace{.5cm} \bar\mu_{t}\mplus-\sqrt{\bar\Sigma_{t}})\label{eq:ukf:correct:genxsigma}\\
\bar{\mathcal{Z}}_t &= h(\bar{\mathcal{X}}_{t})&
\bar{\mathcal{Z}}_t &= h(\bar{\mathcal{X}}_{t})\label{eq:ukf:correct:zsigma}\\
\hat z_t &= \frac{1}{2n+1}\sum_{i=0}^{2n}\bar{\mathcal{Z}}_t^{[i]}&
\hat z_t &= \meanofsigmap(\bar{\mathcal{Z}}_t)\label{eq:ukf:correct:meanz}\\
S_t &= \frac{1}{2}\sum_{i=0}^{2n}(\bar{\mathcal{Z}}_t^{[i]} - \hat z_t)(\bar{\mathcal{Z}}_t^{[i]} - \hat z_t)\trans+Q_t&
S_t &= \frac{1}{2}\sum_{i=0}^{2n}(\bar{\mathcal{Z}}_t^{[i]} \mmnus_\M \hat z_t)(\bar{\mathcal{Z}}_t^{[i]} \mmnus_\M \hat z_t)\trans+Q_t\label{eq:ukf:correct:S}\\
\bar{\Sigma}_t^{x,z} &= \frac{1}{2}\sum_{i=0}^{2n}(\bar{\mathcal{X}}_t^{[i]} - \bar\mu_t)(\bar{\mathcal{Z}}_t^{[i]} - \hat z_t)\trans&
\bar{\Sigma}_t^{x,z} &= \frac{1}{2}\sum_{i=0}^{2n}(\bar{\mathcal{X}}_t^{[i]} \mmnus \bar\mu_t)(\bar{\mathcal{Z}}_t^{[i]} \mmnus_\M \hat z_t)\trans\label{eq:ukf:correct:crosscov}\\
K_t &= \bar{\Sigma}_t^{x,z}S_t^{-1}&
K_t &= \bar{\Sigma}_t^{x,z}S_t^{-1}\label{eq:ukf:correct:K}\\
\mu_t &= \bar\mu_t + K_t(z_t - \hat z_t)&
\delta &= K_t(z_t \mmnus_\M \hat z_t)\label{eq:ukf:correct:newmean}\\
\Sigma_t &= \bar\Sigma_t - K_t S_t K_t\trans&
\Sigma_t' &= \bar\Sigma_t - K_t S_t K_t\trans\label{eq:ukf:correct:newsigma}\\
&&
\mathcal{X}_t' &= (\bar\mu_{t}\mplus\delta \sigmaspace \bar\mu_{t}\mplus(\delta+\sqrt{\Sigma_{t}'}) \sigmaspace \bar\mu_{t}\mplus(\delta-\sqrt{\Sigma_{t}'}))\label{eq:ukf:correct:extrasigmap:gen}\\
&&
\mu_t &= \meanofsigmap(\mathcal{X}_t')\label{eq:ukf:correct:extrasigmap:mean}\\
&&
\Sigma_t &= \frac{1}{2} \sum_{i=0}^{2n}(\mathcal{X}_t'^{[i]} \mmnus \mu_t)(\mathcal{X}_t'^{[i]} \mmnus \mu_t)\trans\label{eq:ukf:correct:extrasigmap:sigma}
\end{align}
\caption{Classical UKF vs. UKF on \mplus-manifolds algorithms. See text for details.}
\label{tab:algo-UKF}
\end{table*}

%
Like other Bayes filter instances, it consists of two alternating
steps -- the prediction and the correction step.  The prediction step
of the UKF takes the previous belief represented by its mean
$\mu_{t-1}$ and covariance $\Sigma_{t-1}$ and a control $u_t$ as
input, calculates the corresponding set of \sips, applies the process
model to each \emph{sigma point}, and recovers the statistics of the
transformed distribution as the predicted belief with added process
noise $R_t$ (\eqref{eq:ukf:predict:gensigma} to
\eqref{eq:ukf:predict:recsigma}).

To convert the prediction step of the UKF for use with
\mplus-manifolds we need to consider operations that deal with states.
These add a perturbation vector to a state in
\eqref{eq:ukf:predict:gensigma}, determine the difference between two
states in \eqref{eq:ukf:predict:recsigma}, and calculate the mean of a
set of sigma points in \eqref{eq:ukf:predict:recmean}.  In the
manifold case, perturbation vectors are added via \mplus and the
difference between two states is simply determined via \mmnus.  The
mean of a set of manifold \sips can be computed analogously to the
definition of the expected value from
\eqref{eq:boxplusmethod:expectedval1} and
\eqref{eq:boxplusmethod:expectedval2}; the definition of the
corresponding function \meanofsigmap is shown in
Table~\ref{tab:algo-meanofsigmap}.
\begin{table}
\begin{align}
\intertext{\textbf{\hfil \mplus-Manifold-\meanofsigmap\hfil}}\nonumber\\[-1.8\baselineskip]
\intertext{\centering Input:}
\sigp{Y}{i},& \;\; i = 0,\ldots,2n\\
\intertext{\centering Determine mean $\mu'$:}
\mu_0' &= \sigp{Y}{0}\\
\mu_{k+1}' &= \mu_k' \mplus \frac{1}{2n+1} \sum_{i=0}^{2n}\sigp{Y}{i} \mmnus \mu_k'\\
\mu' &= \lim_{k \rightarrow \infty}\mu_k'\label{eq:meanofsigmap:limit}
\end{align}
\caption{\meanofsigmap computes the mean of a set of \mplus-manifold sigma points $\mathcal{Y}$.  In practice, the limit in \eqref{eq:meanofsigmap:limit} can be implemented by  an iterative
  loop that is terminated if the norm of the most recent summed error
  vector is below a certain threshold. The number of \emph{sigma points} is not necessarily the same as the dimension of each.}
\label{tab:algo-meanofsigmap}
\end{table}


The UKF correction step first calculates the new set of \sips
\eqref{eq:ukf:correct:genxsigma}, propagates each through the
measurement model to obtain the \sips corresponding to the expected
measurement distribution in \eqref{eq:ukf:correct:zsigma}, and
recovers its mean $\hat z_t$ in \eqref{eq:ukf:correct:meanz} and
covariance $S_t$ with added measurement noise $Q_t$ in
\eqref{eq:ukf:correct:S}.  Similarly, the cross-covariance
$\bar{\Sigma}_t^{x,z}$ between state and expected measurement is
calculated in \eqref{eq:ukf:correct:crosscov}.  The latter two are
then used in \eqref{eq:ukf:correct:K} to compute the Kalman gain $K$,
which determines how the innovation is to be used to update the mean
in \eqref{eq:ukf:correct:newmean} and how much uncertainty can be
removed from the covariance matrix in \eqref{eq:ukf:correct:newsigma}
to reflect the information gained from the measurement.

The conversion of the UKF correction step for use with \mplus-manifolds
generally follows the same strategy as that of the prediction step but
is more involved in detail.  Firstly,
this is because we use \mplus-manifolds to represent both states and
measurements so that the advantages introduced for states above also
apply to measurements.  Secondly, the update of the mean in
\eqref{eq:ukf:correct:newmean} cannot be implemented as a simple
application of \mplus: This might result in an inconsistency between
the covariance matrix and the mean since in general
\begin{equation}
\mu\mplus\Ndp{\delta}{\Sigma'}
\ne (\mu\mplus\delta) \mplus \Ndp{0}{\Sigma'},
\end{equation}
\ie the mean would be modified while the covariance is still
formulated in terms of the coordinate system defined by the old mean
as the reference point.  Thus, we need to apply an additional \sip
propagation as follows.  The manifold variant of
\eqref{eq:ukf:correct:newmean} only determines the perturbation vector
$\delta$ by which the mean is to be changed and the manifold variant
of \eqref{eq:ukf:correct:newsigma} calculates a temporary covariance
matrix $\Sigma_t'$ still relative to the old mean $\bar\mu_t$.
\eqref{eq:ukf:correct:extrasigmap:gen} then adds the sum of $\delta$
and the respective columns of $\Sigma_t'$ to $\bar\mu_t$ in a single
$\mplus$ operation to generate the set of \sips $\mathcal{X}_t'$.
Therefrom the new mean $\mu_t$ in
\eqref{eq:ukf:correct:extrasigmap:mean} and covariance $\Sigma_t$ in
\eqref{eq:ukf:correct:extrasigmap:sigma} is computed.

The overhead of the additional \sip propagation can be avoided by storing $\mathcal{X}_t'$
for reuse in \eqref{eq:ukf:predict:gensigma} or \eqref{eq:ukf:correct:genxsigma}. If $x\mplus\delta$
is continuous in $x$, the step can also be replaced by $\mu_t=\bar\mu_t \mplus \delta$ as an approximation.

\begin{modified}
    A final word auf caution: \emph{Sigma point} propagation fails for a standard deviation
    larger than the range $V$ of unique parametrization, where even propagation
    through the identity function results in a reduced covariance. To prevent this, the standard deviation
    must be within $V/2$, so all \emph{sigma points} are mutually within a range
    of $V$. For 2D and 3D orientation 
    hence an angular standard deviation of $\pi/2$ is allowed. This is no practical
    limitation, because filters usually fail much earlier because of nonlinearity.
\end{modified}



\section{\mplus-Manifolds as a \\Software Engineering Tool}\label{sec:software}
As discussed in Section~\ref{sec:BoxplusMethod}, the \bpm
simultaneously provides two alternative views of a \mplus-manifold.
On the one hand, generic algorithms access \emph{primitive} or
\emph{compound manifolds} via flattened perturbation vectors, on the
other hand, the user implementing process and measurement models needs
direct access to the underlying state representation (such as a
quaternion) and for \emph{compound manifolds} wants to access
individual components by a descriptive name.

In this section we will use our Manifold Toolkit~(\emph{MTK}) to
illustrate how the \bpm can be modeled in software.  The
current version of \emph{MTK} is implemented in C++ and uses the Boost
Preprocessor library \cite{BOOST:PP}
\begin{modified}
and the Eigen Matrix library \cite{Eigen2.0}.
\end{modified}%
A port to MATLAB is also available~\cite{wagneriros11}. Similar mechanisms can be
applied in other (object-oriented) programming languages.

\subsection{Representing Manifolds in Software}\label{sec:software:mtk:manifold}
In \emph{MTK}, we represent \mplus-manifolds as C++ classes and require every
manifold to provide a common interface to be accessed by the generic
sensor fusion algorithm, \ie implementations of $\mplus$ and $\mmnus$.
The corresponding C++ interface is fairly straight-foward.  
\begin{modified} 
Defining a manifold requires an \lstinline"enum DOF" and two methods
\begin{cpp}[mathescape=true]
struct MyManifold {
	enum {DOF = $n$};
	typedef double scalar;
	void boxplus(vectview<const double, DOF> delta, double scale=1);
	void boxminus(vectview<double, DOF> delta, 
	         const MyManifold& other) const;
};
\end{cpp}
where $n$ is the degrees of freedom,
\lstinline"x.boxplus(delta, s)" implements $x:=x\mplus(s\cdot\delta)$
and \lstinline"y.boxminus(delta, x)" implements $\delta:=y\mmnus x$.
\lstinline"vectview" maps a \lstinline"double" array of size
\lstinline"DOF" to an expression directly usable in Eigen expressions.
The scaling factor in \lstinline"boxplus" can be set to $-1$ to
conveniently implement $x\mplus(-\delta)$.
\end{modified}

Additionally, a \mplus-manifold class can provide arbitrary member variables
and methods (which, \eg rotate vectors in the case of orientation) that are
specific to the particular manifold.

\emph{MTK} already comes with a library of readily available \emph{manifold
  primitive} implementations of $\R^n$ as \lstinline"vect<n>", \SO2
and \SO3 as \lstinline"SO2" and \lstinline"SO3" respectively, and
$S^2$ as \lstinline"S2".  It is possible to provide alternative
implementations of these or to add new implementations of other manifolds
basically by writing appropriate $\mplus$ and $\mmnus$ methods.

\subsection{Automatically Generated Compound Manifold Representations}\label{sec:software:mtk:build_manifold}

In practice, a single \emph{manifold primitive} is usually
insufficient to represent states (or measurements).  Thus, we also
need to cover \emph{compound \mplus-manifolds} consisting of several manifold
components in software.

A user-friendly approach would encapsulate the manifold in a class
with members for each individual submanifold which, mathematically,
corresponds to a Cartesian product.  Following this approach, \mpm
operators on the \emph{compound manifold} are needed, which use the
\mpm operators of the components as described in
Section~\ref{sec:compoundmanifolds}.  Using this method, the user
can access members of the manifold by name, and the algorithm just
sees the compound manifold.

This can be done by hand in principle, but becomes quite error-prone when there are
many components.
Therefore, \emph{MTK} provides a preprocessor macro which
generates a compound manifold from a list of simple manifolds.  The
way how \emph{MTK} does this automatically and hides all details from the
user is a main contribution of \emph{MTK}.

Returning to the INS example from the introduction where we need to
represent a state consisting of a position, an orientation, and a
velocity, we would have a nine-degrees-of-freedom state
\begin{equation}
\S = \R^3\times\SO3\times\R^3.
\end{equation}
Using our toolkit this can be constructed as:
\begin{cpp}
MTK_BUILD_MANIFOLD(state, 
	((vect<3>, pos))
	((SO3, orient))
	((vect<3>, vel))
)
\end{cpp}
Given this code snippet the preprocessor will generate a
\lstinline"class state", having public members \lstinline"pos",
\lstinline"orient", and \lstinline"vel".  Also generated are the
manifold operations \lstinline"boxplus" and \lstinline"boxminus" as well as the
total degrees of freedom \lstinline"state::DOF = 9" of the
\emph{compound manifold}.

The macro also addresses another technical problem: Kalman filters in
particular require covariance matrices to be specified which represent
process and measurement noise.  Similarly, individual parts of
covariance matrices often need to be analyzed.  In both cases the
indices of individual components in the flattened vector view need to
be known.  \emph{MTK} makes this possible by generating an enum
\lstinline"IDX" reflecting the index corresponding to the start of the
respective part of the vector view.  In the above example, the start
of the vectorized orientation can be determined using \eg
\lstinline"s.orient.IDX" for a \lstinline"state s".  The size of this
part is given by \lstinline"s.orient.DOF".
\begin{modified}
\emph{MTK} also provides convenience methods to access and modify corresponding
sub-vectors or sub-matrices using member pointers.
\begin{cpp}
// state covariance matrix:
Matrix<double, state::DOF, state::DOF> cov;
cov.setZero();

// set diagonal of covariance block of pos part:
setDiagonal(cov, &state::pos, 1);

// fill entire orient-pos-covariance block:
subblock(cov, &state::orient, &state::pos).fill(0.1);
\end{cpp}
\end{modified}

\subsection{Generic Least Squares Optimization and UKF Implementations}

Based on \emph{MTK}, we have developed a generic least squares
optimization framework called \emph{SLoM} (\emph{Sparse Least Squares
  on Manifolds})~\cite{Hertzberg2008} according to Table~\ref{tab:algo-LS} and \emph{UKFoM}~\cite{Wagner2010}, a generic UKF on manifolds
implementation according to Table~\ref{tab:algo-UKF}. Apart from handling
manifolds, \emph{SLoM} automatically infers sparsity, \ie which measurement depends 
on which variables, and exploits this in computing the Jacobian in \eqref{eq:ls:jacobian},
and in representing, and inverting it in \eqref{eq:ls:update}.

\emph{MTK}, \emph{SLoM} and \emph{UKFoM} are already available online on
\url{www.openslam.org/mtk} under an open source license.


\section{Experiments}\label{sec:experiments}

We now illustrate the advantages of the \bpm in terms of  both ease of
use and algorithmic performance.  

\subsection{Worked Example: INS-GPS Integration}

\begin{modified}
In a first experiment, we show how
\emph{MTK} and \emph{UKFoM} can be used to implement a minimalistic,
but working INS-GPS filter in about 50 lines of C++ code.  
The focus is on how the framework allows to concisely write
down such a filter. It integrates accelerometer and gyroscope
readings in the prediction step and fuses a global position
measurement (\eg loosely coupled GPS) in the measurement 
update. The first version assumes white noise. We then show
how the framework allows for an easy extension to colored noise.
\end{modified}


%
\begin{figure*}
  \centering
  \includegraphics{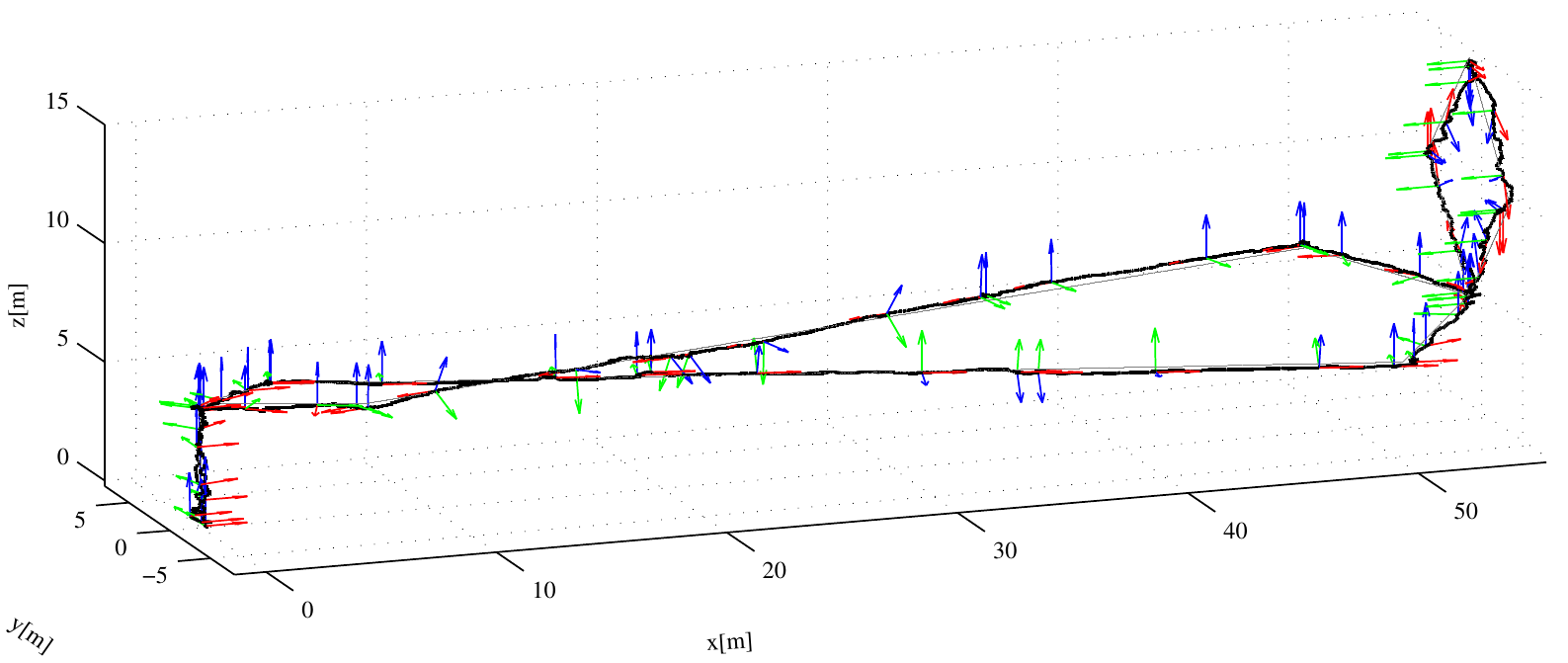}
  \caption{Trajectory estimated from a synthetic dataset by the
    \emph{UKFoM}-based minimal INS-GPS filter.  The trajectory starts
    and ends at $(0,0)$ and consists of a stylized figure eight with
    an embedded stylized loop (right).  While on the long segments of
    the figure eight the helicopter rotates about its roll axis.
}
    \label{fig:heli:ukf:3d}
\end{figure*}

\subsubsection{IMU Process Model}

Reusing the state manifold definition from Section~\ref{sec:software:mtk:build_manifold},
the process model implements $x_t = g(u_t, x_{t-1})$ (\cf
Section~\ref{sec:kalman:processandmeasurementmodels}) where the
control $u_t$ in this case comprises acceleration \lstinline"a" as
measured by a three-axis accelerometer and angular velocity
\lstinline"w" as measured by a three-axis gyroscope.
\begin{cpp}
state process_model(const state &s,
	const vect<3> &a, const vect<3> &w)
{
	state s2;
	
	// apply rotation
	vect<3> scaled_axis = w * dt;
	SO3 rot = SO3::exp(scaled_axis);
	s2.orient = s.orient * rot;

	// accelerate with gravity
	s2.vel = s.vel + (s.orient * a + gravity) * dt;

	// translate
	s2.pos = s.pos + s.vel * dt;

	return s2;
}
\end{cpp}

The body of the process model uses Euler integration of the motion
described by \lstinline"a" and \lstinline"w" over a short time
interval \lstinline"dt".
Note how individual components of the manifold state are accessed by
name and how they can provide non-trivial methods (overloaded
operators).

Further, we need to implement a function that returns the process
noise term $R_t$.
\begin{cpp}
ukf<state>::cov process_noise_cov()
{
	ukf<state>::cov cov = ukf<state>::cov::Zero();

	setDiagonal(cov, &state::pos, 0);
	setDiagonal(cov, &state::orient,gyro_noise*dt);
	setDiagonal(cov, &state::vel, acc_noise*dt);

	return cov;
}
\end{cpp}
\newcommand\sx{\sigma_p^2} \newcommand\sw{\sigma_\omega^2} \newcommand\sv{\sigma_v^2}
Note how the MTK's \lstinline"setDiagonal"
 function
automatically fills in the diagonal entries of the covariance matrix
such that their order matches the way the \mpm operators locally
vectorize the state space, \ie the user does not need to know about
these internals.  The constants 
\begin{modified}
$\sw=\;$\lstinline"gyro_noise" and $\sv=\;$\lstinline"acc_noise" are continuous noise
spectral densities for the gyroscope ($\tfrac{\text{rad}^2}{\text{s}}$) and accelerometer ($\tfrac{\text{m}^2}{\text{s}^3}$)
and multiplied by \lstinline"dt" in the process noise covariance matrix
\begin{equation} \newcommand\sd{\smash\vdots}
\newcommand\cd{&\cdots}
R_t =
    dt\cdot\operatorname{diag}(0,0,0,\sw,\sw,\sw,\sv,\sv,\sv).
\end{equation}
\end{modified}

\subsubsection{GPS Measurement Model}

Measurement models implement $z_t = h(x_t)$ (\cf
Section~\ref{sec:kalman:processandmeasurementmodels}),
in the case of a position measurement simply returning the 
position from $x_t$.
\begin{cpp}
vect<3> gps_measurement_model(const state &s)
{
	return s.pos;
}
\end{cpp}

We also need to implement a function that returns the measurement
noise term $Q_t$.
\begin{cpp}
Matrix3x3 gps_measurement_noise_cov()
{
	return gpos_noise * Matrix3x3::Identity();
}
\end{cpp}

Again, $\sigma_p^2=\;$\lstinline"gps_noise" is constant.  Note that
although we show a vector measurement in this example, \emph{UKFoM}
also supports manifold measurements.

\subsubsection{Executing the Filter}

Executing the UKF is now straight-forward.  We first instantiate the
\lstinline"ukf" template with our \lstinline"state" type and pass a default $(0,1,0)$
initial state to the constructor along with an initial
covariance matrix.  We then assume that sensor data is acquired in
some form of loop, and at each iteration execute the prediction and
correction (update) steps with the process and measurement models and
sensor readings as arguments.
\begin{cpp}
ukf<state>::cov init_cov = 
	0.001 * ukf<state>::cov::Identity();
ukf<state> kf(state(), init_cov);
vect<3> acc, gyro, gps;
for (...)
{
	kf.predict(
		boost::bind(process_model, _1, acc, gyro), 
		process_noise_cov);
		
	kf.update<3>(gps,
				 gps_measurement_model,
				 gps_measurement_noise_cov);
}
\end{cpp}

Note how our use of \lstinline"boost::bind" denotes an anonymous
function that maps the state \lstinline"_1" to
\lstinline"process_model(_1, acc, gyro)".

Also note that there is no particular order in which
\lstinline"predict()" and \lstinline"update()" need to be called, and
that there can be more than one measurement model -- typically one per
type of sensor data.

\begin{modified}

\subsubsection{Evaluation on a Synthetic Dataset}
\begin{figure}[t]
  \includegraphics{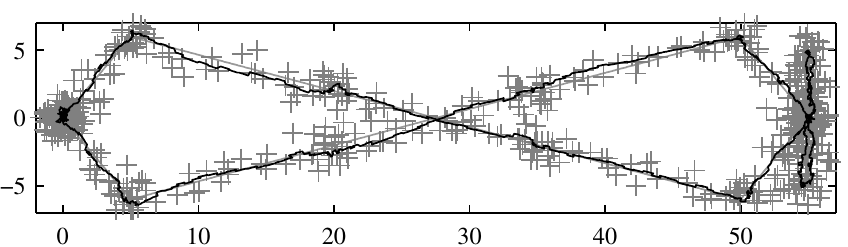}
  \includegraphics{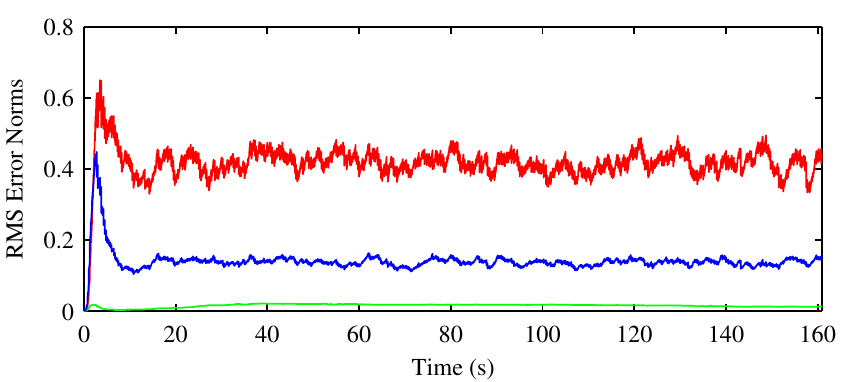}
  \caption{\textbf{Top}: Trajectory estimated by the minimal INS-GPS filter (black)
    vs. ground truth (solid gray) and position measurements (gray
    crosses) from a single filter run.  The plot shows the $x$-$y$-plane (m).
    \textbf{Bottom}: RMS error norms for position (red, m), orientation (green, rad)
    and velocity (blue, m/s) estimates from 50 Monte Carlo runs. The time averaged
    error is $0.415\,m$, $1.58\times 10^{-2}\,rad$ and $0.141\,m/s$, respectively.
    This magnitude seems plausible for a $\sigma_p=0.75\,\text{m}$ GPS. 
  }
    \label{fig:heli:ukf:xy}
\end{figure}
\begin{figure}
  \includegraphics{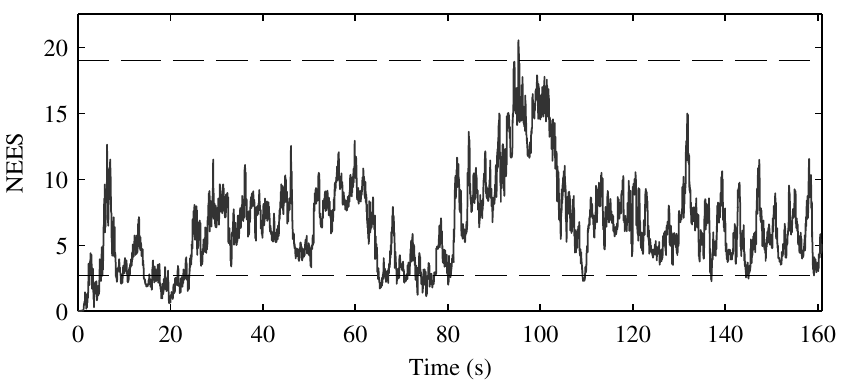}
  \includegraphics{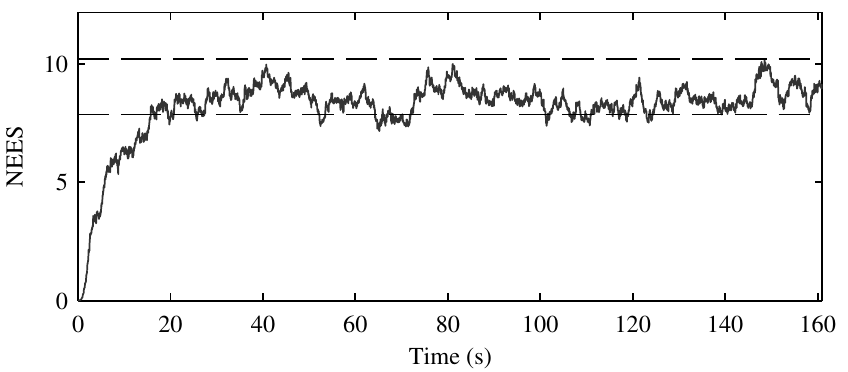}
  \includegraphics{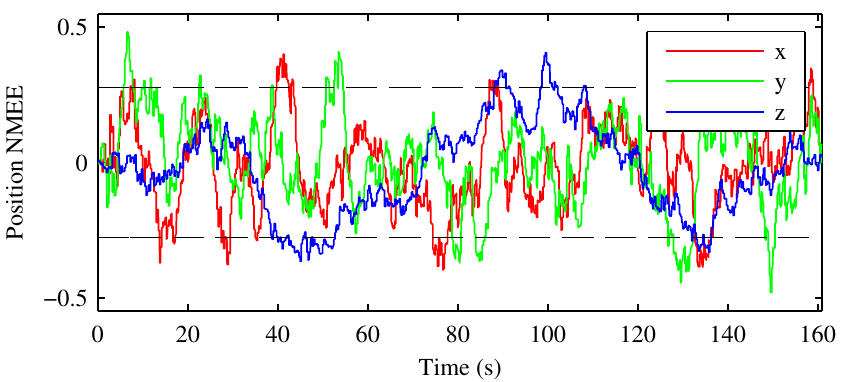}
  \includegraphics{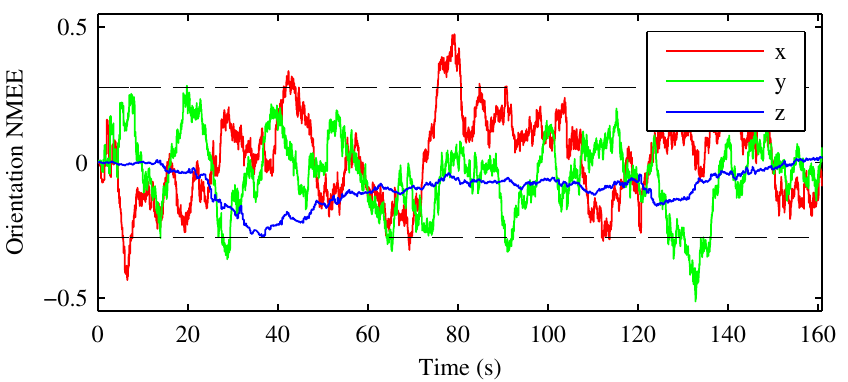}
  \includegraphics{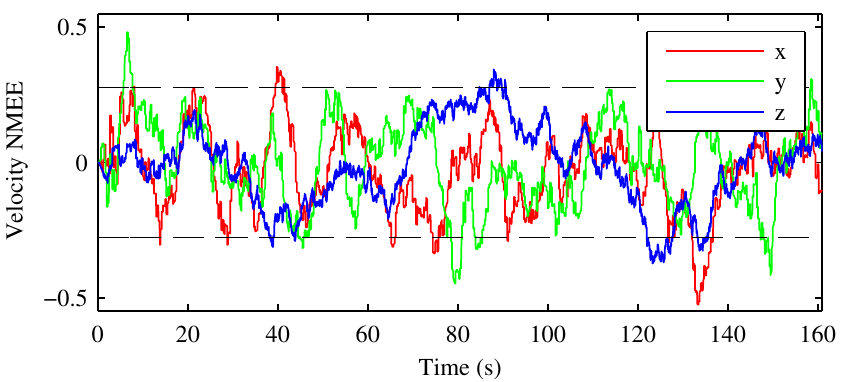}
  \caption{Plots for filter consistency evaluation (top to bottom):
    NEES $\norm{\mu_t\boxminus X_t}_{\Sigma_t}^2$
    from a single run; averaged NEES from 50 Monte Carlo runs; 
    $\frac{(\mu_t \mmnus X_t)_k}{\sqrt{\Sigma_{tkk}}}$
    averaged over 50 Monte Carlo runs (NMEE)
    with $k$ being x, y, and z of position,
    orientation and velocity . Note how
    each largely remains within its 95\% probability region (dashed),
    as should be the case for a consistent filter.}
    \label{fig:heli:ukf:nees-nmee}
\end{figure}

To conclude the worked example, we run the filter on a synthetic data
set consisting of sensor data generated from a predefined trajectory
with added white Gaussian noise.  The estimated 3D trajectory is shown
in Figure~\ref{fig:heli:ukf:3d}, its projection into the $x$-$y$-plane
compared to ground truth in Figure~\ref{fig:heli:ukf:xy}. 
Accelerometer and gyroscope readings are
available at 100\,Hz ($dt=0.01$) with white noise standard deviations
$\sigma_\omega=0.05\degree/\text{s}^{1/2}$ and $\sigma_v=2\,\text{mm}/\text{s}^{3/2}$ (MEMS class IMU).
\emph{GPS} is available at 4\,Hz, with white noise of $\sigma_p=0.75\,\text{m}$.

Solely based on
accelerometer and gyroscope data the state estimate would drift over
time. \emph{GPS} measurements allow the filter to reset errors
stemming from accumulated process noise.  However, over short time
periods accelerometer and gyroscope smooth out the noisy GPS measurements as
illustrated in Figure~\ref{fig:heli:ukf:xy}.  
As suggested by \cite{BarShalom2001} we verify the filter consistency
by computing the \emph{normalized estimation error squared} (NEES) 
and the \emph{normalized mean estimation error} (NMEE) for each
state component~(Fig.\@~\ref{fig:heli:ukf:nees-nmee}).
Figure \ref{fig:heli:ukf-euler:rms-error-norms} and \ref{fig:heli:ukf-scaledaxis:rms-error-norms} present the same results
for a UKF using Euler angles and scaled axis respectively, showing the expected failure once the orientation
approaches singularity. 
Figure \ref{fig:heli:ukf-quaternion:rms-error-norms} shows the results for the technique
proposed in \cite{Merwe2004SPKF} with a plain quaternion $\R^4$ in the UKF state and 
a process model that drives the quaternion towards normalization ($\eta=0.1/\text{s}$, cf. Sec.\@~\ref{sec:ukfrelated}).
Overall, Euler-angle and scaled axis fail at singularities, the \mplus-method is very slightly better than the plain quaternion.
The difference in performance is not very relevant, our claim is rather that the \mplus-method is conceptually more elegant.
Computation time was 21/32\,$\mu$s (\mplus), 28/23\,$\mu$s (Euler), 25/23\,$\mu$s
(scaled axis), 21/33\,$\mu$s (quaternion) for a predict-step/for a GPS update.
All timings were determined on an Intel Xeon CPU E5420 @2.50GHz
running 32bit Linux.
\begin{figure}[t]
   \includegraphics{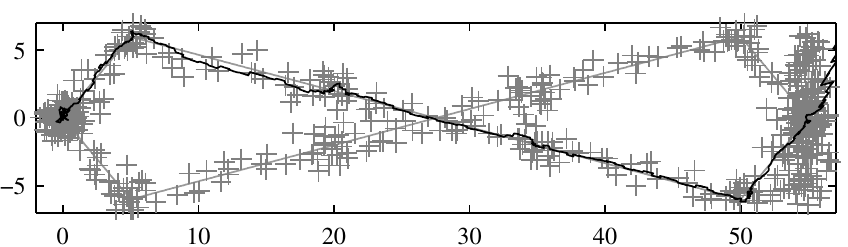}
   \includegraphics{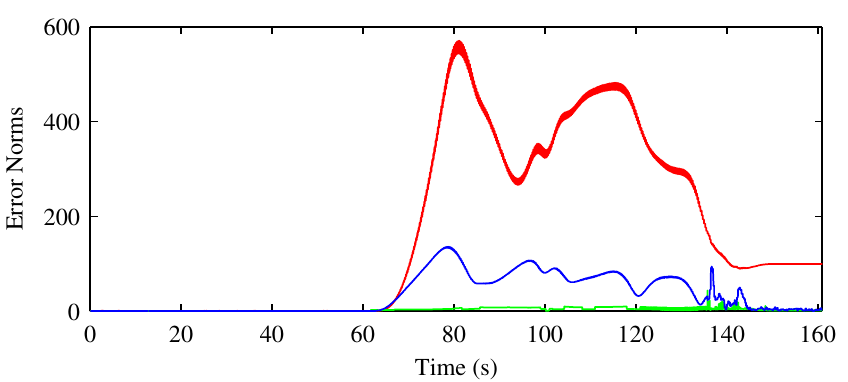}
   \includegraphics{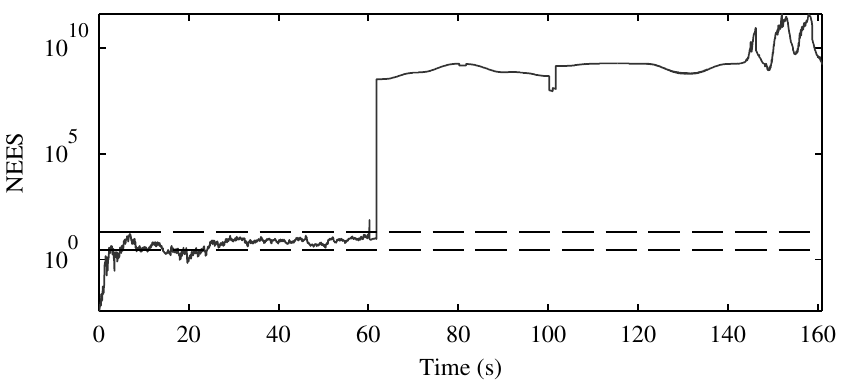}
   \caption{Performance of an UKF using Euler angles for the orientation (top to bottom):
            Trajectory estimated, error norms as in Fig.\@~\ref{fig:heli:ukf:xy}, 
            NEES, from a single run. The filter operates normally until $t=60\,\text{s}$ where the orientation
            approaches the singularity, the filter becomes inconsistent and the error rapidly increases with the estimate
            leaving the workspace. }
   \label{fig:heli:ukf-euler:rms-error-norms}
\end{figure}
\begin{figure}[t]
   \includegraphics{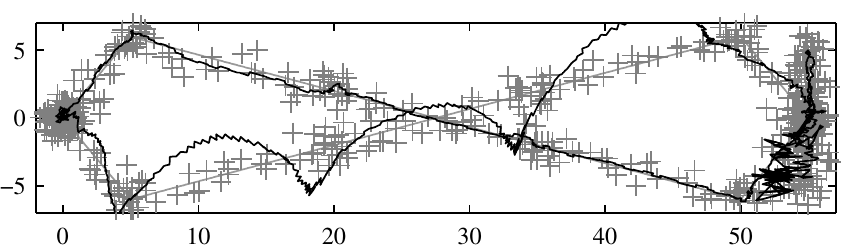}
   \includegraphics{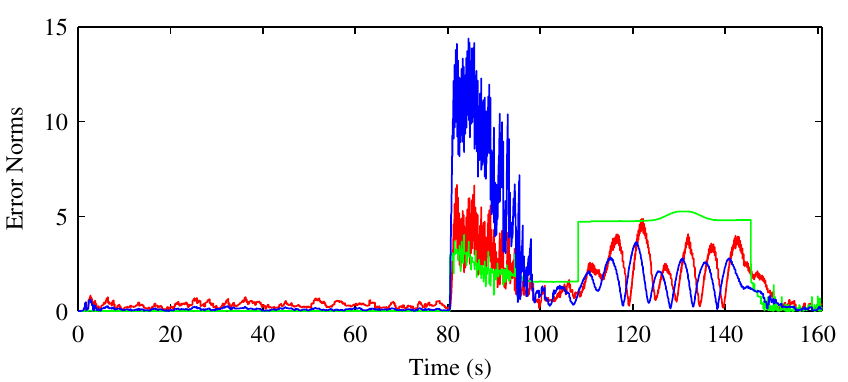}
   \includegraphics{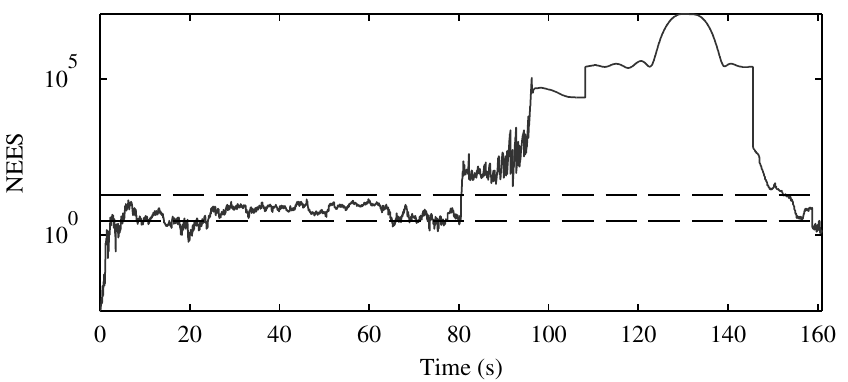}
   \caption{Performance of an UKF using a scaled axis representation for the orientation (top to bottom):
            Trajectory estimated, error norms as in Fig.\@~\ref{fig:heli:ukf:xy}, 
            NEES, from a single run. The filter operates normally until $t=80\,\text{s}$ where the orientation
            approaches the singularity, the filter becomes inconsistent and the error rapidly increases.
            Surprisingly, the filter recovers in the end, showing that scaled axis is a more robust
            representation than Euler angles.}
   \label{fig:heli:ukf-scaledaxis:rms-error-norms}
\end{figure}
\begin{figure}[t]
  \includegraphics{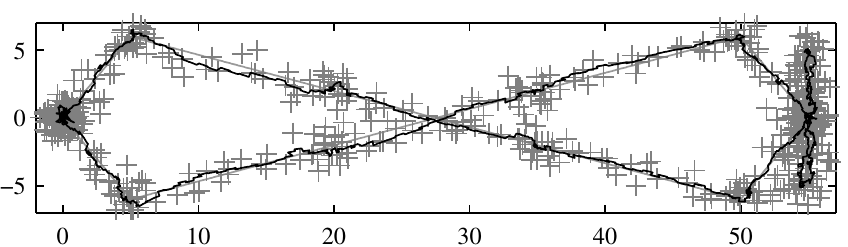}
  \includegraphics{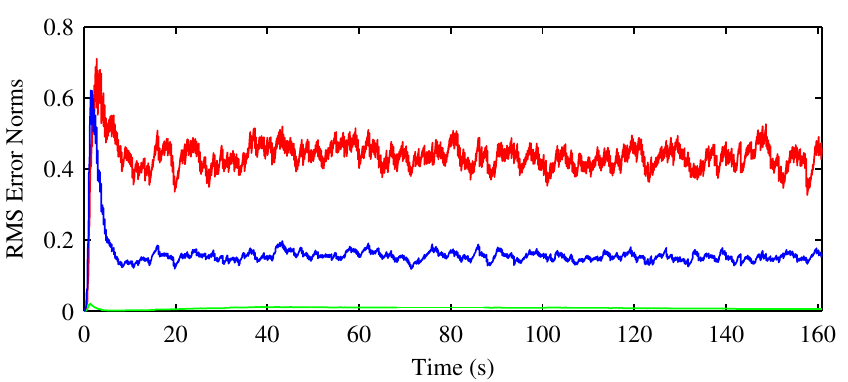}
  \includegraphics{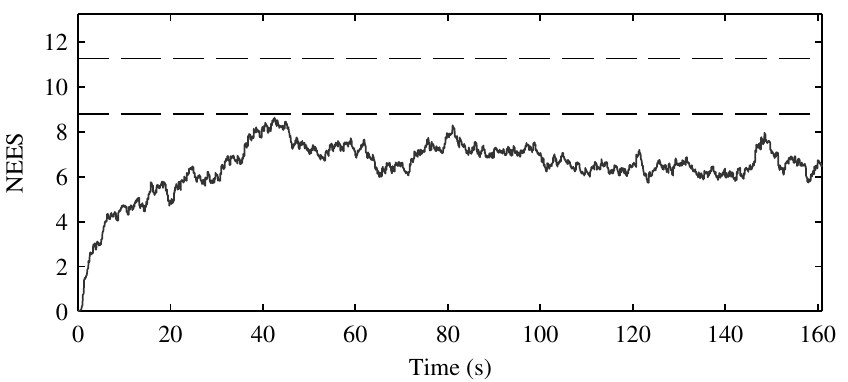}
   \caption{Performance of an UKF using a plain quaternion (top to bottom):
            Trajectory estimated, RMS error norms as in Fig.\@~\ref{fig:heli:ukf:xy}, 
            NEES, from 50 Monte Carlo runs. 
            The NEES is slightly too low (ca. by 1), probably because by using $\sigma_\omega^2$ 
            as process noise covariance in all 4 quaternion components, the filter ``thinks''
            there is process noise on the norm of the quaternion, while in fact there
            is none. The time-averaged error is $0.434\,m$, $1.71 \times 10^{-2}\,rad$, and $0.160\,m/s$ in position, orientation,
            and velocity, respectively. This is slightly worse than for the \mplus-method, probably caused
            by the fact, that the quaternion is not fully normalized making the filter use
            the false DOF created by the quaternion's norm to fit to the measurements.
            }
   \label{fig:heli:ukf-quaternion:rms-error-norms}
\end{figure}
\end{modified}

\begin{modified}
\subsection{Extension to Colored Noise Errors}

GPS errors are correlated and hence a INS-GPS filter
should model colored not white noise. Therefor, a
bias vector must be added to the state:
\begin{cpp}
MTK_BUILD_MANIFOLD(state, 
	...
	((vect<3>, gps_bias))
)
\end{cpp}
The bias $b=$\lstinline"gps_bias" follows the process model
\begin{align*}
    b_{t+1} = \exp\left(-\tfrac{dt}{T}\right) b_t + \Ndp{0}{\left(1\!-\!\exp\left(-\tfrac{2dt}{T}\right)\right)\sigma_{b}^2\I_3}
\end{align*}
which realizes an autocorrelation with a given variance
$\Cov(b_t)=\sigma_{b}^2\I_3$ and specified exponential decay $\cor(b_t,b_{t+k})=\exp\left(-\tfrac{k \cdot dt}{T}\right)$.
The formula is taken from the textbook by Grewal~\cite[(8.76), (8.78)]{Grewal01}
and implemented in \lstinline"process_model" by
\begin{cpp}
	s2.gps_bias = exp(-dt/T_pos) * s.gps_bias;
\end{cpp}
and in \lstinline"process_noise_cov" by
\begin{cpp}
	setDiagonal(cov, &state::gps_bias, 
		gps_cnoise*(1-exp(-2*dt/T_pos)));
\end{cpp}
The initial covariance is also set to $\sigma_{b}^2$:
\begin{cpp}
	setDiagonal(init_cov, &state::gps_bias, gps_cnoise);
\end{cpp}
Finally, \lstinline"gps_measurement_model" adds the bias:
\begin{cpp}
	return s.pos+s.bias;
\end{cpp}
Figure \ref{fig:heli:ukf-plots-colored:rms-error-norms} shows the
performance of the modified filter with a simulation that includes
colored noise on the GPS measurement ($\sigma_b^2=5\text{m}$, $T=1800\text{s}$).

This example shows that \emph{MTK} and \emph{UKFoM} allow for
rapidly trying out different representations and models without
being hindered by implementing bookkeeping issues.
In a similar way, omitted here for lack of space, gyroscope and accelerometer
bias can be integrated. Beyond that, further improvement would
require operating on GPS pseudo-ranges (tightly coupled setup),
with the state augmented by biases compensating for the various
error sources (receiver clock error, per-satellite clock errors,
ephemeris errors, atmospheric delays, etc.; \cite[Ch. 5]{Grewal01}).

\begin{figure}[t]
   \includegraphics{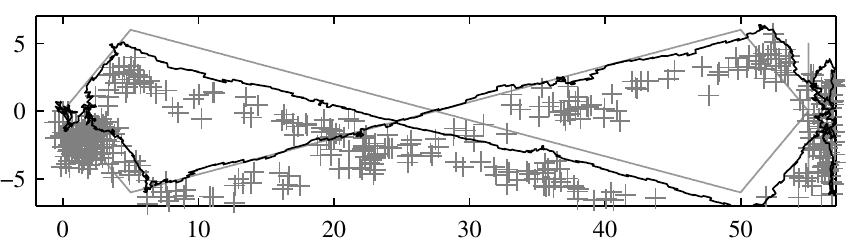}
   \includegraphics{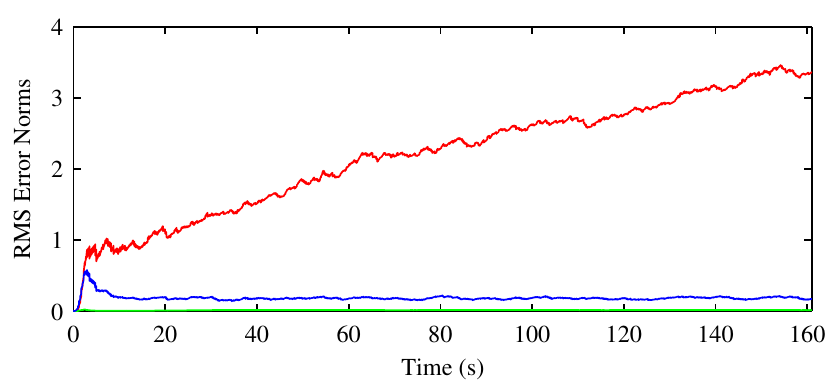}
   \includegraphics{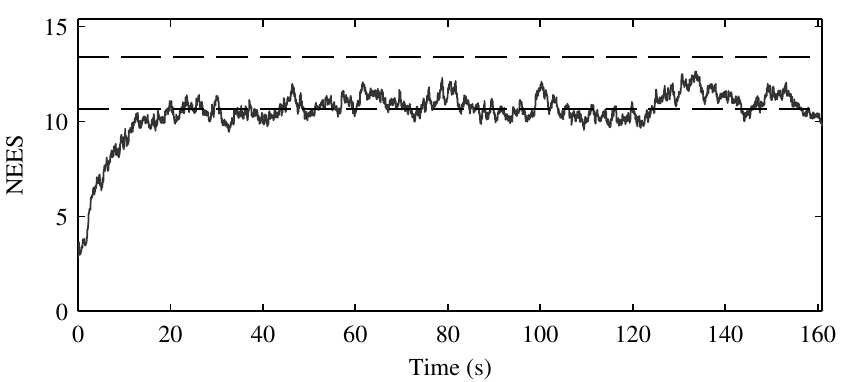}
   \caption{Performance of the \mplus-method UKF with colored noise (top to bottom):
            Trajectory estimated, error norms as in Fig.\@~\ref{fig:heli:ukf:xy}, 
            NEES from 50 Monte Carlo runs. The filter is consistent, notably the position
            error grows over time. This is as expected: The filter knows its initial
            position and from this it can initially deduce the GPS-bias
            with about $1\text{m}$ precision. However, over time
            the bias drifts and
            with the inertial system being to imprecise, their is no information
            on the new bias and hence the position error grows. Velocity and
            orientation error keep low, because these are deduced from the relative
            position of GPS measurements where the bias cancels out.
            }
   \label{fig:heli:ukf-plots-colored:rms-error-norms}
\end{figure}

\end{modified}

\subsection{Pose Relation Graph Optimization}
\begin{modified} 
To show the benefit of our \mplus-approach we optimized several 3D pose
graphs using our manifold representation and compared it to the singular
representations of Euler angles and matrix exponential (see Section~\ref{sec:bpm:3dMatrix}), as well
as a four dimensional quaternion representation. When using Gauss-Newton
optimization the latter would fail, due to the rank-deficity of the problem, so
we added the pseudo measurement $|q|=1$.

First, we show that the \mplus-method works on real-world data sets. 
Figure~\ref{fig:dlr} shows a simple 2D landmark SLAM problem
(DLR/Spatial Cognition data set~\cite{SpatialCognitionDataSet}). 
Figure ~\ref{fig:stanfordRSS} shows
the 3D Stanford multi-storey parking garage data set, where the initial estimate is so good,
most methods work well. 
\begin{figure}
\centering
  \setbox0\vbox{\vskip .5cm\hbox to 0em{%
  \includegraphics[width=\columnwidth]{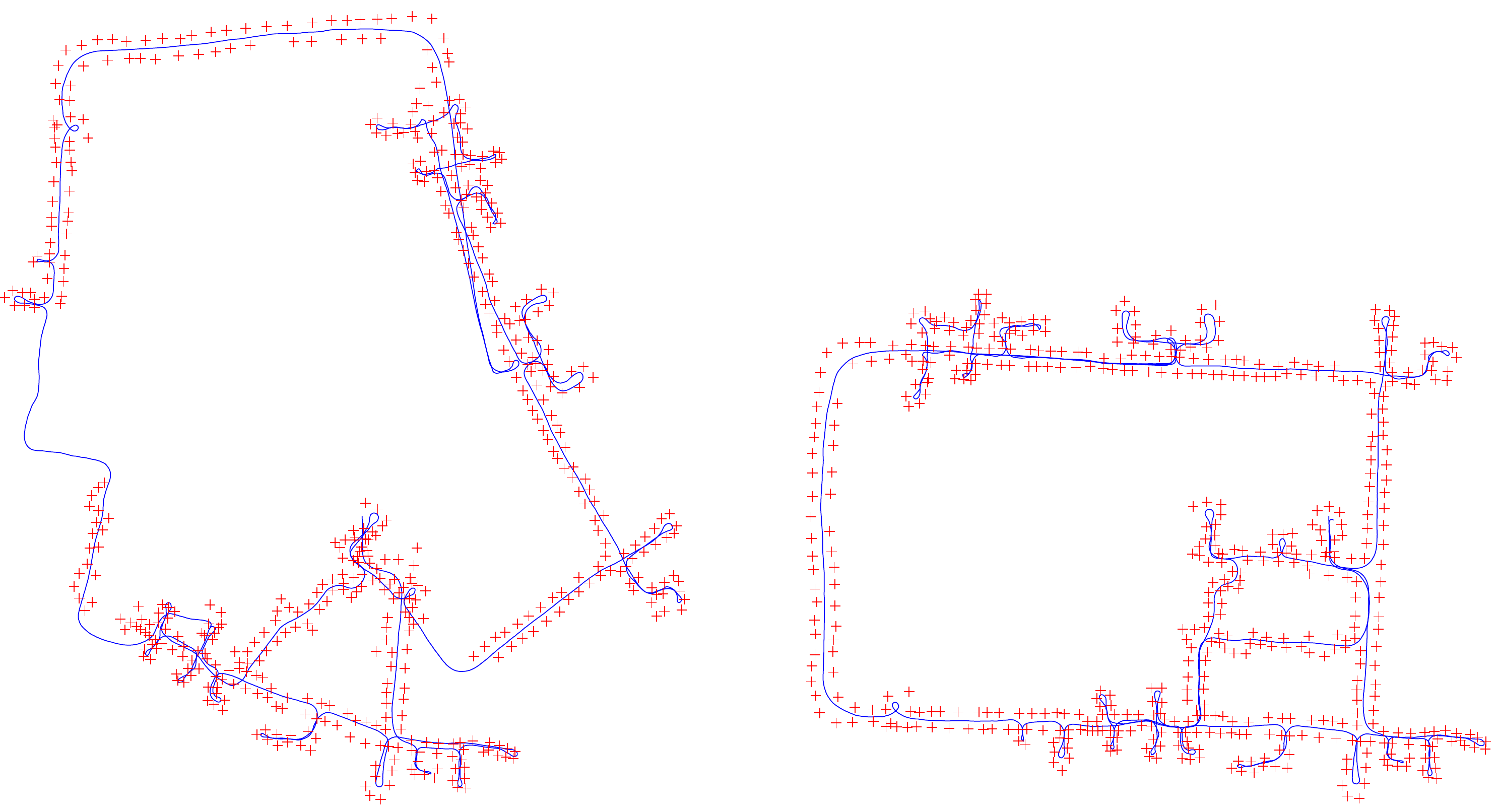}\hss}}%
  \hbox{\copy0%
  \vbox to 0ex{\vskip-\ht0{\vbox{\hfill\includegraphics{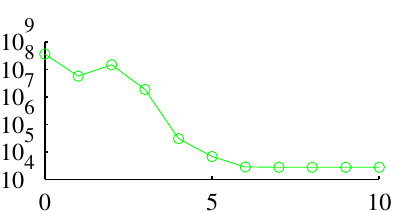}}}\vss}
  }
  \caption{The DLR data set \cite{SpatialCognitionDataSet} before (left) and after (right)
  Gauss-Newton optimization. We also show the residual sum of squares over
  iteration steps (top, right). No comparison is made, as in 2D
the \mplus-operator only encapsulates the handling of angular periodicity.}
  \label{fig:dlr}
\end{figure}

\begin{figure}
\centering
  \includegraphics[width=\columnwidth]{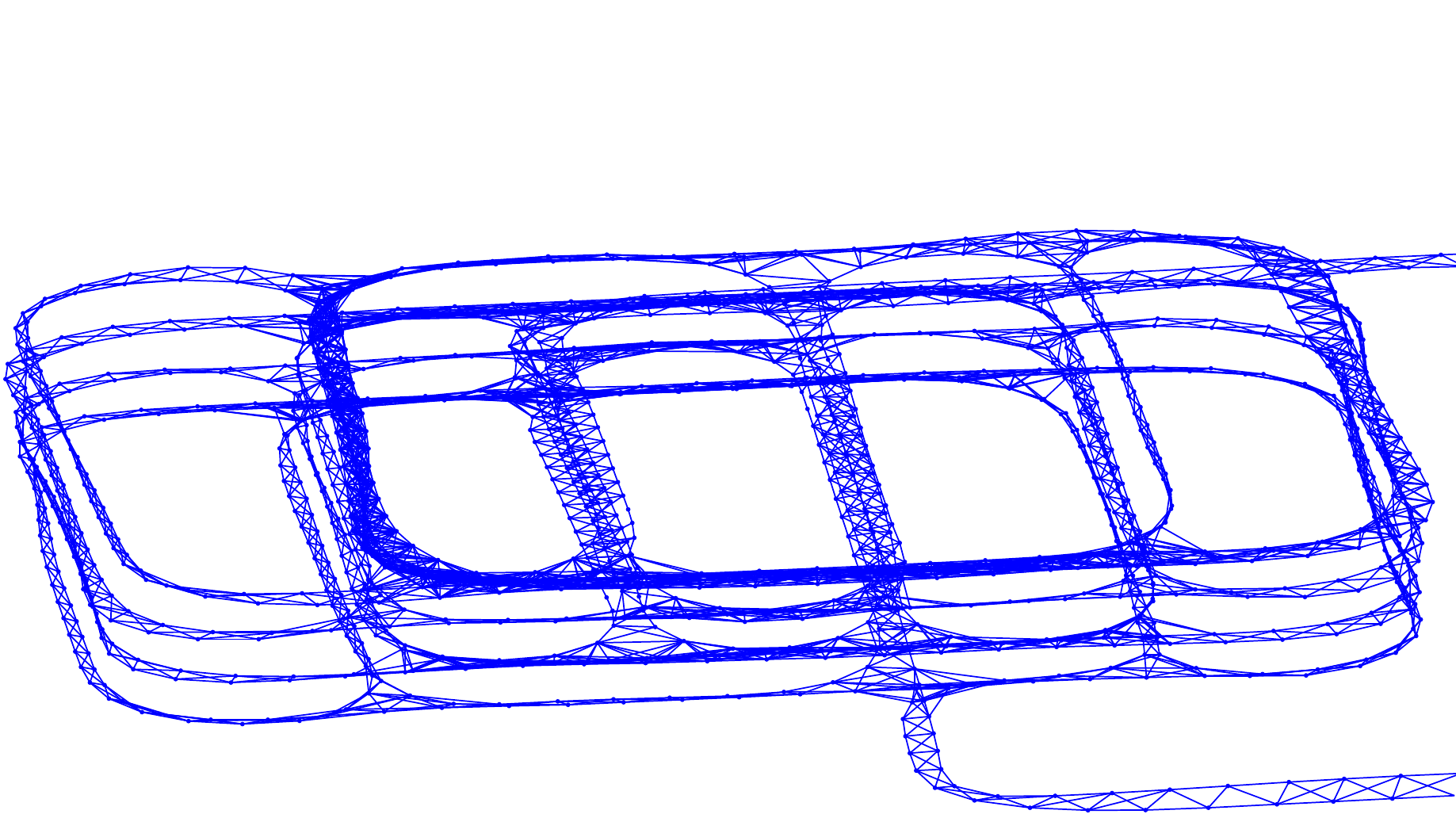}
  \includegraphics{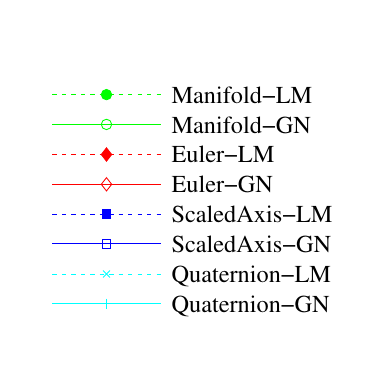} \;
  \includegraphics{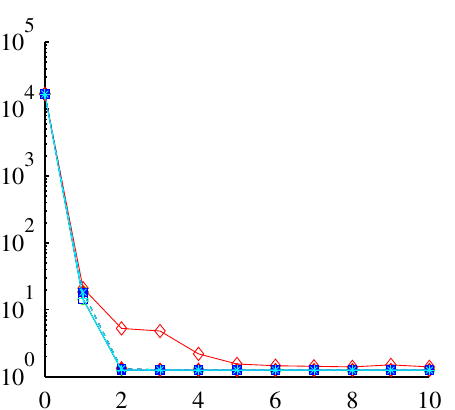}
  
  \caption{Residual sum of squares over iteration steps of Gauss-Newton (GN) and
  Levenberg-Marquardt (LM) optimization on the Stanford parking garage data set as used in~\cite{Grisetti2009Nonlinear}.
  Gauss-Newton with Euler-angles is clearly inferior to all
  other representations, however being far away
  from singularities, it still converges.}
  \label{fig:stanfordRSS}
\end{figure}

Second, for a quantitative comparison, we use the simulated dataset from~\cite[supplement]{KaessTRO2008}, 
shown in Figure~\ref{fig:iSAM:result:manifold} and investigate
how the different state representations behave under increasing noise levels.
Figure~\ref{fig:sphere400RSS} shows the results.

\begin{figure}[tbp]
  \includegraphics{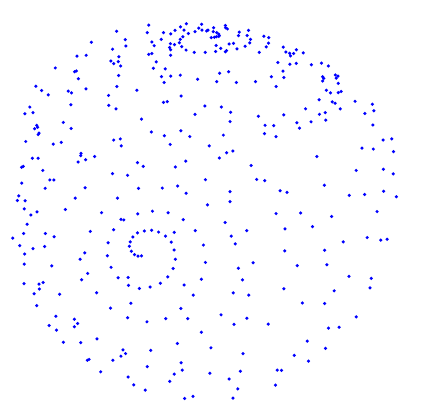}
  \includegraphics{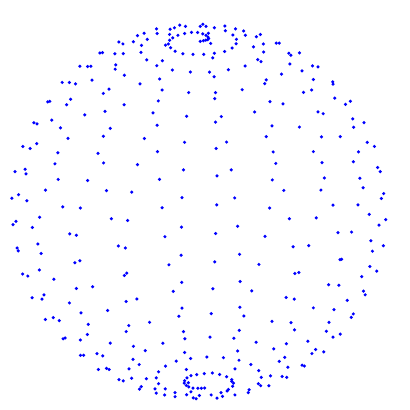}
  \caption{The sphere400 dataset~\cite{KaessTRO2008} generated by a
    virtual robot driving on a 3D sphere. It consists of a set
    of 400~three-dimensional poses and about 780~noisy constraints
    between them. The constraints stem from motion between consecutive poses or
    relations to previously visited poses. Poses are
    initialized from motion constraints (left) and then optimized with our \emph{SLoM} framework.
    }
    \label{fig:iSAM:result:manifold}
\end{figure}

\begin{figure}
\centering
  \includegraphics{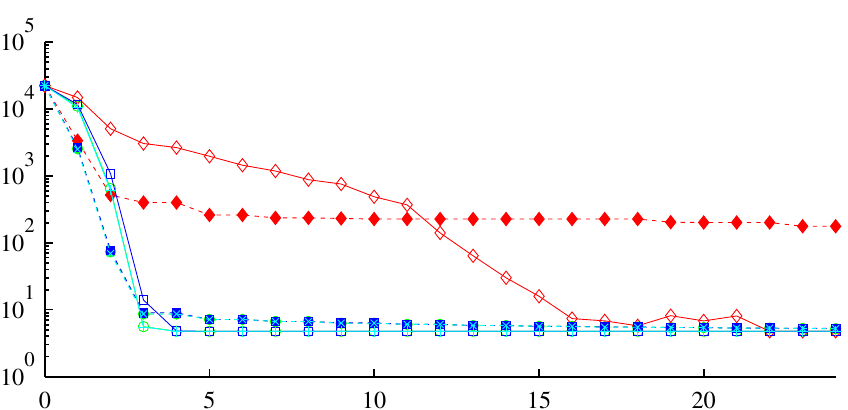}
  \includegraphics{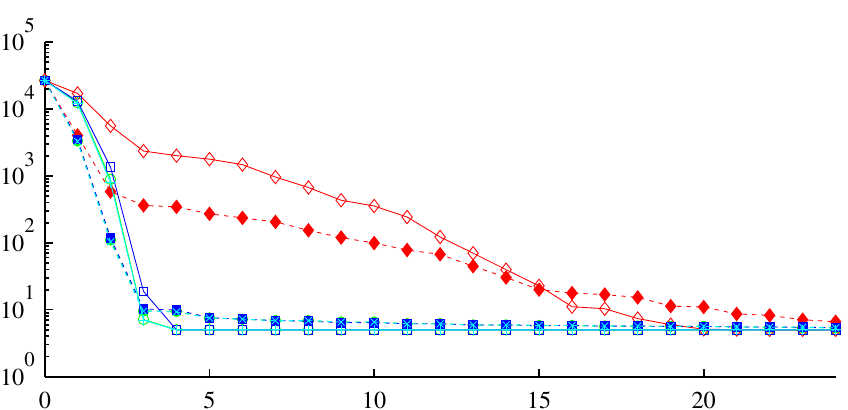}
  \includegraphics{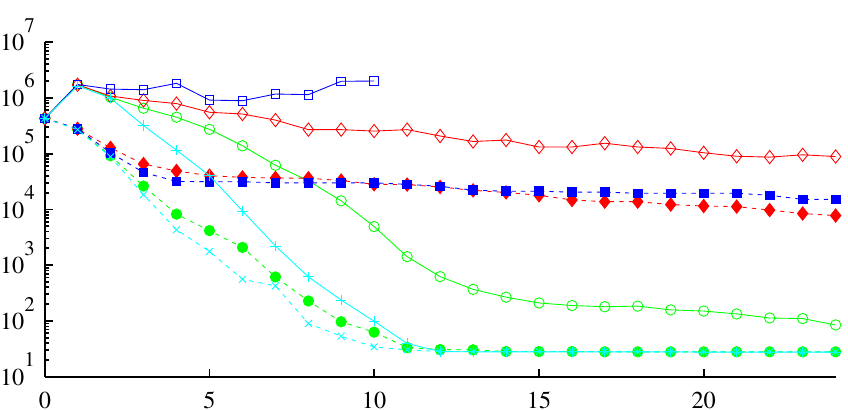}
  
  \caption{Residual sum of squares over iteration steps of Gauss-Newton and
  Levenberg-Marquardt with different state representations on the dataset
  in Fig.\@~\ref{fig:iSAM:result:manifold} (see Figure~\ref{fig:stanfordRSS} for legend).
  The same data set was optimized using original noise (top) and with additional
  noises of 0.01 (middle) and 0.1 rad/m (bottom). For the latter two, 
  the median of 31 runs is plotted. Plots ending unfinished indicate that more
  than half the optimizations could not be finished due to a singularity.
  The \mplus-method clearly out-performs the singular
  representations. For the high noise-level the
  quaternion representation is slightly better. However, when comparing run-times instead of
  iteration counts, the latter is slower due to the additional measurements
  and the larger state-space (7~DOF instead of 6~DOF per pose). 
  Computation times were 63\,ms for the 4D quaternion
  and 42\,ms for the \mplus-approach per step. This fits well to 
  the nominal factor of $(\tfrac76)^3\approx1.59$ for the $O(n^3)$-matrix decomposition.
  For  Euler angle and matrix exponential,
  the evaluation took longer with
  times of 80\,ms and 62\,ms per step probably due to not hand-tuned code and 
  the high number of trigonometric functions involved. 
   }
  \label{fig:sphere400RSS}
\end{figure}

\end{modified}


\section{Conclusions}

We have presented a principled way of providing a local vector-space
view of a manifold $\S$ for use with sensor fusion algorithms. We have achieved this by means of an operator
$\mplus:\S\times\R^n\to\S$ that adds a small vector-valued
perturbation to a state in $\S$ and an inverse operator
$\mmnus:\S\times\S\to\R^n$ that computes the
vector-valued perturbation turning one state into another. A space equipped
with such operators is called a \mplus-manifold.

We have axiomatized this approach and lifted the concepts 
of Gaussian distribution, mean, and covariance to \mplus-manifolds
therewith. The \mpm operators allow for the integration of manifolds into generic estimation
algorithms such as least-squares or the UKF mainly by replacing $+$ and $-$ with
$\mplus$ and $\mmnus$. For the UKF additionally the 
computation of the mean and the covariance update are modified.

The \bpm is not only an abstract mathematical framework but also a
software engineering toolkit for implementing estimation
algorithms. In the form of our Manifold Toolkit (\emph{MTK})
implementation (and its MATLAB variant MTKM), it automatically derives \mpm operators for compound
manifolds and mediates between a flat-vector and a
structured-components view.


\hyphenation{SFB/-TR}
\section{Acknowledgements}

This work has been partially supported by the German Research
Foundation (DFG) under grant SFB/TR~8 Spatial Cognition, as well as by
the German Federal Ministry of Education and Research (BMBF) under
grant 01IS09044B and grant 01IW10002 (SHIP -- Semantic Integration of Heterogeneous Processes).



\appendix

\section*{Appendix}

\section{Mathematical Analysis of \mplus-Manifolds}\label{sec:formalization}

\begin{modified}

In Section \ref{sec:BoxplusMethod} we have introduced the $\mplus$-method
from a conceptual point of view, including the axiomatization of $\mplus$-manifolds and the generalization of the 
probabilistic notions of expected value, covariance, and Gaussian distribution
from vector spaces to $\mplus$-manifolds. We will now underpin this discussion
with mathematical proofs.

\subsection{\mplus-Manifolds}

First, we recall the textbook definition of manifolds~(see,
e.g.,~\cite{SmoothManifolds}). We simplify certain aspects not
relevant within the context of our method; this does not affect formal
correctness. In particular, we view a manifold $\S$ as embedded as a
subset $\S\subset\R^s$ into Euclidean space $\R^s$ from the outset;
this is without loss of generality by Whitney's embedding theorem, and
simplifies the presentation for our purposes.

\hyphenation{homeo-mor-phism}
\begin{dfn}[Manifold~\cite{SmoothManifolds}]\label{dfn:manifold}
  \newcommand\aInA{{\alpha\in A}} A ($C^k$-, or smooth) \defi{manifold} is a pair
  $(\S,\F)$ (usually denoted  just $\S$) consisting of a connected set
  $\S\subset\R^s$ and an \emph{atlas} $\F=(U_\alpha,
  \phi_\alpha)_\aInA$, i.e.\ a family of \emph{charts}
  $(U_\alpha,\phi_\alpha)$ consisting of an open subset $U_\alpha$ of
  $\S$ and a homeomorphism $\phi_\alpha:U_\alpha\to V_\alpha$ of
  $U_\alpha$ to an open subset $V_\alpha\subset\R^n$. Here, $U_\alpha$
  being open in $\S$ means that there is an open set
  $\tilde{U}_\alpha\subset\R^s$ such that $U_\alpha=\S\cap\tilde{U}_\alpha$. These data are
  subject to the following requirements.
\begin{enumerate}
  \item The charts in $\F$ cover $\S$, i.e.\ $\S=\bigcup_\aInA U_\alpha$.
  \item \label{dfn:manifold:smoothtransition}
        If $U_\alpha\cap U_\beta\neq\emptyset$, the \defi{transition map}
\begin{equation} \label{eq:manifold:smoothtransition}
\phi_\alpha\circ\phi_\beta\inv:
     \phi_\beta(U_\alpha\cap U_\beta)
     \to\phi_\alpha(U_\alpha\cap U_\beta)
\end{equation}
is a $C^k$-diffeomorphism.
\end{enumerate}
The number $n$ is called the \defi{dimension} or the number of
\defi{degrees of freedom} of $\S$.
\end{dfn}

We recall the generalization of the definition of \emph{smoothness},
i.e.\@ being $k$ times differentiable, to functions defined on
arbitrary (not necessarily open) subsets $\S\subset\R^s$:

\begin{dfn}[Smooth Function]
\label{dfn:smooth-function}
For $\S\subset\R^s$, a function $f:\S\to\R^n$ is called \defi{smooth},
i.e. $C^k$, in $x\in\S$ if there exists an open neighbourhood
$U\subset\R^s$ of $x$ and a smooth function $\tilde f:U\to\R^n$ that
extends $f|_{U\cap\S}$.
\end{dfn}

Next we show that every \mplus-manifold is indeed a manifold, justifying the name. 
The reverse is not true in general.

\begin{lem}\label{lem:boxplus=manifold}
Every \mplus-manifold is a manifold, with the atlas $(U_x,\phi_x)_{x\in\S}$ where
\begin{gather}
  U_x=\setwith{y}{y\mmnus x\in V}\\
   \phi_x: U_x \to V,\; y \mapsto y\mmnus x.
\end{gather}
\end{lem}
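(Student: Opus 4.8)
The plan is to verify directly that the proposed data $(U_x,\phi_x)_{x\in\S}$ satisfy every clause of Definition~\ref{dfn:manifold}. I would organise the verification into four groups: (i) that each $\phi_x$ is a bijection of $U_x$ onto the open set $V$, with inverse the map $\psi_x\colon\delta\mapsto x\mplus\delta$; (ii) that the charts cover $\S$ and that $\S$ is connected; (iii) that each $U_x$ is open in $\S$ and each $\phi_x$ is bicontinuous, so that the charts are genuine homeomorphisms onto open subsets of $\R^n$; and (iv) that the transition maps are $C^k$-diffeomorphisms. The axioms \eqref{eq:frm:axioms} feed into these groups almost one-to-one.

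For group (i), the inverse of $\phi_x$ is forced by the axioms: $\phi_x(x\mplus\delta)=(x\mplus\delta)\mmnus x=\delta$ for $\delta\in V$ by \eqref{ax:inj}, while $\psi_x(\phi_x(y))=x\mplus(y\mmnus x)=y$ by \eqref{ax:surj}; together with the observation that $y\mmnus x\in V$ exactly when $y\in U_x$, this shows $\phi_x$ and $\psi_x$ are mutually inverse bijections between $U_x$ and $V$. For group (ii), $x\in U_x$ because $x\mmnus x=(x\mplus 0)\mmnus x=0\in V$ by \eqref{ax:zero} and \eqref{ax:inj}, so the charts cover $\S$; and \eqref{ax:surj} shows $\S$ equals the image of the connected set $\R^n$ under the continuous map $\psi_x$, whence $\S$ is connected.

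The hard part will be group (iii), specifically showing that $U_x$ is open in $\S$ in the subspace sense of Definition~\ref{dfn:manifold}; this is where the smoothness hypotheses, rather than the purely algebraic axioms, must be used. My plan is to exploit Definition~\ref{dfn:smooth-function}: since $y\mapsto y\mmnus x$ is smooth on $U_x$, at each $y_0\in U_x$ there is an open neighbourhood $W\subset\R^s$ of $y_0$ and a smooth $\tilde f\colon W\to\R^n$ agreeing with $y\mmnus x$ on $W\cap\S$. Because $\tilde f$ is continuous and $\tilde f(y_0)=y_0\mmnus x\in V$ with $V$ open, shrinking $W$ makes $\tilde f(W)\subset V$, so every $y\in W\cap\S$ satisfies $y\mmnus x\in V$, i.e.\ lies in $U_x$. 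This exhibits $U_x$ as a union of sets of the form $\S\cap W$ and proves openness. Bicontinuity of $\phi_x$ is then immediate: $\phi_x$ is continuous on $U_x$ by smoothness, and $\phi_x\inv=\psi_x$ is continuous as the restriction of the smooth map $\delta\mapsto x\mplus\delta$.

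Finally, for group (iv) I would write the transition map from the chart at $x'$ to the chart at $x$ as $\delta\mapsto(x'\mplus\delta)\mmnus x=\phi_x\circ\psi_{x'}$, defined on the set $\phi_{x'}(U_x\cap U_{x'})\subset V$, which is open in $\R^n$ since $U_x\cap U_{x'}$ is open in $\S$ and $\phi_{x'}$ is a homeomorphism. On this domain $\psi_{x'}(\delta)\in U_x$, so one may replace $\phi_x$ by its local smooth extension $\tilde f$ from group (iii); the composition $\tilde f\circ\psi_{x'}$ of ordinary smooth maps $\R^n\to\R^s\to\R^n$ is then $C^k$, and the same argument with $x$ and $x'$ interchanged handles the inverse. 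Hence all transition maps are $C^k$-diffeomorphisms, completing the verification. I expect axiom \eqref{ax:triangle} to play no role here, consistent with its being needed only for the metric and probabilistic statements elsewhere.
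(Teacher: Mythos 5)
Your proposal is correct and follows the same overall plan as the paper's proof: the same atlas, the same use of Axioms \eqref{ax:zero}--\eqref{ax:inj} for bijectivity and covering, and the same observation that \eqref{ax:triangle} plays no role here. Two steps differ in detail. For connectedness the paper constructs the explicit path $\lambda\mapsto x\mplus(\lambda(y\mmnus x))$ from $x$ to $y$, whereas you present $\S$ as the continuous image of the connected set $\R^n$ under $\delta\mapsto x\mplus\delta$; both are valid. More substantively, on the openness of $U_x$ the paper says only that $U_x$ is the preimage of $V$ under the continuous $\phi_x$ --- which, read literally, is circular, since $\phi_x$ is defined only on $U_x$ and the axioms guarantee continuity of $y\mapsto y\mmnus x$ only there, not on all of $\S$ (and indeed, e.g.\ on $S^2$ this map is discontinuous at the antipode of $x$, which lies outside $U_x$). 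Your argument via the local smooth extensions $\tilde f$ of Definition~\ref{dfn:smooth-function} --- shrink the ambient neighbourhood $W$ of $y_0$ until $\tilde f(W)\subset V$, so that $W\cap\S\subset U_x$ --- is the correct way to close this gap, and the same extension device makes your verification that the transition maps are $C^k$ rigorous where the paper merely asserts ``composite of diffeomorphisms.'' In short, your proof matches the paper's route and repairs its weakest step.
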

(This result will be sharpened later in
Corollary~\ref{corl:embedded-submanifold}.)
\begin{proof}
  $\S$ is connected, as $\gamma:\lambda \mapsto x\mplus(\lambda (y\mmnus x))$
  is a path from $\gamma(0)=x$ by \eqref{ax:zero} to $\gamma(1)=y$ by \eqref{ax:surj}.
  From \eqref{ax:inj} we have that $\phi_x\inv(\delta)=x\mplus\delta$
  is injective on $V$, and therefore bijective onto its image
  $U_x$. As both $\phi_x$ and $\phi_x\inv$ are required to be smooth,
  $\phi_x$ is a diffeomorphism, in particular a homeomorphism. The set
  $U_x$ is open in $\S$, as it is the preimage of $V$ under the
  continuous function $\phi_x$, and since $x\in U_x$ we have
  $\S=\bigcup_{x\in\S}U_x$.
  Finally, the transition map $\phi_x\circ\phi_y\inv$ is a
  composite of diffeomorphisms and therefore diffeomorphic.
\end{proof}

\subsection{Induced Metric}

\begin{lem}\label{lem:metric}
  The operation \mmnus defines a metric $\d$ on $\S$ by
\begin{equation}\label{eq:form:metric-dfn}
\d_\S(x,y):=\norm{y\mmnus_\S x}.
\end{equation}
\end{lem}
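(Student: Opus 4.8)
The plan is to verify the four defining properties of a metric—nonnegativity, the identity of indiscernibles, symmetry, and the triangle inequality—one at a time, drawing only on the axioms \eqref{eq:frm:axioms} together with the fact that $\norm{\cdot}$ is a norm on $\R^n$. Nonnegativity is immediate, since $\d_\S(x,y)=\norm{y\mmnus x}$ with $y\mmnus x\in\R^n$. For the identity of indiscernibles I would first record that $x\mmnus x=0$: by \eqref{ax:zero} we have $x\mplus 0=x$, and since $0\in V$, axiom \eqref{ax:inj} gives $x\mmnus x=(x\mplus 0)\mmnus x=0$, so $\d_\S(x,x)=0$. Conversely, if $\d_\S(x,y)=0$ then $y\mmnus x=0$, and applying \eqref{ax:surj} followed by \eqref{ax:zero} yields $y=x\mplus(y\mmnus x)=x\mplus 0=x$.

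The two interesting properties are symmetry and the triangle inequality, and both hinge on the contraction axiom \eqref{ax:triangle}. For symmetry I would instantiate \eqref{ax:triangle} with base point $x$ and $\delta_1=0$, $\delta_2=y\mmnus x$. Using $x\mplus 0=x$ from \eqref{ax:zero} and $x\mplus(y\mmnus x)=y$ from \eqref{ax:surj}, the left-hand side becomes $\norm{x\mmnus y}$ and the right-hand side $\norm{y\mmnus x}$, giving $\norm{x\mmnus y}\le\norm{y\mmnus x}$. Interchanging the roles of $x$ and $y$ yields the reverse inequality, and hence $\d_\S(x,y)=\d_\S(y,x)$.

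For the triangle inequality I would instantiate \eqref{ax:triangle} with base point $y$ and $\delta_1=z\mmnus y$, $\delta_2=x\mmnus y$. By \eqref{ax:surj} we have $y\mplus(z\mmnus y)=z$ and $y\mplus(x\mmnus y)=x$, so the axiom gives
\begin{equation}
\norm{z\mmnus x}\le\norm{(z\mmnus y)-(x\mmnus y)}.
\end{equation}
Applying the ordinary triangle inequality in $\R^n$ to the right-hand side and invoking the symmetry just established (so that $\norm{x\mmnus y}=\norm{y\mmnus x}$), I obtain
\begin{equation}
\d_\S(x,z)=\norm{z\mmnus x}\le\norm{z\mmnus y}+\norm{y\mmnus x}=\d_\S(y,z)+\d_\S(x,y),
\end{equation}
as required.

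I expect the main obstacle to be extracting symmetry from the deliberately asymmetric axiom \eqref{ax:triangle}, which only bounds differences of points sharing a common $\mplus$-base point. The key realization is that choosing $\delta_1=0$ forces one of the two points to coincide with the base point, while \eqref{ax:surj} pins the other down to the intended state; a mirrored instantiation then closes the inequality. The same base-point manoeuvre, this time centered at $y$, drives the triangle inequality. A minor but essential point is that \eqref{ax:triangle} is assumed for \emph{all} $\delta_1,\delta_2\in\R^n$ rather than merely for $\delta_i\in V$, which is exactly what legitimizes the substitutions $\delta=y\mmnus x$ even when $x$ and $y$ are far apart.
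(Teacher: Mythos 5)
Your proof is correct and follows essentially the same route as the paper: symmetry is obtained from Axiom~\eqref{ax:triangle} with $\delta_1=0$ and $\delta_2$ the \mmnus-difference, and the triangle inequality from Axiom~\eqref{ax:triangle} based at the intermediate point $y$ followed by the ordinary triangle inequality in $\R^n$. Your treatment of the identity of indiscernibles is in fact slightly more complete than the paper's, which attributes positive definiteness only to Axiom~\eqref{ax:zero}, whereas you correctly note that \eqref{ax:inj} (with $0\in V$) and \eqref{ax:surj} are also needed.
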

\begin{proof}
Positive definiteness of $\d$ follows from  Axiom~\eqref{ax:zero} and
positive definiteness of $\norm{\cdot}$.

Symmetry can be shown using \eqref{ax:triangle}:
\begin{align}
\d(x,y)&=
\norm{y\mmnus x}
  =\norm{(y\mplus0)\mmnus(y\mplus(x\mmnus y))} \nonumber\\
  &\le\norm{x\mmnus y} = \d(y,x)
\end{align}
and symmetrically, which implies equality.
The triangle inequality also follows from \eqref{ax:triangle}:
\begin{align}
\d(x,z)
  &= \norm{z\mmnus x}\\
  &= \norm{(y\mplus(z\mmnus y))\mmnus(y\mplus(x\mmnus y))}\\
  &\le\norm{(z\mmnus y)-(x\mmnus y)}\\
  &\le\norm{(x\mmnus y)}+\norm{(z\mmnus y)}\\
  &= \d(x,y) + \d(y,z) \qedhere
\end{align} 
\end{proof}

\subsection{Smooth Functions} 
\label{sec:form:smooth-func}

\begin{lem}\label{lem:smooth-equiv}
  For a map $f:\S\to\M$ between $\mplus$-manifolds $\S$ and $\M$, the
  following are equivalent for every $x\in\S$:
\begin{enumerate}
\item $f$ is smooth in $x$ (Definition~\ref{dfn:smooth-function})
        \label{lem:smooth-equiv1}
      \item $f(x\mplus_\S\delta)\mmnus_\M z$ is smooth in $\delta$ at
        $\delta=0$ whenever $z\in\M$ is such that $f(x)\in U_z$.
        \label{lem:smooth-equiv2}
\end{enumerate}
\end{lem}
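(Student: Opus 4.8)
The plan is to prove the two implications separately, in each case reducing the intrinsic ``smooth on a subset'' notion of Definition~\ref{dfn:smooth-function} to ordinary smoothness of a composition of maps between open subsets of Euclidean spaces. Throughout I would use the two smoothness properties built into Definition~\ref{dfn:boxplusmanifold} ($\delta\mapsto x\mplus_\S\delta$ smooth on $\R^n$, and $y\mapsto y\mmnus x$ smooth on $U_x$), the chart structure of Lemma~\ref{lem:boxplus=manifold}, and the pointwise axioms \eqref{ax:zero}--\eqref{ax:inj}. One preliminary observation will be used repeatedly: combining \eqref{ax:zero} with \eqref{ax:inj} at $\delta=0$ gives $y\mmnus y=0\in V$ for every $y$, so that $f(x)\in U_{f(x)}$ and the choice $z=f(x)$ is always admissible in statement~\ref{lem:smooth-equiv2}.

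For \ref{lem:smooth-equiv1}$\Rightarrow$\ref{lem:smooth-equiv2}, I would assume $f$ smooth at $x$, so it admits a smooth ambient extension $\tilde f$ on an open $U\ni x$ in $\R^s$. Fixing an admissible $z$, i.e.\ with $f(x)\in U_z$, the map $y\mapsto y\mmnus_\M z$ is smooth on $U_z$ and hence extends to a smooth $\tilde m_z$ on some ambient open $W\supset U_z$. For $\delta$ near $0$ the point $s(\delta):=x\mplus_\S\delta$ lies in $\S\cap U$, so $f(s(\delta))=\tilde f(s(\delta))$; by continuity $\tilde f(s(\delta))\in W$, and since $f(s(\delta))\in\M$ we obtain, locally,
\[
f(x\mplus_\S\delta)\mmnus_\M z=\tilde m_z\bigl(\tilde f(x\mplus_\S\delta)\bigr),
\]
a composition of smooth maps between open Euclidean sets, hence smooth at $\delta=0$.

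For \ref{lem:smooth-equiv2}$\Rightarrow$\ref{lem:smooth-equiv1}, I would take $z=f(x)$ (admissible by the preliminary observation) and set $h(\delta):=f(x\mplus_\S\delta)\mmnus_\M z$, which the hypothesis makes smooth on some ball $B_\rho(0)\subset\R^n$. The key algebraic identity, from \eqref{ax:surj} in $\M$, is $f(x\mplus_\S\delta)=z\mplus_\M h(\delta)$, which I would use to reconstruct $f$ near $x$ through the $\S$-chart. Since $y\mapsto y\mmnus_\S x$ is smooth on $U_x$, it extends to a smooth $\tilde\phi_x$ on an ambient open $U\ni x$ with $\tilde\phi_x(x)=x\mmnus x=0$. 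Defining
\[
\tilde f(p):=z\mplus_\M h\bigl(\tilde\phi_x(p)\bigr),
\]
this is smooth on a small enough open $U'\ni x$ chosen so that $\tilde\phi_x(U')\subset B_\rho(0)$, being a composition of $\tilde\phi_x$, $h$, and the smooth map $\eta\mapsto z\mplus_\M\eta$. Shrinking $U'$ so that $\S\cap U'\subset U_x$, for $y\in\S\cap U'$ I get $\tilde\phi_x(y)=y\mmnus_\S x$ and $x\mplus_\S(y\mmnus_\S x)=y$ by \eqref{ax:surj}, whence $h(y\mmnus_\S x)=f(y)\mmnus_\M z$ and $\tilde f(y)=z\mplus_\M(f(y)\mmnus_\M z)=f(y)$, again by \eqref{ax:surj}. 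Thus $\tilde f$ extends $f$ near $x$, so $f$ is smooth at $x$.

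The compositions themselves are routine; the step requiring care — and the main obstacle — is the passage between the intrinsic ``smooth on a subset'' notion and genuine ambient smoothness. Concretely, this is the construction of $\tilde f$ in the second implication and the neighborhood-shrinking argument ensuring that $\tilde\phi_x$, $h$, and $z\mplus_\M(\cdot)$ are simultaneously defined and smooth, and that $\tilde\phi_x$ still coincides with $\mmnus_\S x$ on $\S\cap U'$ so that $\tilde f$ genuinely restricts to $f$.
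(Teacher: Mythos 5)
Your proposal is correct and follows essentially the same route as the paper: the forward implication by composition of smooth maps, and the converse by extending $f$ near $x$ via $\tilde f(y)=z\mplus_\M\bigl(f(x\mplus_\S(y\mmnus_\S x))\mmnus_\M z\bigr)$ using the smooth ambient extension of $y\mapsto y\mmnus_\S x$. Your version merely spells out the admissibility of $z=f(x)$, the neighborhood-shrinking, and the verification that $\tilde f$ restricts to $f$, all of which the paper leaves implicit.
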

\begin{proof}
  \ref{lem:smooth-equiv1} implies \ref{lem:smooth-equiv2}, as the
  concatenation of smooth functions is smooth.

  For the converse implication, fix $z$ as in \ref{lem:smooth-equiv2},
  let $\S\subset\R^s$ and $\M\subset\R^m$, and let $U\subset\R^s$ be a
  neighbourhood of $x$ such that \mmnus extends smoothly to
  $U\times\{x\}$. Then we extend $f$ smoothly to $\tilde f:U\to\R^m$ by
  \begin{align} 
    \tilde f(y)&= z \mplus_\M (f (x \mplus_\S (y \mmnus_\S x)) \mmnus_\M z).
    \qedhere
  \end{align}
\end{proof}

Replacing \mpm in $\delta\mapsto f(x\mplus\delta)\mmnus z$ by the
induced charts as in Lemma~\ref{lem:boxplus=manifold}, we see that
smoothness corresponds to the classical definition of smooth functions
on manifolds \cite[p.\,32]{SmoothManifolds}:
\begin{equation}
f(x\mplus\delta)\mmnus z = \phi_z(f(\phi_x\inv(\delta))),
\end{equation}
with the right hand side required to be smooth in $\delta$ at
$\delta=\phi_x(x)=0$.

A direct consequence of this fact is
\begin{corl}\label{corl:embedded-submanifold}
  Every $\mplus$-manifold $\S\subset \R^s$ is an embedded submanifold of $\R^s$.
\end{corl}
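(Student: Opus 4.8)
The plan is to realize $\S$ as an embedded submanifold by showing that the inclusion $\iota:\S\to\R^s$, $y\mapsto y$, is a smooth embedding, i.e.\ a smooth immersion that is simultaneously a homeomorphism onto its image. This splits into three claims: that the manifold topology on $\S$ agrees with the subspace topology, that $\iota$ is smooth, and that $\iota$ is an immersion. The first is already in hand from Lemma~\ref{lem:boxplus=manifold}, where the chart domains $U_x$ were shown to be open in the subspace topology and the $\phi_x$ to be homeomorphisms onto $V$; hence the atlas topology coincides with the topology inherited from $\R^s$, and $\iota$ is a topological embedding for free. All that remains is to upgrade it to a \emph{smooth} embedding.

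For smoothness of $\iota$ I would invoke Lemma~\ref{lem:smooth-equiv}, regarding the codomain $\R^s$ as a \mplus-manifold with $\mplus=+$ and $\mmnus=-$. Its condition~\ref{lem:smooth-equiv2} then requires $\delta\mapsto\iota(x\mplus_\S\delta)\mmnus_{\R^s}z=(x\mplus\delta)-z$ to be smooth in $\delta$ at $0$, which is immediate for any admissible $z$ since $\delta\mapsto x\mplus\delta$ is required to be smooth as an $\R^s$-valued map in Definition~\ref{dfn:boxplusmanifold}. Equivalently, read in the chart $\phi_x$ the inclusion is just $\phi_x\inv:\delta\mapsto x\mplus\delta$, smooth by definition. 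So $\iota$ is smooth at every $x\in\S$.

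The immersion property is where the axioms enter. Fix $x\in\S$ and let $A:=D_\delta(x\mplus\delta)|_{\delta=0}$ be the $s\times n$ Jacobian of the smooth $\R^s$-valued extension of $\mplus$; this is the coordinate expression of $d\iota_x$ in the chart $\phi_x$. Axiom~\eqref{ax:inj} gives the identity $(x\mplus\delta)\mmnus x=\delta$ for $\delta\in V$; differentiating it at $\delta=0$ and applying the chain rule to the smooth extension of $y\mapsto y\mmnus x$ yields $BA=\I_n$, where $B:=D_y(y\mmnus x)|_{y=x}$. Thus $A$ admits a left inverse, has rank $n$, and $d\iota_x$ is injective, so $\iota$ is an immersion. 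Together with the two preceding points, $\iota$ is a smooth embedding and $\S$ is therefore an embedded submanifold of $\R^s$.

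The hard part will be the rigor of this last differential computation rather than its idea: since smoothness in the sense of Definition~\ref{dfn:smooth-function} is phrased via smooth extensions to open neighborhoods of $\R^s$, I would need to check that $A$ and $B$ are well defined independently of the chosen extensions and that the chain rule applies to the composite of these extensions so as to recover $BA=\I_n$ from the intrinsic identity~\eqref{ax:inj}. Once this is granted, the corollary follows directly from Lemmas~\ref{lem:boxplus=manifold} and~\ref{lem:smooth-equiv}, consistent with it being advertised as an immediate consequence of the equivalence between the \mplus-notion and the classical notion of smoothness.
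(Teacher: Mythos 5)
Your proposal is correct, and the first two steps (subspace topology via Lemma~\ref{lem:boxplus=manifold}, smoothness of the inclusion from the definition of smoothness/Lemma~\ref{lem:smooth-equiv}) coincide with the paper's. Where you genuinely diverge is the immersion step. The paper argues abstractly: viewing tangent vectors as derivations on smooth real-valued functions, injectivity of $d\iota_x$ reduces to the statement that every smooth real-valued function on an open subset of $\S$ extends to a smooth function on an open subset of $\R^s$ --- which is exactly what Definition~\ref{dfn:smooth-function} postulates, so nothing further needs to be computed. You instead differentiate the intrinsic identity $(x\mplus\delta)\mmnus x=\delta$ of Axiom~\eqref{ax:inj} to produce a left inverse $B$ for the Jacobian $A$ of $\delta\mapsto x\mplus\delta$ at $0$, whence $\operatorname{rank}A=n$. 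This is a perfectly sound and arguably more self-contained route: it uses only the chain rule and avoids the derivation picture of tangent spaces. The rigor issue you flag at the end is in fact harmless: $A$ needs no extension at all, since $\delta\mapsto x\mplus\delta$ is by Definition~\ref{dfn:boxplusmanifold} smooth on all of $\R^n$ as a map into $\R^s$; and for $B$ you only need \emph{some} smooth extension of $y\mapsto y\mmnus x$ near $x$ (guaranteed by Definition~\ref{dfn:smooth-function}), because the composite of that extension with $x\mplus\cdot$ agrees with the identity on a neighborhood of $0$ in $V$, so $BA=\I_n$ holds for the Jacobian of whichever extension you picked, and the conclusion $\operatorname{rank}A=n$ does not depend on $B$. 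What the paper's version buys is brevity and a direct conceptual link between the immersion property and the extension-based definition of smoothness; what yours buys is an explicit, elementary verification that does not lean on the ``local nature'' of tangent spaces as derivations.
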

(Recall that this means that the embedding $\S\into\R^s$ is an
immersion, i.e.\ a smooth map of manifolds that induces an injection
of tangent spaces, and moreover that the topology of the manifold $\S$
is the subspace topology in $\R^s$~\cite{SmoothManifolds}.)
\begin{proof}
  $\S$ carries the subspace topology by construction.  Clearly, the
  injection $\S\into\R^s$ is smooth according to
  Definition~\ref{dfn:smooth-function}, and hence as a map of
  manifolds by the above argument. Since tangent spaces are spaces of
  differential operators on smooth real-valued functions and moreover
  have a local nature~\cite{SmoothManifolds}, the immersion property
  amounts to every smooth real-valued function on an open subset of
  $\S$ extending to a smooth function on an open subset of $\R^s$,
  which is precisely the content of
  Definition~\ref{dfn:smooth-function}.
\end{proof}

\subsection{Isomorphic \mplus-Manifolds}
For every type of structure, one has a notion of homomorphism, which
describes mappings that preserve the relevant structure. The natural
notion of morphism $\phi:\S\to\M$ of \mplus-manifolds is a smooth map
$\phi:\S\to\M$ that is homomorphic w.r.t.\ the algebraic operations
$\mplus$ and $\mmnus$, i.e.\
\begin{align}
    \phi(x\mplus_\S\delta) & = \phi(x) \mplus_\M \delta\\
    \phi(x\mmnus_\S y) & = \phi(x) \mmnus_\M \phi(y).
\end{align}
As usual, an isomorphism is a bijective homomorphism whose inverse is
again a homomorphism. Compatibility of inverses of homomorphisms with
algebraic operations as above is automatic, so that an isomorphism
$\phi:\S\to\M$ of $\mplus$-manifolds is just a diffeomorphism
$\phi:\S\to\M$ (i.e.\ an invertible smooth map with smooth inverse)
that is homomorphic w.r.t.\ $\mplus$ and $\mmnus$.
If such a $\phi$ exists, $\S$ and $\M$ are \emph{isomorphic}. It is
clear that isomorphic \mplus-manifolds are indistinguishable as such,
i.e.\ differ only w.r.t.\ the representation of their elements.  We
will give examples of isomorphic $\mplus$-manifolds in Appendix
\ref{sec:examplemanifolds}; e.g.\@ orthonormal matrices and
unit quaternions form isomorphic $\mplus$-manifolds.

\subsection{Defining Symmetric \mplus-Manifolds}\label{sec:form:symmetric-encaps}
Most \mplus-manifolds arising in practice are manifolds with inherent
symmetries. This can be exploited by defining \mplus at one reference element
and pulling this structure back to the other elements along the
symmetry.

Formally, we describe the following procedure for turning an
$n$-dimensional manifold $\S$ with sufficient symmetry into a
\mplus-manifold. The first step is to define a smooth and surjective
function $\psi:\R^n\to\S$ which is required to be locally
diffeomorphic, \ie for a neighborhood $V$ of $0\in\R^n$ it must have a
smooth inverse $\phi:=\psi\inv$. As $\psi$ is surjective, $\phi$ can
be extended globally to a (not necessarily smooth) function
$\phi:\S\to\R^n$ such that $\psi\circ\phi=\id_\S$.

The next step is to define, for every $x\in\S$, a diffeomorphic
transformation $R_x:\S\to\S$ (visually a ``rotation'') such that
$R_x(\psi(0))= x$. We can then define
\begin{align}
\label{eq:form:practical-encapsulation} 
x\mplus\delta&:=R_x(\psi(\delta)), &
y\mmnus x    &:=\phi(R_x\inv(y)).
\end{align}

Since $x\mplus0=R_x(\psi(0))=x$, Axiom~\eqref{ax:zero} holds under
this construction.  Axiom~\eqref{ax:surj} holds as we require
$\psi\circ\phi=\id_\S$ globally.  Finally, as $\phi\circ\psi=\id_V$
Axiom~\eqref{ax:inj} is fulfilled for $\delta\in V$ as required.
Axiom~\eqref{ax:triangle} depends on $\psi$ and $R_x$ and needs to be
established on a case-by-case basis.

\subsection{Lie-Groups as \mplus-Manifolds}
\label{sec:liegroups}

For connected Lie-groups \cite[Chap.\,20]{SmoothManifolds}, \ie
manifolds with a diffeomorphic group structure, the above steps are
very simple. On the one hand the mapping $\psi:\R^n\to\S$ can be
defined using the exponential map \cite[p.\,522]{SmoothManifolds},
which is a locally diffeomorphic map from a 0-neighborhood in the
Lie-algebra (a vector space diffeomorphic to $\R^n$) to a neighborhood
of the unit element in $\S$. For compact Lie-groups, the exponential
map is also surjective, with global inverse $\log$.

The transformation $R_x$ can be simply defined as $R_x(y):= x \cdot y$
(or alternatively $R_x(y):=y\cdot x$) using the group's
multiplication:
\begin{align}
\label{eq:form:liegroup-encapsulation} 
x\mplus\delta&:=x\cdot \exp(\delta), & y\mmnus x &:=\log(x\inv\cdot y).
\end{align}
Again \eqref{ax:zero}-\eqref{ax:inj} follow from the construction in Appendix \ref{sec:form:symmetric-encaps}.
Axiom \eqref{ax:triangle} reduces to whether
\begin{align}
    &\norm{(x\mplus\delta_1)\mmnus(x\mplus\delta_2)}\\
   &= \norm{\log\left( \left( x\cdot \exp\delta_2\right)^{-1} \cdot \left( x\cdot\exp\delta_1\right) \right) } \\
   &= \norm{\log\left( \exp(-\delta_2) \cdot \exp\delta_1 \right) } \\
   &\stackrel{?}{\le} \norm{\delta_1-\delta_2}.
\end{align}
We do not know of a result that would establish this fact in general,
and instead prove the inequality individually for each case.

\subsection{Cartesian Product of Manifolds}
\label{sec:form:cartesian-product}

\begin{lem}
The Cartesian product of two \mplus-manifolds $\S_1$ and $\S_2$ is a
\mplus-manifold $\S=\S_1\times\S_2$, with $V=V_1\times V_2$ and  
\begin{align}
(x_1, x_2)\mplus\matrxs{\delta_1\\ \delta_2} 
  &= (x_1\mplus_{\S_1}\delta_1, x_2\mplus_{\S_2}\delta_2) \\
(y_1, y_2)\mmnus(x_1, x_2)
  &=\matrxs{y_1\mmnus_{\S_1} x_1\\ y_2\mmnus_{\S_2} x_2}
\end{align}
for $(x_1,x_2),(y_1,y_2)\in\S:=\S_1\times\S_2$ and
$\matrxs{\delta_1\\ \delta_2}\in\R^{n_1}\times\R^{n_2}$.
\end{lem}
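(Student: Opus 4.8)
The plan is to verify each requirement of Definition~\ref{dfn:boxplusmanifold} componentwise. First I would set $s=s_1+s_2$ and $n=n_1+n_2$, so that $\S=\S_1\times\S_2\subset\R^{s_1}\times\R^{s_2}=\R^s$ and the two product operators map as required, and I would take $V=V_1\times V_2$, which is an open neighborhood of $0$ in $\R^{n_1}\times\R^{n_2}=\R^n$ since it is a product of open neighborhoods of the origin.

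For smoothness I would observe that, in each coordinate block, $\delta\mapsto x\mplus\delta$ and $y\mapsto y\mmnus x$ are exactly the corresponding operators of $\S_1$ and $\S_2$, which are smooth by assumption; a map into a product is smooth iff each of its component maps is smooth, and in the non-open case the product of the local smooth extensions of Definition~\ref{dfn:smooth-function} serves as a joint extension. Hence joint smoothness follows, and $U_x=x\mplus V$ factors as $U_{x_1}\times U_{x_2}$.

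Axioms~\eqref{ax:zero}--\eqref{ax:inj} then hold trivially: each is an identity between pairs (or between a pair and a stacked vector), and since $\mplus$ and $\mmnus$ act blockwise, each reduces to the same identity holding separately in $\S_1$ and $\S_2$, which it does by hypothesis. For \eqref{ax:zero} one uses $0=\matrxs{0\\0}$; for \eqref{ax:surj} that $y\mmnus x$ is assembled from the component differences; and for \eqref{ax:inj} that $\delta\in V=V_1\times V_2$ forces each block into the respective $V_i$.

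The only step requiring care is the Lipschitz axiom~\eqref{ax:triangle}. Here I would write $\delta_1=\matrxs{a_1\\b_1}$ and $\delta_2=\matrxs{a_2\\b_2}$ with $a_i\in\R^{n_1}$, $b_i\in\R^{n_2}$, and exploit that the Euclidean norm on $\R^n$ splits as a sum of squared block norms:
\begin{align*}
\norm{(x\mplus\delta_1)\mmnus(x\mplus\delta_2)}^2
&=\norm{(x_1\mplus_{\S_1} a_1)\mmnus_{\S_1}(x_1\mplus_{\S_1} a_2)}^2\\
&\quad+\norm{(x_2\mplus_{\S_2} b_1)\mmnus_{\S_2}(x_2\mplus_{\S_2} b_2)}^2.
\end{align*}
Applying \eqref{ax:triangle} in each factor bounds the two summands by $\norm{a_1-a_2}^2$ and $\norm{b_1-b_2}^2$, whose sum is $\norm{\delta_1-\delta_2}^2$; taking square roots gives the claim. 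This blockwise splitting of the squared norm is the crux of the argument, and it is precisely why the axiom is stated with the $2$-norm. The remaining verifications are purely formal.
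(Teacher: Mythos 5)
Your proof is correct and follows essentially the same route as the paper's: the smoothness requirement and Axioms~\eqref{ax:zero}--\eqref{ax:inj} are verified componentwise, and Axiom~\eqref{ax:triangle} is obtained by splitting the squared Euclidean norm into the two block norms, applying the componentwise inequality, and summing. Your additional remarks on the openness of $V_1\times V_2$ and on assembling the local smooth extensions are details the paper leaves implicit, but the substance of the argument is identical.
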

\begin{proof}

Smoothness of $\mplus$ and $\mmnus$ as well as Axioms \eqref{ax:zero},
\eqref{ax:surj} and \eqref{ax:inj} hold componentwise.
For Axiom~\eqref{ax:triangle}, we see that
\begin{align*}
\norm{(x\mplus\delta)\mmnus(x\mplus\eps)}^2 
&= \norm{(x_1\mplus\delta_1)\mmnus(x_1\mplus\eps_1)}^2  \nonumber\\
& \quad + \norm{(x_2\mplus\delta_2)\mmnus(x_2\mplus\eps_2)}^2\\
&\le\norm{\delta_1-\eps_1}^2+\norm{\delta_2-\eps_2}^2\\
&=\norm{\delta-\eps}^2.\qedhere
\end{align*}
\end{proof}

\subsection{Expected Value on \mplus-Manifolds}
\label{sec:form:expectedValue}

Also using Axiom~\eqref{ax:triangle}, we prove that the 
definition of the expected value by a minimization problem in \eqref{eq:bpm:E-def} 
implies the implicit definition \eqref{eq:bpm:E-implicit}:
\begin{lem}
For a random variable $X:\Omega\rightarrow\S$ and $\mu\in\S$,
\begin{equation}
\E\norm{X\mmnus\mu}^2=\min_{\mu\in\S}! \imp \E(X\mmnus\mu)=0.
\end{equation}
\end{lem}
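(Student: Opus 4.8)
The plan is to avoid any attempt at differentiating $h(\delta):=\E\norm{X\mmnus(\mu\mplus\delta)}^2$ and instead derive a clean one-sided quadratic bound on $h$ directly from axiom \eqref{ax:triangle}, combining it with the minimality of $\mu$. Observe first that, since $\mplus$ maps into $\S$, every $\mu\mplus\delta$ is an admissible competitor, so the hypothesis that $\mu$ minimizes $\E\norm{X\mmnus\mu'}^2$ over $\mu'\in\S$ gives $h(\delta)\ge h(0)$ for all $\delta\in\R^n$, while axiom \eqref{ax:zero} gives $h(0)=\E\norm{X\mmnus\mu}^2$.

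First I would rewrite each realization as $X=\mu\mplus(X\mmnus\mu)$, which is legitimate by axiom \eqref{ax:surj}. Writing $Z:=X\mmnus\mu\in\R^n$, the integrand becomes $\norm{(\mu\mplus Z)\mmnus(\mu\mplus\delta)}^2$, which is exactly the left-hand side of axiom \eqref{ax:triangle} with base point $\mu$ and perturbations $Z$ and $\delta$. Applying \eqref{ax:triangle} pointwise, squaring, and taking expectations yields the key estimate
\begin{equation}
h(\delta)\le\E\norm{Z-\delta}^2=h(0)-2\,\delta\trans m+\norm{\delta}^2,
\end{equation}
where $m:=\E Z=\E(X\mmnus\mu)$; note $m$ is finite because the standing assumption $\E\norm{X\mmnus\mu}^2<\infty$ forces $\E\norm{Z}<\infty$.

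Combining this estimate with $h(\delta)\ge h(0)$ leaves $0\le-2\,\delta\trans m+\norm{\delta}^2$, \ie $2\,\delta\trans m\le\norm{\delta}^2$ for every $\delta\in\R^n$. To conclude I would substitute $\delta=tm$ with $t>0$, divide by $t$, and let $t\to0^+$, which forces $\norm{m}^2\le0$ and hence $m=\E(X\mmnus\mu)=0$.

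The hard part is conceptual rather than computational: the naive route via a stationarity condition $\nabla h(0)=0$ is unavailable, because $\delta\mapsto\mu\mplus\delta$ enters through the second argument of $\mmnus$, in which the operators are not assumed smooth and may even be discontinuous, so $h$ need not be differentiable at $0$. The whole weight of the argument therefore rests on axiom \eqref{ax:triangle}, which supplies the missing infinitesimal control in the robust form of a global quadratic majorant; once that bound is in hand, only elementary vector-space manipulation and a scaling limit remain.
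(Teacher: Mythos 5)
Your proposal is correct and follows essentially the same route as the paper: both arguments rewrite $X=\mu\mplus(X\mmnus\mu)$ via Axiom~\eqref{ax:surj}, compare $\mu$ against competitors of the form $\mu\mplus\delta$, and invoke Axiom~\eqref{ax:triangle} to majorize the objective by $\E\norm{(X\mmnus\mu)-\delta}^2$. The only difference is cosmetic: the paper immediately substitutes the single choice $\delta=\E(X\mmnus\mu)$, obtaining $\E\norm{X\mmnus\mu}^2\le\E\norm{X\mmnus\mu}^2-\norm{\E(X\mmnus\mu)}^2$ in one chain, whereas you keep $\delta$ general and finish with a scaling limit $\delta=tm$, $t\to0^+$.
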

\begin{proof}\newcommand\mmu{\mmnus\mu}\newcommand\mup{\mu\mplus}
Let $\mu:=\argmin_\mu\E\norm{X\mmnus\mu}^2$. Then
\begin{align}
\E&\norm{X\mmu}^2 
\le\E\norm{X\mmnus(\mup\E(X\mmu))}^2 \\
&=\E\norm{(\mup(X\mmu))\mmnus(\mup\E(X\mmu))}^2\\
&\le\E\norm{(X\mmu)-\E(X\mmu)}^2\\
&=\E\norm{X\mmu}^2-\norm{\E(X\mmu)}^2
\end{align}
Hence, $\E(X\mmu)=0$.
\end{proof}

\subsection{(Gaussian) Distributions on \mplus-Manifolds}
\label{sec:form:PDonM}

The basic idea of \eqref{eq:bpm:DistOnManiDef} in Section \ref{sec:bpm:PDonM} was to map a distribution $X:\Omega\to\R^n$ to a
distribution $Y:\Omega\to\S$ by defining $Y:=\mu\mplus X$ for some $\mu\in\S$.
The problem is that in general \mplus is not injective. Thus (infinitely)
many $X$ are mapped to the same $Y$, which makes even simple things such as
computing $p(Y=y)$ for a given $y\in\S$ complicated, not to mention maximizing
likelihoods.

A pragmatic approach is to ``cut off'' the distribution $X$
where $\mplus$ becomes ambiguous, \ie define a
distribution $\tX$ with
\begin{equation} \label{eq:CutOffDistributionDef}
\p(\tX = x) := \p(X = x \mid X \in V) 
\end{equation}
This can be justified because, if $V$ is large compared to the
covariance, $P(X\notin V)$ is small and the
cut-off error is negligible.  In practice, the fact that noise usually
does not really obey a normal distribution leads to a much bigger
error.

Now \eqref{eq:bpm:DistOnManiDef} simplifies to
\begin{equation}
 \p(\mu\mplus\tX=y) = \p(\tX=y\mmnus\mu),
\end{equation}
because $\mplus$ is bijective for $\tX\in V$. 
We also find that for normal distributed noise, the maximum likelihood
solution is the least squares solution.

\begin{lem}
For random variables $X:\Omega\rightarrow\S$, $Z:\Omega\rightarrow\M$, 
a measurement $z\in\M$, and a measurement function
$f:\S\rightarrow\M$, with $f(X)= z\mplus\teps$ and $\teps\sim\Ndp{0}{\Sigma}$
under the precondition $\teps\in V$,
the $x$ with largest likelihood $p(Z=z|X=x,\teps\in V)$ is the one 
that minimizes $\frac12\norm{f(x)\mmnus{}z}^2_{\Sigma}$.
\end{lem}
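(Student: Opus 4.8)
The plan is to reduce the likelihood to the truncated Gaussian density of the residual $f(x)\mmnus z$ and then to observe that every factor other than the exponential is independent of $x$, so that maximizing the likelihood coincides with minimizing the exponent.

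First I would use Axiom~\eqref{ax:inj} to pin down the residual. Since the precondition guarantees $\teps\in V$, the model relation $f(x)=z\mplus\teps$ inverts to $\teps=(z\mplus\teps)\mmnus z=f(x)\mmnus z$. Hence, under the hypothesis $X=x$, the only noise value consistent with the observation $z$ is the deterministic quantity $f(x)\mmnus z\in\R^m$, and the likelihood of the observation is precisely the density of $\teps$ evaluated at this point.

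Next I would make that density explicit via the cut-off construction \eqref{eq:CutOffDistributionDef}. Because $\mplus$ is bijective on $V$ (Lemma~\ref{lem:boxplus=manifold}), conditioning $\teps\sim\Ndp0\Sigma$ on $\teps\in V$ turns the manifold distribution back into an ordinary truncated vector Gaussian, so that
\begin{equation*}
\p(Z=z\mid X=x,\teps\in V)
=\frac{\expp{-\tfrac12\norm{f(x)\mmnus z}^2_\Sigma}}
       {\displaystyle\int_V\expp{-\tfrac12\norm{\delta}^2_\Sigma}\,\d\delta}.
\end{equation*}
The crucial observation is that the normalizing integral is taken over the \emph{fixed} neighbourhood $V$ with the \emph{fixed} covariance $\Sigma$; together with the usual constant $(2\pi)^{-m/2}(\det\Sigma)^{-1/2}$ it forms a factor that does not depend on $x$.

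Finally, since $t\mapsto\expp{-t}$ is strictly decreasing and the prefactor is constant in $x$, the state $x\in\S$ maximizing $\p(Z=z\mid X=x,\teps\in V)$ is exactly the one minimizing $\tfrac12\norm{f(x)\mmnus z}^2_\Sigma$, which is the claim. The step I expect to be delicate is the second one, namely identifying the likelihood with the truncated Gaussian density \emph{without} a leftover, $x$-dependent Jacobian. This is legitimate precisely because we read the manifold density through the $\mplus$-chart at the reference point as in \eqref{eq:CutOffDistributionDef}, where the restriction to $V$ makes $\mplus$ bijective and the substitution $\delta=f(x)\mmnus z$ well defined; it also tacitly uses that $y\mapsto y\mmnus z$ is the smooth branch of $\mmnus$, so that the residual depends smoothly on the state. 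Dropping the cut-off would reintroduce the nonlinearity of $\mplus$ and, as already flagged for the starred equivalence in Table~\ref{tab:algo-LS}, render the identification only approximate.
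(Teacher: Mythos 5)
Your proposal is correct and follows essentially the same route as the paper's proof: rewrite the event $\{Z=z\}$ given $X=x$ as $\{\teps=f(x)\mmnus z\}$ using the bijectivity of $\mplus$ on $V$, identify the likelihood with the (truncated) Gaussian density of the residual up to an $x$-independent normalization, and conclude by taking the negative log-likelihood. Your version merely makes explicit the normalizing integral over $V$ and the appeal to Axiom~\eqref{ax:inj}, which the paper compresses into a single ``$\propto$''.
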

\begin{proof}
\begin{align}
\p(Z&=z|X=x, \teps\in V) = \\
\notag
\p(z&\mplus\teps=f(x)|\teps\in V) = \p(\teps = f(x)\mmnus{}z) \\
  &\propto \Nde{f(x)\mmnus{}z}{\Sigma}
  =\max! \\
 \eqv \;-\ln&\left(\Nde{f(x)\mmnus{}z}{\Sigma}\right) \nonumber\\
 &=\frac12\norm{f(x)\mmnus{}z}^2_{\Sigma} = \min!
\end{align}
Thus the classical approach of taking the negative log-likelihood shows the
equivalence.
\end{proof}

\end{modified}

\section{Examples of \mplus-Manifolds}\label{sec:examplemanifolds}

In this appendix we will show Axioms \eqref{eq:frm:axioms}
for the \mplus-manifolds discussed in Section~\ref{sec:bpm:orientation}
and further important examples. All, except $\R/2\pi\Z$
are based on either rotation matrices $\SO{n}$ or unit vectors $S^n$,
so we start with these general ones.
Often several representations are
possible, \ie $\R/2\pi\Z$, $SO(2)$, or $S^1$ for 2D rotations and $SO(3)$ or
$\mathbb{H}$ for 3D rotations. We will show these representations to be
isomorphic, so in particular, Axiom \eqref{ax:triangle} holds for all
if it holds for one.

\subsection{The Rotation Group \SO{n}}
\label{sec:examples:son}

Rotations are length, handedness, and origin preserving transformations
of $\R^n$. Formally they are defined as a matrix-group
\begin{equation}\notag
  \SO{n}= \setwith{Q\in\R^{n\times n}}{Q\trans Q =\I , \det Q=1}.
\end{equation}
Being subgroups of $Gl(n)$, the \SO{n} are Lie-groups. Thus we can use
the construction in \eqref{eq:form:liegroup-encapsulation}:
\begin{align}
    x\mplus{}\delta = x \exp \delta \quad y\mmnus{}x = \log\left(x^{-1}y\right)
    \label{eq:boxplus:son}
\end{align}
The matrix exponential is defined by the usual power series $\exp \delta=\sum_{i=0}^{\infty} \frac{1}{i!}\delta^i$, where the vector $\delta$
is converted to an antisymmetric matrix (we omit the $\hat{\phantom\delta}$ commonly indicating this). The logarithm is the inverse of $\exp$.
The most relevant $SO(2)$, $SO(3)$ have analytic formulas,
\cite{Cleve2003, Cardosoa2010} give general numerical algorithms.

\eqref{eq:boxplus:son} fulfills axioms \eqref{ax:zero}--\eqref{ax:inj}
for suitable $V$ by the construction using the Lie-group structure. We
conjecture that we can take $V=B_\pi(0)$, and that the remaining
axiom \eqref{ax:triangle} also holds in general.  We prove this for $n=2$ using an isomorphism to
$\R/2\pi\Z$ (App.\@~\ref{sec:examples:sotwo}) and for $n=3$ using an
isomorphism to $\mathbb{H}$ (App.\@~\ref{sec:examples:quat}).

\subsection{Directions in \texorpdfstring{$\R^{n+1}$}{R\textasciicircum(n+1)} as Unit Vectors \texorpdfstring{$S^n$}{S\textasciicircum n}}\label{sec:ChartsOfUnit-Spheres}
\label{sec:examples:sn}
Another important manifold is the unit-sphere
\begin{equation}
S^n = \setwith{x\in\R^{n+1}}{\norm x = 1},
\end{equation}
the set of directions in $\R^{n+1}$.
In general $S^n$ is no Lie-group, but we can still exploit symmetry
by \eqref{eq:form:practical-encapsulation} (Sec.\@~\ref{sec:form:symmetric-encaps}) and define a mapping $R_x$ 
that takes the first unit vector $e_1$ to $x$. This is achieved by a
Householder-reflection~\cite[Chap.\@~11.2]{Press92}.
\begin{align}
    R_x =
    \begin{cases}
        \left(\I-2\frac{vv\trans }{v\trans v}\right)X,\! &\text{for } v=x-e_1\ne0, \\
        \I, & \text{for } x=e_1\!
    \end{cases}
    \!\!\label{eq:def:rx}
\end{align}
Here $X$ is a matrix negating the second vector component. It makes
$R_x$ the product of two reflections and hence a rotation.
To define $e_1\mplus\delta$ we define $\exp$ and $\log$ for $S^n$ as
\begin{align}
    \exp\delta&=\quat{\cos\norm\delta}{\sinc\norm\delta \delta},
    \label{eq:def:expsn} \\
    \log\quat wv &= 
    \begin{cases} 
        \atantwo(0, w) e_1 & v=0\\
        \displaystyle\frac{\atantwo(\norm v, w)}{\norm v}v& v\ne0
    \end{cases}   
    \label{eq:def:logsn}
\end{align}
We call these functions $\exp$ and $\log$, because they
correspond to the usual power-series on complex numbers ($S^1$)
and quaternions ($S^3$). In general, however, there is only a
rough analogy.

Now, $S^n$ can be made a \mplus-manifold by \eqref{eq:form:practical-encapsulation},
with $\psi=\exp$ and $\phi=\psi^{-1}=\log$:
\begin{align}
    x\mplus{}\delta = R_x \exp\delta, \quad y\mmnus{}x = \log(R_x\trans y)
    \label{eq:boxplus:sn} 
\end{align}
The result looks the same as the corresponding definition
\eqref{eq:form:liegroup-encapsulation} for Lie-groups, justifying the
naming of \eqref{eq:def:expsn} and \eqref{eq:def:logsn} as $\exp$ and
$\log$. We have that $\exp$ is left inverse to $\log$, and $\log$ is
left inverse to $\exp$ on $\norm\delta<\pi$. As proved in
Lemma~\ref{lem:smooth-exp-log} (Appendix~\ref{sec:technicalProofs}),
$\exp$ and $\log$ are smooth. Hence Axioms \eqref{ax:zero},
\eqref{ax:surj}, and \eqref{ax:inj} hold for $V=B_\pi(0)$
(Sec.\@\ref{sec:form:symmetric-encaps}). Axiom~\eqref{ax:triangle} is
proved as Lemma~\ref{lem:hyperspheretriangleaxiom} in
Appendix~\ref{sec:technicalProofs}.

The induced metric $\d(x,y)$ corresponds to the angle
between $x$ and $y$ (Lemma~\ref{lem:geodetic}).

Note that the popular stereographic projection cannot be extended to
a \mplus-manifold, because it violates Axiom \eqref{ax:surj}.

\medskip

Equipped with \mplus-manifolds for $\SO{n}$ and $S^n$, we now discuss
the most important special cases, first for $n=2$, then $n=3$.

\begin{modified}
\subsection{2D Orientation as an Orthonormal Matrix}
\label{sec:examples:sotwo}

For planar rotations, $\exp$ in \eqref{eq:boxplus:son} takes an antisymmetric $2\times2$ matrix,
\ie a number and returns the well-known 2D rotation matrix
\begin{align}
    x\mplus{}\delta &= x \exp \delta, \quad y\mmnus{}x = \log\left(x^{-1}y\right) \\
\nonumber
    \exp\delta &= \matrxs{\cos\delta & -\sin\delta \\ \sin\delta & \cos\delta },
    \label{eq:boxplus:sotwo} 
    \log{}x = \atantwo(x_{21},x_{11}).     
\end{align}
The function $\exp$ is also an isomorphism between $\R/2\pi\Z$
\eqref{eq:boxplus:angle} and $SO(2)$ \eqref{eq:boxplus:sotwo}, because
the $2\pi$-periodicity of $\exp$ as a function matches the periodicity
of $\R/2\pi\Z$ as a set of equivalence classes and
$\exp{}x\cdot\exp\delta = \exp(x+\delta)$. The latter
holds as multiplication in $SO(2)$ commutes. From this argument we
see that $\R/2\pi\Z$ (Sec.\@~\ref{sec:bpm:twodrotation}) and $SO(2)$
are isomorphic \mplus-manifolds
and also that axiom \eqref{ax:triangle} holds for the latter.

\subsection{2D Orientation as a Complex Number}

Using complex multiplication, \SO2 is isomorphic to the complex
numbers of unit-length, \ie $S^2\subset\mathbb{C}$. For
$n=2$, \eqref{eq:def:expsn} and \eqref{eq:def:logsn} simplify to
\begin{align}
    \exp_{S^1}\delta 
     = \matrxs{\cos\delta\\\sin\delta},
    \log_{S^1}\matrxs{x\\y} 
      = \atantwo(y,x).
\end{align}
$R_{\matrxs{x\\y}}$ equals complex multiplication with
${\matrxs{x\\y}}$ or, as a matrix, $\matrxs{x &
  -y \\ y & x}$.  This is because $R_{\matrxs{x\\y}}$ it is a rotation
mapping $e_1$ to $\matrxs{x\\y}$ and there is only one such rotation
on $S^1$. With these prerequisites, $S^1$ is isomorphic to \SO2 by
Lemma \ref{lem:isomorphicsonesotwo} as is $\R/2\pi\Z$.

\subsection{Directions in 3D Space as \texorpdfstring{$S^2$}{S\texttwosuperior}}\label{sec:ex:S2}
The unit sphere is the most important example of a manifold that is
not a Lie-group.  This is a consequence of the ``hairy ball theorem''
\cite{Eisenberg1979}, which states that on $S^2$ every continuous
vector-field has a zero -- if $S^2$ was a Lie-group, one
could take the derivative of $x\cdot\delta$ for every $x$ and some
fixed $\delta$ to obtain a vector field on $S^2$ without zeroes.  With
the same argument applied to $x\mplus\delta$, it is also impossible to
give $S^2$ a $\mplus$-structure that is continuous in $x$. This is one
reason why we did not demand continuity in $x$ for \mplus.  It also
shows that the discontinuity in \eqref{eq:def:rx} cannot be avoided in
general (although it can for $S^1$ and $S^3$).

However, we can give a simpler analytical formula for $R_x$ in $S^2$:
\begin{gather}
\begin{split}
R_{\matrxs{x\\y\\z}} := \matrxs{
x & -r & 0\\
y & xc & -s\\
z & xs & c\\
},
\alpha=\atantwo(z,y), \\
c=\cos\alpha, s=\sin\alpha, r=\sqrt{x^2+z^2} \\
\end{split} \\
    x\mplus{}\delta = R_x  \exp\delta, \quad y\mmnus{}x = \log(R_x\trans{}y)
    \label{eq:boxplus:stwo}
\end{gather}
The formula is discontinuous for $r=0$, but any value for
$\alpha$ leads to a proper rotation matrix $R_x$. Therefore,
neither \mplus nor \mmnus are continuous in
$x$, but they are smooth with respect to $\delta$ or $y$.

\subsection{3D Orientation as a Unit Quaternion}
\label{sec:examples:quat}

The \mplus-manifold for quaternions $\mathbb{H}$ presented in Sec.\@~\ref{sec:bpm:quaternion},
\eqref{eq:quaternionboxplus:def}-\eqref{eq:log:quat} is a special case of
\eqref{eq:def:expsn}-\eqref{eq:boxplus:sn} for the unit sphere $S^3$. In
the construction $R_q$ from \eqref{eq:def:rx}
is conveniently replaced by $q^{-1}\cdot$, because $\mathbb{H}$ is a Lie-group. Also,
$\exp$ and $\log$ correspond again to the usual functions for $\mathbb{H}$.
The Axioms \eqref{ax:zero}-\eqref{ax:inj} are
fulfilled for $V=B_\pi(0)$. Axiom
\eqref{ax:triangle} is proved in Lemma \ref{lem:quat:triangle}.

The metric $\d(x,y)$ is the angle between $x$ and $y$,
and also monotonically related to the simple Euclidean metric $\norm{x-y}$ (Lemma \ref{lem:quat:euclidean}).

Orthonormal matrices $SO(3)$ and unit quaternions $S^3/\{\pm1\}$ are two different
representations of rotations. Topologically  this is called a universal
covering \cite[\textsection12]{Altmann86}. Hence, their \mplus-manifolds are isomorphic
with the usual conversion operation
\begin{align}
    \begin{split}
    \phi&\matrxs{w\\x\\y\\z} = 
        \matrxs{1-2(y^2+z^2) & -2wz + 2xy & -2wy + 2xy \\
                2wz+2xy & 1-2(x^2+z^2) & -2wx+2yz \\
                -2wy+2xz & 2wx + 2yz & 1-2(x^2+y^2)}.
    \end{split}
\end{align}
For the proof we use that the well-known expressions $\exp_{\SO3}(\alpha v/\norm{v})$ and
$\exp_{S^3}(\tfrac{1}{2}\alpha v/\norm{v})$ for the rotation 
by an angle $\alpha$ around an axis $v$ in matrix and quaternion
representation, with the exponentials defined in \eqref{eq:exp:mat}
and \eqref{eq:exp:quat}, so
\begin{align}
 \phi&(\exp_{S^3}(\delta/2)) = \exp_{\SO3}(\delta), \\
    \phi&(q\mplus_{S^3}\delta)  \\
    &= \phi(q \exp_{S^3}(\delta/2)) = \phi(q) \phi(\exp_{S^3}(\delta/2)) \\
    &= \phi(q) \exp_{\SO3}(\delta) = \phi(q) \mplus_{\SO3} \delta.
\end{align}
The isomorphism also shows Axiom \eqref{ax:triangle} for $\SO3$.

\subsection{The Projective Space as \texorpdfstring{$S^n$}{S\textasciicircum n}}
Further important manifolds, \eg in computer vision, are the projective
spaces, informally non-Euclidean spaces
where parallels intersect at infinity. Formally we define
\begin{equation}
\PP^n := \xfrac{(\R^{n+1}\setminusset0)}{(\R\setminusset0)}
\end{equation}
and write 
\begin{equation}
[x_0:x_1:\dots:x_n]=[x]= \setwith{\lambda x}{\lambda\in\R\setminusset0}
\end{equation}
for the equivalence class modulo $\R\setminusset0$ of a vector
$x=(x_1,\dots,x_n)\in\R^{n+1}\setminusset0$. In other words, $\PP^n$ is the space
of non-zero $(n+1)$-dimensional vectors modulo identification of
scalar multiples.

As every point $[x]\in\PP^n$ can uniquely be identified with the set
$\bigl\{\tfrac{x}{\norm x},\tfrac{-x}{\norm x}\bigr\}\subset S^n$ we find that
$\PP^n \cong {S^n\!}/{\{\pm1\}}$
and $S^n$ is a cover of $\PP^n$.
For $n=3$, we see the not quite intuitive fact that $\PP^3\cong\SO3$, so we can
reuse the same \mplus-manifold there.
For $\PP^2$ we can basically reuse $S^2$ but care has to
be taken due to the ambiguity of $x\equiv-x$. This can be solved by using
$\olog$ instead of $\log$ in \mmnus, fulfilling 
Axioms~\eqref{eq:frm:axioms} for $V=B_{\pi/2}$.

We cannot currently say anything about the induced metric on a
projective space.

\end{modified}


\section{Technical Proofs}
\label{sec:technicalProofs}

\begin{lem} \label{lem:smooth-exp-log}
The exponential 
$\exp\delta=\quat{\cos\norm\delta}{\sinc\norm\delta \delta}$ from
\eqref{eq:def:expsn} is analytical on $\R^n$ and $\log$ is analytical on
$S^n\setminusset{\quat{-1}{\phantom{-}0}}$.
(Therefore, also $C^\infty$)
\end{lem}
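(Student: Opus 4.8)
The plan is to exploit the fact that the apparent non-smoothness of $\norm\delta$ at the origin is spurious, because $\cos$, $\sinc$, and $\theta\mapsto\atan\theta/\theta$ are all \emph{even} real-analytic functions. I would first record that an even analytic function factors through squaring: there are analytic functions $C,S,H$ (with, respectively, infinite, infinite, and positive radius of convergence) such that $\cos\theta=C(\theta^2)$, $\sinc\theta=S(\theta^2)$, and $\atan\theta/\theta=H(\theta^2)$, read off directly from the even Taylor series $\cos\theta=\sum_k\frac{(-1)^k}{(2k)!}\theta^{2k}$, $\sinc\theta=\sum_k\frac{(-1)^k}{(2k+1)!}\theta^{2k}$, and $\atan\theta/\theta=\sum_k\frac{(-1)^k}{2k+1}\theta^{2k}$.

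For $\exp$ this finishes the argument at once: since $\norm\delta^2=\delta_1^2+\dots+\delta_n^2$ is a polynomial, hence analytic on $\R^n$, the components $\cos\norm\delta=C(\norm\delta^2)$ and $\sinc\norm\delta\,\delta_i=S(\norm\delta^2)\,\delta_i$ are compositions and products of analytic functions and therefore analytic. Thus $\exp\delta=\quat{C(\norm\delta^2)}{S(\norm\delta^2)\,\delta}$ is analytic on all of $\R^n$, with no special treatment required at $\delta=0$.

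For $\log$ I would, following the analytic version of Definition~\ref{dfn:smooth-function}, exhibit an ambient analytic extension near each point of $S^n\setminusset{\quat{-1}0}$, splitting into two regions according to whether the vector part $v$ vanishes. Writing a point of $S^n\subset\R\times\R^n$ as $\quat wv$, the locus $v=0$ meets $S^n$ only in the two poles $\quat 10$ and $\quat{-1}0$. At any point with $v\ne0$ the norm $\norm v=\sqrt{\norm v^2}$ is analytic and nonzero, and the pair $(\norm v,w)$ lies in the open upper half-plane $\{y>0\}$, on which $\atantwo(y,x)$ is analytic (its only singularities are the origin and the branch cut $\{y=0,\,x<0\}$); hence $\log\quat wv=\frac{\atantwo(\norm v,w)}{\norm v}\,v$ is analytic there. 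Near the north pole $\quat 10$ I would instead use the factorization: for $w>0$ one has $\frac{\atantwo(\norm v,w)}{\norm v}=\frac1w\cdot\frac{\atan(\norm v/w)}{\norm v/w}=\frac1w\,H\!\left(\frac{\norm v^2}{w^2}\right)$, so that $\log\quat wv=\frac1w\,H\!\left(\frac{\norm v^2}{w^2}\right)v$ extends analytically to a neighborhood of $\quat 10$ inside $\{w>0\}$, agreeing with the $v=0$ branch at the pole since $\atantwo(0,1)=0$. The crucial point is that here the even symmetry again cancels the non-analytic $\norm v$, replacing it by the analytic $\norm v^2$.

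Finally I would confirm that the excluded point is the only genuine obstruction: the south pole $\quat{-1}0$ corresponds to $(\norm v,w)=(0,-1)$, which sits exactly on the branch cut $\{y=0,\,x<0\}$ of $\atantwo$, where approaching along different directions in $v$ yields different limits, so no analytic — indeed no continuous — extension exists there, in accordance with the statement. The main difficulty is conceptual rather than computational: recognising that the \emph{even} symmetry of the three elementary functions permits replacing the non-analytic $\norm\cdot$ by the analytic $\norm\cdot^2$, and then locating the $\atantwo$ branch cut precisely enough to verify that it touches $S^n$ only at $\quat{-1}0$.
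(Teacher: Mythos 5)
Your treatment of $\exp$ coincides with the paper's: both rewrite $\cos\norm\delta$ and $\sinc\norm\delta$ as (entire) analytic functions of the polynomial $\norm\delta^2$, so that $\exp$ is analytic on all of $\R^n$ with no special care at the origin. For $\log$, however, you take a genuinely different route. The paper notes that $\log$ is the inverse of $\exp$ restricted to $B_\pi$ and appeals to the real-analytic inverse function theorem, the only computation being that the Jacobian $\tfrac{\partial}{\partial\delta}\exp$ has full rank $n$ for $\norm\delta<\pi$. You instead verify the extension property of Definition~\ref{dfn:smooth-function} directly: away from the poles the formula $\frac{\atantwo(\norm v,w)}{\norm v}v$ is already an ambient analytic function, since $\norm v=\sqrt{\norm v^2}$ is analytic and nonzero there and $\atantwo$ is analytic off its branch cut; near the north pole the same even-symmetry trick that handled $\exp$ replaces $\norm v$ by $\norm v^2$ via $\atan\theta/\theta=H(\theta^2)$. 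Both arguments are sound. Yours is more elementary (no inverse function theorem, no rank computation) and has the added virtue of locating precisely why analyticity fails at $\quat{-1}{\phantom{-}0}$ -- the point sits on the branch cut of $\atantwo$ -- which the paper leaves implicit; the paper's argument is shorter and transfers verbatim to any analytic immersion with known inverse. One point worth making explicit in your write-up: the series $H(t)=\sum_k\tfrac{(-1)^k}{2k+1}t^k$ has radius of convergence $1$, so the identity $\atan\theta/\theta=H(\theta^2)$ in the form you use it holds only for $\norm v<\abs w$; since you invoke it only in a small neighborhood of $\quat{1}{0}$ this is harmless, but stated globally it would require the analytic continuation of $H$ to $[0,\infty)$ rather than its Taylor series at $0$.
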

\begin{proof}
The functions $\cos$ and $\sinc$ are both globally analytic, with Taylor series
\begin{align}
\cos\norm\delta  &= \sum_{k=0}^\infty \tfrac{(-1)^k\norm\delta^{2k}}{(2k)!}
                  = \sum_{k=0}^\infty \tfrac{(-\norm\delta^2)^k}{(2k)!}, \\
\sinc\norm\delta &= \sum_{k=0}^\infty \tfrac{(-1)^k\norm\delta^{2k}}{(2k+1)!}
                  = \sum_{k=0}^\infty \tfrac{(-\norm\delta^2)^k}{(2k+1)!}.
\end{align} 
Moreover,  $\norm\delta^2=\sum_{i=1}^n\delta_i^2$ is also analytic.

On the restriction $\exp:B_\pi\to S^n\setminusset{\quat{-1}{\phantom{-}0}}$ the
inverse of $\exp$ is $\log$.
In order to prove that $\log$ is analytic, we have to show that the Jacobian
of $\exp$ has full rank.
Using that the derivative of $\norm\delta$ is $\delta\trans/\norm\delta$ and
hence the derivative of  $\cos\norm\delta$ is $\sinc\norm\delta\delta\trans$,
we show that for every $v\neq0$:
\begin{align}
\tfrac\partial{\partial\delta}(\exp\norm\delta) v 
 &=\matrxs{
    \sinc\norm\delta \delta\trans \\
    \sinc\norm\delta \I + \delta(\sinc'\norm\delta)/\norm\delta \delta\trans
 } v
\end{align}
where the first component vanishes only for $\delta\trans v=0$ (as
$\sinc x\ne0$ for $\abs x<\pi$). In this case the lower part becomes
$\sinc\norm\delta v$, which never vanishes for $\norm\delta<\pi$.
\end{proof}

\begin{lem}    For two unit quaternions $a,b\in{}S^3$, there is a monotonic mapping
    between their Euclidean distance and the distance induced by $\mmnus$:
    \begin{equation}
        \norm{a-b}^2 = f(\norm{a\mmnus{}b}), \text{with } f(\alpha)=2-2\cos(\alpha/2)
    \end{equation}
    This also holds when antipodes are identified and both metrics are
    defined as the minimum obtained for any choice of representatives
    from the equivalence classes $\{a,-a\}, \{b,-b\}$.
    \label{lem:quat:euclidean}
\end{lem}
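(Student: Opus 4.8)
The plan is to reduce everything to the single unit quaternion $c := b\inv a$ and to observe that both distances depend only on its scalar part. I would write $c = \quat{w}{v}$ with $w^2 + \norm v^2 = 1$, and then express the $\mmnus$-distance and the Euclidean distance separately in terms of $w$.

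First I would compute the $\mmnus$-distance. By \eqref{eq:quaternionboxplus:def}, $a\mmnus b = 2\olog(b\inv a) = 2\olog(c)$, so $\alpha := \norm{a\mmnus b} = 2\norm{\olog c}$. From \eqref{eq:log:quat}, $\norm{\olog c} = \abs{\atan(\norm v/w)}$ (with $\norm{\olog c}=\pi/2$ in the boundary case $w=0$). Setting $\theta := \norm{\olog c}\in[0,\pi/2]$, a short trigonometric identity gives $\cos\theta = \abs w$ and $\sin\theta = \norm v$; the modulus on $w$ is forced by the range of $\atan$ and is exactly the manifestation of $\olog q = \olog(-q)$. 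Hence $\cos(\alpha/2) = \cos\theta = \abs w$.

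Second, I would expand the Euclidean distance using $\norm a = \norm b = 1$, giving $\norm{a-b}^2 = 2 - 2\scPr{a}{b}$. The key identity is that the $\R^4$-inner product of two quaternions equals the scalar part of the product of one with the conjugate of the other; since $b$ is a unit quaternion, $\bar b = b\inv$, so $\scPr{a}{b}$ equals the scalar part of $b\inv a = c$, i.e. $\scPr{a}{b} = w$. Choosing, via the $\pm$-equivalence, the nearest representative with $w\ge 0$ gives $\abs w = w$ and therefore $\norm{a-b}^2 = 2 - 2w = 2 - 2\cos(\alpha/2) = f(\alpha)$, which settles the first claim. Monotonicity of $f$ is then immediate from $f'(\alpha) = \sin(\alpha/2) > 0$ on the relevant range $\alpha = 2\theta \in (0,\pi]$. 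For the antipodal version I would note that $\mmnus$ is already invariant under the four choices $a\mapsto\pm a$, $b\mapsto\pm b$: each flips $c$ to $\pm c$, and $\olog(-c)=\olog(c)$, so $\alpha$ is unchanged and needs no minimization. On the Euclidean side the representatives yield $\norm{a\mp b}^2 = 2\mp 2\scPr{a}{b}$, whose minimum is $2 - 2\abs{\scPr{a}{b}} = 2 - 2\abs w$; combining with $\cos(\alpha/2)=\abs w$ from the first step gives $2 - 2\cos(\alpha/2) = f(\alpha)$, so the relation survives the identification unchanged.

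The main obstacle I anticipate is purely the sign bookkeeping around the antipodal identification: the $\atan$ branch used in $\olog$ always returns the reduced angle $\theta\le\pi/2$, forcing $\cos(\alpha/2)=\abs w$ rather than $\cos(\alpha/2)=w$. This is precisely why the Euclidean side must be taken as the minimum over representatives (equivalently, the nearest-representative normalization $w\ge0$ in the first claim); beyond the inner-product/scalar-part identity, no further idea is required.
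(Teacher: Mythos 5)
Your proof is correct and follows essentially the same route as the paper's: reduce to the relative quaternion $c=b\inv a$ and observe that both $\norm{a-b}^2$ and $\norm{a\mmnus b}$ depend only on its scalar part (the paper expands $\norm{1-\exp(\delta/2)}^2$ directly, you go through $\olog$ and the inner-product identity, which is the same computation read in the opposite direction). If anything you are more careful than the paper: its step $b\inv a=\exp(\delta/2)$ silently assumes $\Re(b\inv a)\ge 0$, whereas you make the nearest-representative normalization $w\ge 0$ explicit and correctly note that without it the unidentified claim fails (e.g.\ $a=1$, $b=-1$).
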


\begin{proof}
    We put $\delta:=b\mmnus{}a=2\log(b^{-1}a)$. Then
\begin{align}
    \hskip1em&\hskip-1em\norm{b-a}^2 = \norm{\tquat10-b^{-1}a}^2 \\
    &= \norm{\tquat10-\exp(\tfrac{1}{2}\delta)}^2 \\
    &= \norm{\matrxs{1-\cos(\tfrac{1}{2}\norm{\delta})\\\sinc(\tfrac{1}{2}\norm{\delta}) \tfrac{1}{2}\delta}}^2 \\
\nonumber    &= (1-\cos(\tfrac{1}{2}\norm{\delta}))^2 + (\sinc(\tfrac{1}{2}\norm{\delta}) \; (\tfrac{1}{2}\norm{\delta}))^2 \\
    &= (1-\cos(\tfrac{1}{2}\norm{\delta}))^2 + \sin^2(\tfrac{1}{2}\norm{\delta}) \\
    &= 2 - 2\cos(\tfrac{1}{2}\norm{\delta}) 
\end{align}
   The inequality holds for every pair of antipodes $(a,b),
   (-a,b), (a,-b), (-a,-b)$ hence also for their minimum, which is by
   definition the distance of the equivalence classes.
\end{proof}

\begin{lem}
    For two orthonormal matrices $A,B\in\SO3$, there is a monotonic mapping
    between their Frobenius distance and the distance induced by $\mmnus$:
\begin{equation}
   \normF{B-A}^2 = f(\norm{A\mmnus{}B}) \text{ with }
   f(\alpha)=4-4\cos\alpha.
\end{equation}
    \label{lem:mat:frobenius}
\end{lem}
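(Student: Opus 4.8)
The plan is to reduce both sides of the claimed identity to the rotation angle of $A\inv B$ and then compare. First I would expand the Frobenius distance purely algebraically. Writing $\normF{M}^2=\trace(M\trans M)$ and using the orthonormality $A\trans A=B\trans B=\I$, I obtain
\begin{align*}
\normF{B-A}^2
 &= \trace\bigl((B-A)\trans(B-A)\bigr)\\
 &= \trace(B\trans B)+\trace(A\trans A)-\trace(B\trans A)-\trace(A\trans B)\\
 &= 6-2\trace(A\trans B),
\end{align*}
where the last line uses $\trace(B\trans A)=\trace(A\trans B)$, each being the transpose of the other.

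Next I would connect $\alpha:=\norm{A\mmnus B}$ to $\trace(A\trans B)$. By the definition of $\mmnus$ for $\SO3$, $A\mmnus B=\log(B\inv A)$, so $\alpha$ is the rotation angle of $B\inv A$; by the metric symmetry established in Lemma~\ref{lem:metric} this equals $\norm{B\mmnus A}$, the angle of $A\inv B=A\trans B$, and in any case $B\inv A$ and $A\trans B$ are mutually inverse rotations sharing the same angle $\alpha$. The angle--trace relation is exactly what the logarithm formula \eqref{eq:log:mat} encodes: a rotation $R\in\SO3$ satisfies $\theta=\arccos\tfrac{\trace R-1}{2}$, equivalently $\trace R=1+2\cos\theta$ (this is just the fact that $R$ has eigenvalues $1,e^{\pm\ii\alpha}$). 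Hence $\trace(A\trans B)=1+2\cos\alpha$.

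Substituting into the first computation yields $\normF{B-A}^2=6-2(1+2\cos\alpha)=4-4\cos\alpha=f(\alpha)$, as claimed. To conclude the \emph{monotonic} part of the statement I would simply note that $f(\alpha)=4-4\cos\alpha$ is strictly increasing on the interval $[0,\pi]$ of admissible rotation angles, so the Frobenius distance and the $\mmnus$-induced distance $\d(A,B)=\alpha$ are indeed related by a strictly monotonic bijection.

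The computation is essentially routine; the only points requiring genuine care are the identification $\trace(A\trans B)=1+2\cos\alpha$ (relating the trace of a $3\times3$ rotation to its angle, via \eqref{eq:log:mat}) and the bookkeeping that guarantees the \emph{same} $\alpha$ governs both $A\mmnus B$ and the trace term, for which the symmetry of the induced metric from Lemma~\ref{lem:metric} is exactly what is needed.
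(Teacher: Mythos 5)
Your proof is correct, but it follows a genuinely different route from the paper's. The paper sets $\delta=A\mmnus B$, uses the bi-invariance of the Frobenius norm under orthogonal multiplication to reduce $\normF{B-A}^2$ to $\normF{\exp\delta-\I}^2$, then conjugates $\delta$ onto the $x$-axis so that $\exp\delta$ becomes an explicit planar rotation block, and reads off $2(\cos\norm\delta-1)^2+2\sin^2\norm\delta=4-4\cos\norm\delta$ entrywise. You instead expand $\normF{B-A}^2=\trace\bigl((B-A)\trans(B-A)\bigr)=6-2\trace(A\trans B)$ and invoke the angle--trace identity $\trace R=1+2\cos\theta$ for $R\in\SO3$, which is indeed exactly what \eqref{eq:log:mat} encodes (and which the paper's explicit canonical-form computation re-derives implicitly). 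Both arguments are elementary and about equally long; yours buys a cleaner algebraic bookkeeping and makes the structural reason for the identity transparent (the eigenvalues $1,e^{\pm\ii\alpha}$), while the paper's stays entirely within the explicit matrix formulas it has already introduced and so needs no appeal to the spectral characterization of the rotation angle. Your handling of which of $A\mmnus B$, $B\mmnus A$ carries the angle $\alpha$ is careful and correct --- the observation that $B\inv A$ and $A\trans B$ are mutually inverse and hence share a rotation angle already suffices, without needing Lemma~\ref{lem:metric} --- and your closing remark that $f$ is strictly increasing on $[0,\pi]$ properly discharges the monotonicity claim, which the paper leaves implicit.
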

\begin{proof}
We put $\delta=A\mmnus{}B$. Then
\begin{align}
    \normF{B-A}^2 
         & =\normF{A\mplus\delta-A}^2 \\
         &= \normF{A\exp\delta-A}^2 \\
         &= \normF{\exp\delta-\I}^2.
\intertext{%
Let $Q$ be an orthonormal matrix that rotates $\delta$ into
$x$-direction, \ie $Q\delta = (\norm\delta, 0, 0)\trans$. As $\exp(Q\delta)Q
= Q\exp\delta$, we have
}%
    &= \normF{Q\trans\exp(Q\delta)Q-\I}^2 \\
    &= \normF{\exp(Q\delta){}-\I}^2 \\
    &= \normF{\exp(\norm\delta,0,0)\trans-\I}^2 \\
    &= \normF{\matrxs{0&0&0 \\ 
                     0&\cos\norm\delta-1& -\sin\norm\delta \\
                     0&\sin\norm\delta & \cos\norm\delta-1}}^2 \\ 
    &= 2(\cos\norm\delta-1)^2+2\sin^2\norm\delta \\
    &= 4-4\cos\norm\delta. \qedhere
\end{align}%
\end{proof}

\begin{lem}
    The curve $x\mplus(\lambda\delta)$, with $\lambda\in[0,1]$ and $\mplus$
    defined by \eqref{eq:boxplus:sn}, is a geodetic on $S^n$ with arc-length $\norm\delta$.
    \label{lem:geodetic}
\end{lem}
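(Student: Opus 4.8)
The plan is to recognize $\gamma(\lambda):=x\mplus(\lambda\delta)=R_x\exp(\lambda\delta)$ as, up to the orthogonal map $R_x$, the standard constant-speed great-circle parametrization of $S^n$, and then to verify the geodesic equation by a direct differentiation.

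First I would record that $R_x$ is an isometry. By \eqref{eq:def:rx} it is a product of two Householder reflections, hence an orthogonal rotation $R_x\in\SO{n+1}$; as a linear isometry of $\R^{n+1}$ it maps $S^n$ to itself and preserves both the induced metric and the property of being a geodesic. I would therefore carry $R_x$ along as an overall prefactor, which reduces the analysis to the great circle $\lambda\mapsto\exp(\lambda\delta)$ through $e_1$ (note $R_{e_1}=\I$, so $e_1\mplus\delta=\exp\delta$).

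Next I would make the great-circle structure explicit. Assume $\delta\ne0$ (for $\delta=0$ the curve is constant and is trivially a geodesic of length $0=\norm\delta$). Writing $\theta:=\norm\delta$, $u:=\delta/\theta$, and $w:=\quat{0}{u}$, and using $\sinc(\lambda\theta)\,\lambda\delta=\sin(\lambda\theta)\,u$, definition \eqref{eq:def:expsn} gives
\begin{equation}
\exp(\lambda\delta)=\cos(\lambda\theta)\,e_1+\sin(\lambda\theta)\,w,
\end{equation}
where $e_1$ and $w$ are orthonormal in $\R^{n+1}$. This is exactly the rotation in the plane $\operatorname{span}(e_1,w)$ with angular velocity $\theta$.

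Finally I would verify the geodesic equation and compute the length. Since $R_x$ is linear, differentiating $\gamma=R_x\exp(\lambda\delta)$ twice gives $\gamma''(\lambda)=-\theta^2 R_x\bigl(\cos(\lambda\theta)e_1+\sin(\lambda\theta)w\bigr)=-\theta^2\gamma(\lambda)$, so the Euclidean acceleration is parallel to $\gamma(\lambda)$ and hence orthogonal to $T_{\gamma(\lambda)}S^n=\gamma(\lambda)^\perp$; this is precisely the condition for $\gamma$ to be a geodesic of the metric induced on $S^n\subset\R^{n+1}$. For the arc-length, $\gamma'(\lambda)=\theta R_x\bigl(-\sin(\lambda\theta)e_1+\cos(\lambda\theta)w\bigr)$ has constant norm $\norm{\gamma'(\lambda)}=\theta=\norm\delta$ because $R_x$ is orthogonal, so the length over $\lambda\in[0,1]$ is $\int_0^1\norm{\gamma'(\lambda)}\,\d\lambda=\norm\delta$. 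The only point requiring care is the justification that $R_x$ is an isometry of $S^n$, which I expect to be the main (though minor) obstacle; everything else is a routine computation once the great-circle form is identified.
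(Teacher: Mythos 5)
Your proposal is correct and follows essentially the same route as the paper: both rewrite $x\mplus(\lambda\delta)=R_x\exp(\lambda\delta)$ in the great-circle form $\cos(\lambda\norm\delta)\,e_1+\sin(\lambda\norm\delta)\,w$ with $R_x$ orthogonal, and conclude the curve is a unit-radius circle segment of length $\norm\delta$. You merely make explicit what the paper leaves at ``it can be seen'' --- the verification $\gamma''=-\norm\delta^2\gamma$ of the geodesic condition, the isometry property of $R_x$, and the degenerate case $\delta=0$ --- which is a welcome tightening rather than a different argument.
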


\begin{proof} We have
    \begin{align*}
        x&\mplus(\lambda\delta) = R_x \exp (\lambda\delta) = R_x \tquat{\cos(\lambda\norm\delta)}{\sinc(\lambda\norm\delta) (\lambda\delta)}\\
        &= R_x \tquat{\cos(\lambda\norm\delta)}{\sin(\lambda\norm\delta) \delta/\norm\delta} 
         = R_x \left[ \begin{smallmatrix} 1 & 0 \\ 0 & \delta/\norm\delta \end{smallmatrix} \right]
               \left[ \begin{smallmatrix} \cos(\lambda\norm\delta) \\ \sin(\lambda\norm\delta) \end{smallmatrix} \right].
    \end{align*}
    It can be seen that $x\mplus(\lambda\delta)$ is a circle segment with radius $1$ and hence a geodetic of length $\norm\delta$ on $S^n$.
\end{proof}

\begin{lem}
  For the $\mplus$ and $\mmnus$ operators on the hypersphere $S^n$ defined
  in \eqref{eq:boxplus:sn}, Axiom~\eqref{ax:triangle}
  holds~(Fig.\@~\ref{fig:triangle}).
    \label{lem:hyperspheretriangleaxiom}
\end{lem}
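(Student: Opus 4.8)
The plan is to identify the left-hand side of Axiom~\eqref{ax:triangle} with the geodesic (angular) distance between the image points $\exp\delta_1$ and $\exp\delta_2$ on the sphere, and then to prove that this angular distance never exceeds the Euclidean distance $\norm{\delta_1-\delta_2}$ of the parameters. First I would eliminate the rotations $R_x$. Writing $p_i := x\mplus\delta_i = R_x\exp\delta_i$ and using $R_xe_1=x$ together with $R_x\trans R_x=\I$, the first coordinate of $R_{p_2}\trans p_1$ equals $e_1\trans R_{p_2}\trans p_1=(R_{p_2}e_1)\trans p_1=p_2\trans p_1=(\exp\delta_1)\trans(\exp\delta_2)$. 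By the explicit form of $\log$ in \eqref{eq:def:logsn} this gives $\norm{(x\mplus\delta_1)\mmnus(x\mplus\delta_2)}=\arccos\bigl((\exp\delta_1)\trans(\exp\delta_2)\bigr)$, i.e.\ the angle between the two points (consistent with Lemma~\ref{lem:geodetic}). This cancels $R_x$ entirely and reduces the claim to a statement about $\exp$ alone.

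Next I would compute the inner product from \eqref{eq:def:expsn}. With $a:=\norm{\delta_1}$, $b:=\norm{\delta_2}$ and $\theta$ the angle between $\delta_1$ and $\delta_2$ (so $\delta_1\trans\delta_2=ab\cos\theta$), one obtains the spherical law of cosines $(\exp\delta_1)\trans(\exp\delta_2)=\cos a\cos b+\sin a\sin b\cos\theta$, while $\norm{\delta_1-\delta_2}^2=a^2+b^2-2ab\cos\theta$. Since an angular distance is always at most $\pi$, the axiom is trivial when $\norm{\delta_1-\delta_2}\ge\pi$, and likewise when $|a-b|\ge\pi$ (the parameter distance is then $\ge|a-b|\ge\pi$). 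In the remaining regime the claim is equivalent, by monotonicity of $\arccos$, to $\cos a\cos b+\sin a\sin b\cos\theta\ge\cos\norm{\delta_1-\delta_2}$.

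The heart of the proof is a convexity argument in the variable $u:=\cos\theta$. The left-hand side is affine in $u$, whereas $h(u):=\cos c(u)$ with $c(u):=\sqrt{a^2+b^2-2abu}$ satisfies
\[
h''(u)=\frac{a^2b^2}{c^3}\bigl(\sin c-c\cos c\bigr)\ge0,
\]
because $\psi(c):=\sin c-c\cos c$ vanishes at $c=0$ and has derivative $\psi'(c)=c\sin c\ge0$ on $[0,\pi]$. Restricting to the sub-interval of $u$ on which $c(u)\le\pi$, the function $h$ is therefore convex; it agrees with the affine left-hand side at $u=1$ (both equal $\cos(a-b)$) and is dominated by it at the other endpoint, where $c=\pi$ forces $h=-1\le\cos(\text{angle})=\text{LHS}$. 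Comparing the affine left-hand side with the chord of the convex $h$ over this interval then yields $\text{LHS}\ge h$ throughout, which is the desired inequality.

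The step I expect to be the main obstacle is controlling the range of $c$: the curvature term $\sin c-c\cos c$ is nonnegative only on $[0,\pi]$ and changes sign beyond $\pi$, so the convexity of $h$ fails once the parameters are pushed past the injectivity radius. The bookkeeping that reduces the global statement $\forall\,\delta_1,\delta_2\in\R^n$ to the regime $c\le\pi$, together with the endpoint comparison there, is exactly where care is needed; geometrically this is the familiar fact that the exponential map of a positively curved sphere is $1$-Lipschitz within its injectivity radius.
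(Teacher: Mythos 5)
Your proposal is correct and follows essentially the same route as the paper: reduce the left-hand side to the spherical law of cosines $\arccos(\cos a\cos b+\sin a\sin b\cos\theta)$ (the paper does this via its geodesic lemma rather than by cancelling $R_x$ algebraically), and then establish the key trigonometric inequality by comparing the affine dependence on $\cos\theta$ with the convexity of $\cos\sqrt{a^2+b^2-2ab\cos\theta}$, exactly as in the paper's Lemma on the spherical versus tangential distance. Your treatment is in fact slightly more careful at the boundary where $c=\pi$ (using $h=-1$ there instead of the endpoint equality at $\cos\theta=-1$), but the underlying argument is the same.
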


\begin{proof}
  By Lemma~\ref{lem:geodetic}, the expression
  $\norm{(x\mplus\delta_1)\mmnus{}(x\mplus\delta_2)}$ involves a
  triangle of three geodetics $(x, x\mplus\delta_1)$, $(x,
  x\mplus\delta_2)$, and $(x\mplus\delta_1, x\mplus\delta_2)$. By the
  same lemma, the first two have length $\alpha = \norm{\delta_1}$ and
  $\beta = \norm{\delta_2}$. Hence by the spherical law of cosines,
  with $\gamma=\angle(\delta_1,\delta_2)$, the third has a length of
\begin{align}
\hskip1em&\hskip-1em \norm{(x\mplus\delta_1)\mmnus{}(x\mplus\delta_2)} \\
    &=  \arccos\left(\cos\alpha\cos\beta + \sin\alpha\sin\beta\cos\gamma\right) \\
    \stckrel{\text{Lemma \ref{lem:triangle}}}{}{\le} \sqrt{\alpha^2+\beta^2-2\alpha\beta\cos\gamma} 
    = \norm{ \delta_1-\delta_2 }   \qedhere
\end{align}
\end{proof}

\begin{figure}
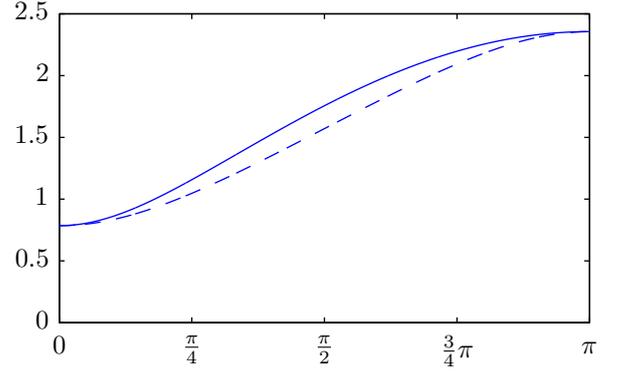

  \psfragfig{figtriangle}
    \vspace{-1.5\baselineskip}
    \caption{Spherical distance $\norm{(x\mplus\delta_1)\mmnus{}(x\mplus\delta_2)}$ (dashed)
            and Euclidean distance in the tangential plane $\norm{\delta_1-\delta_2}$ (solid) 
            plotted over the angle $\gamma$ between $\delta_1$ and $\delta_2$ 
            for the case $\norm{\delta_1}=\alpha=\frac{\pi}{4}$ and $\norm{\delta_2}=\beta=\frac{\pi}{2}$. 
            Confer both sides of \eqref{eq:triangle} for the concrete terms plotted.}
    \label{fig:triangle}
\end{figure}

\begin{lem}
    For the quaternion $\mplus$ and $\mmnus$ operators defined in
    \eqref{eq:quaternionboxplus:def}, Axiom
    \eqref{ax:triangle} holds~\text{(Fig.\@~\ref{fig:triangle})}.
    \label{lem:quat:triangle}
\end{lem}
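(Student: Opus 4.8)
The plan is to reduce the claim to the corresponding statement for the unit sphere $S^3$, which is already available as Lemma~\ref{lem:hyperspheretriangleaxiom}. As observed in Appendix~\ref{sec:examples:quat}, the quaternion operators \eqref{eq:quaternionboxplus:def} are precisely the $S^3$-operators \eqref{eq:boxplus:sn} (with $R_q$ taken as left multiplication by $q$) up to two modifications: the perturbation is halved, and $\log$ is replaced by $\olog$. Writing $\mplus_{S^3}$ and $\mmnus_{S^3}$ for the sphere operators, one has $q\mplus\delta = q\mplus_{S^3}\tfrac\delta2$ and $q\mmnus p = 2\olog(p\inv q)$, whereas $q\mmnus_{S^3}p=\log(p\inv q)$. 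The factor $2$ in front of $\olog$ and the factor $\tfrac12$ inside $\exp$ are designed to cancel, which is what will let the sphere bound pass through without loss of constant.

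The first step is to record the elementary inequality
\begin{equation}
\norm{\olog r}\le\norm{\log r}\qquad\text{for every unit quaternion }r,
\end{equation}
which expresses that the antipodal identification $\olog r=\olog(-r)$ can only shorten the represented rotation. I would verify this directly from \eqref{eq:log:quat} and \eqref{eq:def:logsn}: writing $r=\quat wv$, one has $\norm{\log r}=\atantwo(\norm v,w)\in[0,\pi]$, while $\olog$ always returns an angle in $[0,\tfrac\pi2]$, so that in fact $\norm{\olog r}=\min\bigl(\norm{\log r},\norm{\log(-r)}\bigr)\le\norm{\log r}$, using $\norm{\log r}+\norm{\log(-r)}=\pi$. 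The degenerate branches $v=0$ and $w=0$ are checked separately and give equality or a strict decrease.

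With these two facts in hand, the bound follows by chaining. Putting $a=q\mplus\delta_1=q\mplus_{S^3}\tfrac{\delta_1}2$ and $b=q\mplus\delta_2=q\mplus_{S^3}\tfrac{\delta_2}2$, I would estimate
\begin{align}
\norm{(q\mplus\delta_1)\mmnus(q\mplus\delta_2)}
 &=2\norm{\olog(b\inv a)} \nonumber\\
 &\le 2\norm{\log(b\inv a)} \nonumber\\
 &=2\norm{a\mmnus_{S^3}b} \nonumber\\
 &\le 2\norm{\tfrac{\delta_1}2-\tfrac{\delta_2}2}
  =\norm{\delta_1-\delta_2},
\end{align}
where the middle inequality is the estimate recorded above and the final inequality is Lemma~\ref{lem:hyperspheretriangleaxiom} applied to $S^3$ with the halved perturbations $\tfrac{\delta_1}2,\tfrac{\delta_2}2$. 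Since $\delta\mapsto\tfrac\delta2$ is a global bijection of $\R^3$, this covers all $\delta_1,\delta_2\in\R^3$ as the axiom requires.

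I expect the only genuine obstacle to be the careful treatment of the antipodal map, i.e.\ the step $\norm{\olog r}\le\norm{\log r}$: one must confirm that replacing $\log$ by $\olog$ never increases the norm across \emph{all} branches of \eqref{eq:log:quat} — including $w=0$ and $v=0$ — rather than merely on the generic branch. Everything else is bookkeeping with the factor $2$ and a direct appeal to the already-established sphere inequality, so no independent spherical-trigonometry computation is needed here.
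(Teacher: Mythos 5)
Your argument is correct, but it takes a genuinely different route from the paper's. The paper proves the lemma by direct computation: it bounds $\norm{2\olog(\cdot)}$ by $2\arccos\Re\bigl(\exp\tfrac{-\delta_1}2\cdot\exp\tfrac{\delta_2}2\bigr)$, expands the quaternion product to obtain $2\arccos(\cos\alpha\cos\beta+\sin\alpha\sin\beta\cos\gamma)$ with $\alpha=\norm{\delta_1}/2$, $\beta=\norm{\delta_2}/2$, $\gamma=\angle(\delta_1,\delta_2)$, and then invokes the trigonometric Lemma~\ref{lem:triangle}. You instead reduce to Lemma~\ref{lem:hyperspheretriangleaxiom} for $S^3$ via the halving of $\delta$ and the estimate $\norm{\olog r}\le\norm{\log r}$. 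Since Lemma~\ref{lem:hyperspheretriangleaxiom} is itself proved by the spherical law of cosines plus Lemma~\ref{lem:triangle}, both arguments bottom out in the same inequality; what your version buys is that the quaternion-product computation is not repeated, and it cleanly isolates the only genuinely new ingredient of the quaternion case, namely the antipodal identification handled by $\olog$. Your verification $\norm{\olog r}=\arccos\abs{\Re r}=\min\bigl(\norm{\log r},\norm{\log(-r)}\bigr)\le\norm{\log r}$ is exactly the paper's one-line observation $\norm{\olog q}=\arccos\abs{\Re q}\le\arccos(\Re q)$.

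One point you should make explicit: Lemma~\ref{lem:hyperspheretriangleaxiom} is stated for the operators \eqref{eq:boxplus:sn} built from the Householder rotation $R_x$ of \eqref{eq:def:rx}, whereas $q\mplus\delta=q\cdot\exp\tfrac\delta2$ uses left quaternion multiplication as the rotation. The points $R_q\exp\tfrac\delta2$ and $q\cdot\exp\tfrac\delta2$ differ in general, so the identification ``$q\mplus\delta=q\mplus_{S^3}\tfrac\delta2$'' is not literally an equality between the lemma's operators and yours. The fix is one line: $\norm{y\mmnus_{S^3}x}=\arccos(x\trans y)$ is the intrinsic geodesic distance, independent of the choice of $R_x$, and both $R_q$ and left multiplication by $q$ are linear isometries of $\R^4$ carrying $\quat{1}{0}$ to $q$; hence $\bigl(q\exp\tfrac{\delta_1}2\bigr)\trans\bigl(q\exp\tfrac{\delta_2}2\bigr)=\bigl(\exp\tfrac{\delta_1}2\bigr)\trans\bigl(\exp\tfrac{\delta_2}2\bigr)=\bigl(R_q\exp\tfrac{\delta_1}2\bigr)\trans\bigl(R_q\exp\tfrac{\delta_2}2\bigr)$, so the bound of Lemma~\ref{lem:hyperspheretriangleaxiom} transfers to your configuration verbatim. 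With that remark added, your proof is complete.
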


\begin{proof}
   We apply the definitions and notice from \eqref{eq:log:quat} that $\norm{\olog{}q}=\arccos\abs{\Re{}q}\le\arccos(\Re{}q)$ exploiting that $\norm{q}=1$. Thus
   \begin{align}
\hskip.7em&\hskip-.7em \norm{(x\mplus\delta_1)\mmnus{}(x\mplus\delta_2)} \nonumber \\
       &= \norm{2\olog\left((x \exp\tfrac{\delta_1}{2})^{-1} \cdot (x \exp\tfrac{\delta_2}{2})\right) } \\
       &\le 2\arccos\Re\left(\exp\tfrac{-\delta_1}{2} \cdot \exp\tfrac{\delta_2}{2}\right) \\
       &= 2\arccos\Re\left( 
     \tquat{\cos(\norm{\delta_1}/2)}{-\sinc(\norm{\delta_1}/2)/2\;\delta_1} \cdot
\nonumber     \tquat{\cos(\norm{\delta_2}/2)}{\sinc(\norm{\delta_2}/2)/2\;\delta_2} 
        \right).
   \end{align}
   Now we apply the definition of quaternion multiplication
   $\tquat{r_1}{v_1}\cdot\tquat{r_2}{v_2} = \tquat{r_1r_2-v_1\trans
     v_2}{\bullet}$ and substitute $\alpha=\norm{\delta_1}/2$,
   $\beta=\norm{\delta_2}/2$, and $\gamma =
   \angle(\delta_1,\delta_2)$.  The term $v_1\trans v_2$ becomes
   $-\sin\alpha\sin\beta\cos\gamma$, and hence we can continue the
   above chain of equalities with
%
%
\begin{align}
       &= 2\arccos\left(\cos\alpha\cos\beta + \sin\alpha\sin\beta\cos\gamma\right) \\
\hskip-.2em \stckrel{\text{Lemma \ref{lem:triangle}}}{}{\le} 2\sqrt{\alpha^2+\beta^2-2\alpha\beta\cos\gamma} \\
       &= \norm{ \delta_1-\delta_2 }. \qedhere
\end{align}
\end{proof}

\begin{lem}
    The distance on a sphere is less or equal to the Euclidean distance
    in the tangential plane (Fig.\@~\ref{fig:triangle}). Formally, for all $\alpha,\beta\ge0,\gamma\in\R$
    \begin{align}
        \begin{split}
        \arccos\bigl(\cos\alpha \cos\beta + \sin\alpha\sin\beta\cos\gamma\bigr) \\ 
            \quad\le\sqrt{\alpha^2+\beta^2-2\alpha\beta\cos\gamma}.
        \end{split}
        \label{eq:triangle}
    \end{align}
    \label{lem:triangle}
\end{lem}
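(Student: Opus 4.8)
The plan is to treat both sides of \eqref{eq:triangle} as functions of the single real variable $c:=\cos\gamma\in[-1,1]$, since $\gamma$ enters only through $\cos\gamma$. Writing $g(c):=\bigl[\arccos(\cos\alpha\cos\beta+\sin\alpha\sin\beta\,c)\bigr]^2$ for the square of the left-hand side and $h(c):=\alpha^2+\beta^2-2\alpha\beta c$ for the square of the right-hand side, and using that both sides are nonnegative, the claim reduces to $g(c)\le h(c)$ on $[-1,1]$. The key structural observation is that $h$ is \emph{affine} in $c$, while I will show that $g$ is \emph{convex} in $c$; the inequality then follows from a comparison at the two endpoints $c=\pm1$.

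First I would dispose of the endpoints. At $c=1$ the argument of $\arccos$ collapses to $\cos(\alpha-\beta)$ and at $c=-1$ to $\cos(\alpha+\beta)$, so $g(1)=\bigl[\arccos\cos(\alpha-\beta)\bigr]^2$ and $g(-1)=\bigl[\arccos\cos(\alpha+\beta)\bigr]^2$, whereas $h(1)=(\alpha-\beta)^2$ and $h(-1)=(\alpha+\beta)^2$. Since $\arccos$ takes values in $[0,\pi]$ one has the elementary bound $\arccos(\cos x)\le|x|$ (equality for $|x|\le\pi$, strict otherwise). This yields $g(1)\le h(1)$ and $g(-1)\le h(-1)$ for \emph{all} $\alpha,\beta\ge0$, with no restriction on their size.

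The heart of the argument is the convexity of $g$. The inner function $u(c):=\cos\alpha\cos\beta+\sin\alpha\sin\beta\,c$ is affine in $c$ and stays in $[-1,1]$, as it runs between $\cos(\alpha+\beta)$ and $\cos(\alpha-\beta)$; hence it suffices to show that $\Phi(u):=(\arccos u)^2$ is convex on $[-1,1]$, because a convex function composed with an affine map is convex. Substituting $u=\cos\theta$ with $\theta=\arccos u\in[0,\pi]$, a short computation (using $d\theta/du=-1/\sin\theta$) gives $\Phi''(u)=\dfrac{2(\sin\theta-\theta\cos\theta)}{\sin^3\theta}$. The denominator is positive on $(0,\pi)$, and the numerator $\psi(\theta):=\sin\theta-\theta\cos\theta$ satisfies $\psi(0)=0$ and $\psi'(\theta)=\theta\sin\theta\ge0$ on $[0,\pi]$, so $\psi\ge0$; therefore $\Phi''\ge0$, and $\Phi$ (hence $g$) is convex.

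Finally I would assemble the pieces. Being convex on $[-1,1]$, $g$ lies below its chord $\ell$ through $(-1,g(-1))$ and $(1,g(1))$. Both $\ell$ and $h$ are affine, and the endpoint bounds give $\ell(\pm1)=g(\pm1)\le h(\pm1)$, so $h-\ell$ is affine and nonnegative at both endpoints, hence nonnegative on all of $[-1,1]$. Chaining $g(c)\le\ell(c)\le h(c)$ establishes \eqref{eq:triangle}. The main obstacle is the convexity step: obtaining a clean sign-definite expression for $\Phi''$ and reducing its positivity to the auxiliary inequality $\sin\theta\ge\theta\cos\theta$ on $[0,\pi]$. The endpoints $u=\pm1$ (where $\theta\in\{0,\pi\}$) would be handled by a limit, checking that $\Phi''$ stays nonnegative there so that convexity holds on the closed interval; this is routine once the interior sign is settled.
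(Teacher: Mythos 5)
Your proof is correct, and it shares the paper's overall strategy -- reduce to a one-variable problem in $\lambda=\cos\gamma$, observe that one side is affine and the other convex in $\lambda$, and compare at the endpoints $\lambda=\pm1$ -- but it executes this skeleton via a genuinely different transformation and a different key convexity lemma. The paper applies $\cos$ to both sides (which forces a preliminary case split: if the right-hand side of \eqref{eq:triangle} exceeds $\pi$ the claim is trivial, and only then is $\cos$ order-reversing on the relevant range), obtains \emph{equality} at $\lambda=\pm1$ from the cosine addition formulas, and proves convexity of $\mu\mapsto\cos\sqrt{\alpha^2+\beta^2+2\alpha\beta\mu}$ by showing its derivative $-\alpha\beta\sinc\sqrt{\cdots}$ is increasing. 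You instead square both sides, which removes the case split entirely and works uniformly for all $\alpha,\beta\ge0$; the price is that the endpoint comparison is only an inequality, which you correctly supply via $\arccos(\cos x)\le|x|$, and the convexity burden moves to $\Phi(u)=(\arccos u)^2$, where your computation $\Phi''(u)=2(\sin\theta-\theta\cos\theta)/\sin^3\theta\ge0$ (with $\psi(\theta)=\sin\theta-\theta\cos\theta$, $\psi(0)=0$, $\psi'=\theta\sin\theta\ge0$) is clean and complete. Both routes are sound; yours is arguably tidier in that it avoids the threshold case and yields the slightly stronger three-term chain $g\le\ell\le h$, while the paper's version gets the endpoint step for free as an identity.
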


\begin{proof}
    If the right-hand side exceeds $\pi$, the inequality is trivial. Otherwise we
    substitute $\lambda:=\cos\gamma$ and take the cosine:
\begin{align}
    \begin{split}
     \cos\alpha \cos\beta + \sin\alpha\sin\beta\lambda \\
    \quad \ge\cos\sqrt{\alpha^2+\beta^2-2\alpha\beta\lambda}.
    \end{split}
    \label{eq:triangle:cos}
\end{align}
The proof idea is that the left-hand side of \eqref{eq:triangle:cos}
is linear in $\lambda$, the right-hand side is convex in $\lambda$, and
both are equal for $\lambda=\pm1$.  Formally, from the
cosine addition formula we get
    \begin{align}
        \label{eq:cosaddition:a}
        \cos\alpha\cos\beta - \sin\alpha\sin\beta &= \cos(\alpha+\beta) \\
         &\hspace{-1cm}= \cos \sqrt{\alpha^2+\beta^2+2\alpha\beta}, 
          \\
        \label{eq:cosaddition:b} 
        \cos\alpha\cos\beta + \sin\alpha\sin\beta &= \cos(\alpha-\beta) \\
         &\hspace{-1cm}= \cos \sqrt{\alpha^2+\beta^2-2\alpha\beta}.
    \end{align}
    Taking a $\frac{1-\lambda}{2} : \frac{1+\lambda}{2}$ convex
    combination of \eqref{eq:cosaddition:a} and
    \eqref{eq:cosaddition:b} we get the left-hand
    side of \eqref{eq:triangle:cos}:
    \begin{align}
        &\cos\alpha \cos\beta + \sin\alpha\sin\beta\lambda \\
\nonumber        
        &= \cos\alpha\cos\beta + \sin\alpha\sin\beta\left(\tfrac{1-\lambda}{2}(-1) + \tfrac{1+\lambda}{2}(+1)\right)\\        
\nonumber  \begin{split} 
               &=\phantom{+}\tfrac{1-\lambda}{2} \cos \sqrt{\alpha^2+\beta^2-2\alpha\beta} \\
               &\phantom{=}+\tfrac{1+\lambda}{2} \cos \sqrt{\alpha^2+\beta^2+2\alpha\beta}
           \end{split} \\
\nonumber        &\ge \cos \sqrt{\alpha^2+\beta^2+2\alpha\beta\left(\tfrac{1-\lambda}{2}(-1) + \tfrac{1+\lambda}{2}(+1)\right)} \\
        &= \cos \sqrt{\alpha^2+\beta^2+2\alpha\beta\lambda}.
    \end{align}
    The inequality comes from the convexity of the right-hand side of
    \eqref{eq:triangle:cos} in $\lambda$. We prove this by calculating its derivative
    \begin{gather} 
\nonumber             -\sin\sqrt{\alpha^2+\beta^2+2\alpha\beta\lambda} \;\frac{1}{2\sqrt{\alpha^2+\beta^2+2\alpha\beta\lambda}} \; 2\alpha\beta \\
        = -\sinc\sqrt{\alpha^2+\beta^2+2\alpha\beta\lambda} \; \alpha\beta
         \label{eq:triangsrhsder}
    \end{gather}
     and observing that \eqref{eq:triangsrhsder} increases monotonically until the square root exceeds $\pi$.
\end{proof}

\begin{lem}
    The following function $\phi$ is an \mplus-isomorphism between $S^1$ and $SO(2)$:
   \begin{align}
       \phi: S^1 \rightarrow \SO2, \quad 
       \phi\matrxs{x\\y} = \matrxs{x & -y \\ y & x}.
   \end{align}
   \label{lem:isomorphicsonesotwo}
\end{lem}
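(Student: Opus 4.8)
The plan is to verify the three requirements in the definition of a $\mplus$-isomorphism given earlier in this appendix: that $\phi$ is well-defined as a map $S^1\to\SO2$, that it is a diffeomorphism, and that it is homomorphic with respect to both $\mplus$ and $\mmnus$. Well-definedness and the diffeomorphism property are immediate. For $\matrxs{x\\y}\in S^1$ the two columns of $\phi\matrxs{x\\y}=\matrxs{x&-y\\y&x}$ are orthonormal and the determinant is $x^2+y^2=1$, so the image lies in $\SO2$. The entries of $\phi$ are polynomial, hence $\phi$ is smooth, and its inverse simply reads off the first column of a rotation matrix, which is again smooth; thus $\phi$ is a smooth bijection with smooth inverse.

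The key observation is that $\phi$ is exactly the classical faithful matrix representation of the unit complex numbers by planar rotations, and in particular is multiplicative: $\phi(a)\,\phi(b)=\phi(a\cdot b)$, where $a\cdot b$ denotes complex multiplication on $S^1\subset\C$. Moreover, as observed earlier in this appendix, the transformation $R_{\matrxs{x\\y}}$ used to define $\mplus$ on $S^1$ is precisely left complex multiplication by $\matrxs{x\\y}$, i.e.\ $R_x=\phi(x)$ as a matrix. Finally, $\phi$ intertwines the two exponentials, $\phi(\exp_{S^1}\delta)=\exp_{\SO2}\delta$, since $\phi\matrxs{\cos\delta\\\sin\delta}=\matrxs{\cos\delta&-\sin\delta\\\sin\delta&\cos\delta}$, and the two logarithms agree under $\phi$, since $\log_{S^1}\matrxs{c_1\\c_2}=\atantwo(c_2,c_1)=\log_{\SO2}\phi\matrxs{c_1\\c_2}$.

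With these facts in hand, the $\mplus$-homomorphism is a short chain: $\phi(x\mplus_{S^1}\delta)=\phi(R_x\exp_{S^1}\delta)=\phi(x)\,\phi(\exp_{S^1}\delta)=\phi(x)\exp_{\SO2}\delta=\phi(x)\mplus_{\SO2}\delta$, using $R_x=\phi(x)$, multiplicativity, and the matching of exponentials. For $\mmnus$ I would compute $x\mmnus_{S^1}y=\log_{S^1}(R_y\trans x)=\log_{S^1}(\phi(y)\inv x)$, using that $\phi(y)\trans=\phi(y)\inv$ in $\SO2$, and note that $\phi(y)\inv x$ corresponds to the complex product $\bar y\cdot x$ while $\phi(y)\inv\phi(x)=\phi(\bar y\cdot x)$ by multiplicativity; the agreement of logarithms then yields $\log_{S^1}(\bar y\cdot x)=\log_{\SO2}(\phi(y)\inv\phi(x))=\phi(x)\mmnus_{\SO2}\phi(y)$, which is the required $\mmnus$-homomorphism.

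The computations are all routine; the only point needing care is bookkeeping of conventions — on which argument $R$ acts, that $\phi(y)\trans=\phi(y)\inv$ in $\SO2$, and that complex conjugation corresponds to matrix transposition under $\phi$. Once the identity $R_x=\phi(x)$ and the multiplicativity of $\phi$ are recognized, the whole statement collapses to the homomorphism property of the group isomorphism $S^1\cong\SO2$ together with the matching of $\exp$ and $\log$. Since $\phi$ is then an isomorphism, Axiom~\eqref{ax:triangle} for $\SO2$ (and the intuitive angular meaning of its metric) transfers automatically from $S^1$, so no separate verification of that axiom is needed here.
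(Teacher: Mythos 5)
Your proposal is correct and follows essentially the same route as the paper: the paper likewise establishes bijectivity from the form of $\SO2$ matrices and verifies $\phi(x\mplus_{S^1}\delta)=\phi(x)\mplus_{\SO2}\delta$ by the explicit matrix computation that your appeal to multiplicativity of $\phi$ and the identity $R_x=\phi(x)$ packages abstractly. The only differences are cosmetic: you additionally spell out smoothness and the $\mmnus$-compatibility, which the paper leaves implicit.
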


\begin{proof}
The map is bijective, because all matrices in $\SO2$ are of the form \eqref{eq:boxplus:sotwo}.
It also commutes with \mplus, since
\begin{align}
    \phi(\matrxs{x\\y}\mplus_{S^1}\delta) 
     &= \phi(R_{\matrxs{x\\y}}\trans\exp\delta) \\
     &= \phi\left(\matrxs{x&-y\\y&x} \matrxs{\cos\delta\\\sin\delta} \right) \\
     &= \phi\left(\matrxs{x\cos\delta-y\sin\delta\\y\cos\delta+x\sin\delta}\right) \\
     &= \matrxs{x\cos\delta-y\sin\delta & -y\cos\delta-x\sin\delta \\ y\cos\delta+x\sin\delta & x\cos\delta-y\sin\delta} \\
     &= \matrxs{x&-y\\y&x} \matrxs{\cos\delta & -\sin\delta \\ \sin\delta & \cos\delta} \\
     &= \matrxs{x&-y\\y&x} \exp{\delta} \\
     &= \phi\left(\matrxs{x\\y}\right)\mplus_{SO^2}\delta. \qedhere
\end{align}
\end{proof}

\begin{modified}
\section{Comparison to SPMap}
\label{sec:SPmap}

In an SPMap~\cite{Castellanos99} (originally 2D but extended to 3D)
every geometric entity is represented by a reference pose (\SE3) in an
arbitrary, potentially overparametrized representation, and a
perturbation vector parametrizing the entity relative to its reference
pose in minimal parametrization. The estimation algorithm operates
solely on the perturbation vector. This corresponds to
$s\mplus\delta$, with $s$ being the reference pose and $\delta$ the
perturbation vector. Actually, this concept and the idea that in most
algorithms $\mplus$ can simply replace $+$ motivated the
axiomatization of $\mplus$-systems we propose. Our contribution is to
give this idea, which has been around for a while, a thorough
mathematical framework more general than geometric entities.

If a geometric entity is ``less than a pose'', e.g.\@ a point, SPMap still uses a
reference pose but the redundant DOFs, e.g.\@ rotation, are
removed from the perturbation vector. In our 
axiomatization the pose would simply be an overparametrization of a point.
Using the notation of \cite{Castellanos99}
\begin{align}
    s \mplus \delta = s \oplus (B\trans \delta),
\end{align}
where $B$ is the so-called binding matrix that maps entries of $\delta$ to
the DOF of a pose, and $\oplus$ is the concatenation of poses.

Analogously, the SPMap represents, e.g. a line as the pose's $x$-axis
with $x$-rotation and translation being redundant DOFs removed from the
perturbation vector. Here lies a theoretical difference. Consider two
such poses differing by an $x$-rotation. They represent the same line
but differ in the effect of the perturbation vector, as $y$- and $z$-axes
point into different directions. For us, the $\mplus$-system $\S$ would
be a space of equivalence classes of poses. However, $\mplus$ maps
from $\S\times\R^n$ to $\S$, so $s\mplus\delta$ must formally be the
equivalent for equivalent poses $s$. The SPMap representation has the
advantage that it is continuous both in the pose and the perturbation
vector, which is not possible with our axiomatization due to the hairy
ball theorem as discussed in Sec.~\ref{sec:ex:S2}.

Overall, our contribution is the axiomatized and more general view,
not limited to quotients of $\SE3$ as with the SPMap. We currently investigate
axiomatization of SPMap's idea to allow different representatives of
the same equivalence class to define different $\mplus$-co-ordinate systems.
We avoided this here, because it adds another level of conceptual complexity.
\end{modified}


\bibliographystyle{model1b-num-names}
\bibliography{literature}


\end{document}